\documentclass{article} 
\usepackage{iclr2024_conference,times}


\usepackage{amsmath,amsfonts,bm}









\def\eqref#1{equation~\ref{#1}}









\def\1{\bm{1}}










\DeclareMathAlphabet{\mathsfit}{\encodingdefault}{\sfdefault}{m}{sl}
\SetMathAlphabet{\mathsfit}{bold}{\encodingdefault}{\sfdefault}{bx}{n}













\usepackage{hyperref}
\usepackage{url}
\usepackage{xspace}
\usepackage{subfigure}
\usepackage{graphicx}
\usepackage{wrapfig}
\usepackage{mathrsfs}
\usepackage{subeqnarray}
\usepackage{multirow}
\usepackage{amsmath}
\usepackage{amssymb}
\usepackage{mathtools}
\usepackage{amsthm}
\usepackage{cases}
\usepackage{booktabs}
\usepackage{todonotes}
\usepackage{xcolor}
\usepackage{algorithm}
\usepackage{algorithmicx}
\usepackage{enumitem}
\usepackage[noend]{algpseudocode}

\newtheorem{theorem}{Theorem}
\newtheorem{assumption}{Assumption}
\newtheorem{lemma}{Lemma}
\newtheorem{corollary}{Corollary}

\newcommand{\algorithmname}{Algorithm}

\newcommand{\assumptionname}{Assumption.}

\newcommand{\theoremname}{Thm.}
\newcommand{\lemmaname}{Lemma.}

\newif{\ifhidecomments}
\hidecommentsfalse
\ifhidecomments
\newcommand{\wjdd}[1]{\todo[linecolor=cyan,backgroundcolor=cyan!25,bordercolor=cyan,size=\scriptsize]{}
\newcommand{\wjd}[1]{{\color{cyan}{}
\else
\newcommand{\wjdd}[1]{\todo[linecolor=cyan,backgroundcolor=cyan!25,bordercolor=cyan,size=\scriptsize]{(Jindong): #1}}
\newcommand{\wjd}[1]{{\color{cyan}{[(Jindong): #1]}}}
\newcommand{\dam}[1]{{\color{green}{[(Damien): #1]}}}
\newcommand{\hw}[1]{{\color{teal}{#1}}}
\newcommand{\hwc}[1]{{\color{teal}{\it [haohan: #1]}}}
\newcommand{\lw}[1]{{\color{red}{[(lw): #1]}}}
\fi

\newcommand{\method}{\textsc{ZooPFL}\xspace}

\title{\method: Exploring Black-box Foundation Models for Personalized Federated Learning}


\iclrfinalcopy

\author{Wang Lu$^1$, Hao Yu$^1$, Jindong Wang$^2$, Damien Teney$^3$, Haohan Wang$^4$, Yiqiang Chen$^5$\\ \textbf{Qiang Yang$^6$, Xing Xie$^2$, Xiangyang Ji$^1$} \quad\quad\quad\quad\quad\quad luw12@tsinghua.org.cn 
\\
$^1$Tsinghua University \quad
$^2$Microsoft Research \quad
$^3$Idiap Research Institute \quad
$^4$UIUC\\
$^5$Chinese Academy of Sciences \quad
$^6$Hong Kong University of Science and Technology\\
}

%

\newcommand{\fix}{\marginpar{FIX}}
\newcommand{\new}{\marginpar{NEW}}

\begin{document}

\maketitle

\vspace{-0.8cm}

\begin{abstract}
When personalized federated learning (FL) meets large foundation models, new challenges arise from various limitations in resources.
In addition to typical limitations such as data, computation, and communication costs, access to the models is also often limited.
This paper endeavors to solve both the challenges of \textit{limited resources} and \textit{personalization}. i.e., distribution shifts between clients.
To do so, we propose a method named \textbf{\method} that uses
\textbf{Z}eroth-\textbf{O}rder \textbf{O}ptimization
for \textbf{P}ersonalized \textbf{F}ederated \textbf{L}earning.
\method avoids direct interference with the foundation models and instead learns to adapt its inputs through zeroth-order optimization.
In addition, we employ simple yet effective linear projections to remap its predictions for personalization.
To reduce the computation costs and enhance personalization, we propose input surgery to incorporate an auto-encoder with low-dimensional and client-specific embeddings.
We provide theoretical support for \method to analyze its convergence.
Extensive empirical experiments on computer vision and natural language processing tasks using popular foundation models demonstrate its effectiveness for FL on black-box foundation models.
\end{abstract}

\vspace{-.2in}

\section{Introduction}



In recent years, the growing emphasis on data privacy and security has led to the emergence of federated learning (FL)~\citep{warnat2021swarm,chen2021bridging,chen2022importance,castiglia2023less,rodriguez2023survey,kuang2023federatedscope}. 
FL enables collaborative learning while safeguarding data privacy and security across distributed clients~\citep{yang2019federated}. 
However, FL faces two key challenges: \textit{limited resources} and \textit{distribution shifts} (\figureautorefname~\ref{fig-main-motiv} (a, b)).

The rise of large foundation models~\citep{bommasani2021opportunities} has amplified these challenges. 
The computational demands and communication costs associated with such models hinder the deployment of existing FL approaches (\figureautorefname~\ref{fig-main-motiv}a). \footnote{Communication costs can be estimated as $C = p \times K \times T$, where $p, T$, $K$ respectively denote the number of parameters, communication rounds, and clients.
With GPT-3 for example~\citep{brown2020language}, $K=175$ billion parameters, making the communication of entire models impractical.}
Most of them require fine-tuning the models on every client.\footnote{For example, training GPT-2-small \citep{radford2019language} requires at least two A100 GPUs for 16 hours, a resource unavailable to many.}
Moreover, foundation models, often proprietary~\citep{van2023chatgpt,sun2022black}, grant only \emph{black-box} access, making FL resource-efficient applications a pressing research area.


Recent efforts in FL \citep{xu2023federated,zhao2023fedprompt,chen2023prompt,li2023visual} have attempted to reduce the number of optimized parameters to minimize computational and communication costs.
As illustrated in \figureautorefname~\ref{fig-main-motiv} (c), existing methods use prompts~\citep{liu2023pre} or adapters~\citep{cai2022fedadapter} to fine-tune foundation models~\citep{xu2023federated}.
Other approaches~\citep{yurochkin2019bayesian,liu2022deep} focus on limiting the number of communication rounds.
All of them however depend on white-box access to the foundation models.
On the other hand, distribution shifts are an additional challenge for FL since the data across clients is not necessarily i.i.d.~\citep{li2020federated,vemulapalli2023federated} (\figureautorefname~\ref{fig-main-motiv}b).
Directly aggregating information e.g., with FedAVG~\citep{mcmahan2017communication} often results in slow convergence and poor performance in each client~\citep{gao2022feddc}.
Some methods have been designed to address the personalization of large foundation models~\citep{li2020fedbn,setayesh2022perfedmask,xu2022personalized}. However, they cannot deal with black-box models. The method proposed in this paper is designed to cope with label shift, i.e. variations in the distribution of labels among clients (\figureautorefname~\ref{fig-main-motiv}b).

\begin{figure}[!t]
    \centering
    \subfigure[Limited resources]{
        \includegraphics[width=0.195\textwidth]{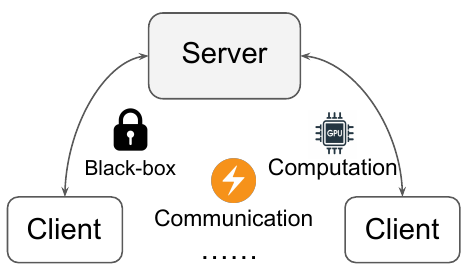}
        \label{fig-main-mot1}
    }
    \subfigure[Distribution shifts]{
        \includegraphics[width=0.195\textwidth]{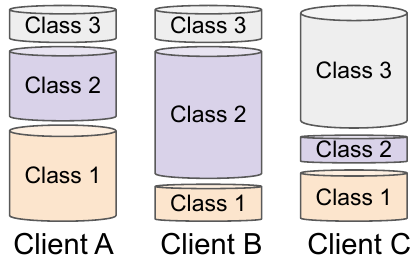}
        \label{fig-main-mot2}
    }
    \subfigure[Existing methods]{
        \includegraphics[width=0.195\textwidth]{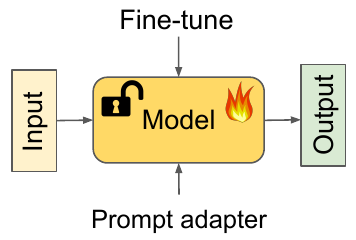}
        \label{fig-main-mot3}
    }
    \subfigure[Our proposed \method]{
        \includegraphics[width=0.33\textwidth]{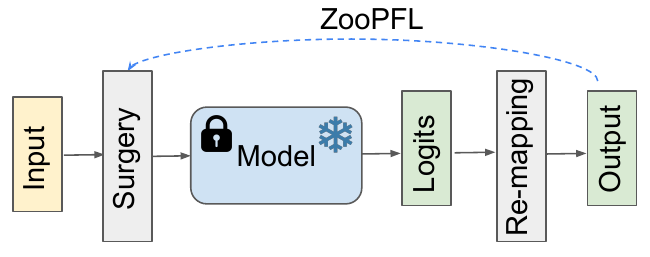}
        \label{fig-main-mot4}
    }
    \vspace{-.1in}
    \caption{\method addresses federated learning with foundation models while coping with limited resources in communication, computation, and model accessibility~(a) and being robust to distribution shifts~(b).
    Most existing methods rely on white-box model access~(c).
    In contrast, \method is applicable to black-box models by using input surgery and semantic output re-mapping~(d).}
    \label{fig-main-motiv}
\end{figure}

In this paper, we propose \method to cope with limited resources and personalization for federated learning and black-box foundation models.
To cope with black-box models, \method proposes two strategies, \textit{input surgery} and \textit{semantic re-mapping}, and learning through zeroth-order optimization (ZOO).
To reduce the computational costs of ZOO and share information among clients, we employ an auto-encoder with low embedding dimensions to represent transformations.
For better personalization, the client-specific embeddings and semantic re-mapping are preserved by each client.
\figureautorefname~\ref{fig-main-motiv} (d) illustrates that our proposed method learns transformations on the inputs and mappings of the outputs through \textit{zeroth-order optimization}~\citep{liu2020primer,wang2018stochastic,lian2016comprehensive}.
This bears similarities with model reprogramming ~\citep{chen2022model} and 
Reprogrammable-FL~\citep{arif2023reprogrammable}, but the latter is unsuitable for black-box models and personalization.
To the best of our knowledge, our method is the first to achieve federated learning with large black-box models, a challenging setting that is becoming increasingly relevant to the real world.

In summary, our contributions are four-fold. 
\begin{enumerate}[leftmargin=2em]
\setlength\itemsep{0em}
\item \textbf{Scenario Exploration}: We delve into the challenges posed by fully black-box foundation models in FL. Our contribution lies in understanding and navigating this complex scenario.

\item \textbf{\method Framework}: We introduce \method, a comprehensive solution tailored for FL in resource-constrained and personalized settings. This framework encompasses \textbf{Input Surgery} and \textbf{Semantic Re-mapping}.
\method employs strategic input manipulations, leveraging dedicated embeddings, and employing zeroth-order optimization while it project outputs for specific task and personalization.

\item \textbf{Theoretical Support}: We provide formal theoretical support, enhancing \method's credibility and offering insights into its workings.

\item  \textbf{Empirical Validation}: \method is rigorously evaluated through computer vision and natural language processing experiments, demonstrating the effectiveness and versatility of \method.

\end{enumerate}

\section{Related Work}

\textbf{Federated learning} makes it possible to perform distributed multi-party computing without comprising privacy \citep{zhang2021survey,voigt2017eu,mcmahan2017communication,yang2019federated,tariq2023trustworthy,wang2023federated,so2023securing}.
When meeting non-iid data, common FL methods, e.g. FedAVG~\citep{mcmahan2017communication} can suffer from low converge speed and terrible \textbf{personalization} performance~\citep{sattler2019robust}.
Specific methods, e.g. FedProx~\citep{li2020federated} and FedBN~\citep{li2020fedbn}, are proposed for personalization while additional methods, e.g. \citep{chen2021bridging, gupta2022fl, qu2022generalized}, also consideration generalization.

The above methods can fail when entering the era of \textbf{large foundation models}~\citep{bommasani2021opportunities, xing2023fedlogic, zhuang2023foundation}, due to novel issues, e.g. limited resources that make operations on the whole network impossible~\citep{chen2022fedobd, chen2023federated, ding2023parameter}.
Recent work (e.g. FedPrompt~\citep{zhao2023fedprompt},PromptFL~\citep{guo2023promptfl}, pFedPG~\citep{yang2023efficient}, FwdLLM~\citep{xu2023federated}) were proposed to tune part of the whole network for efficiency. 
However, they all require access to foundation models, which can be impossible in reality.

In addition to data privacy, \textbf{model privacy} also raises attention~\citep{mo2020darknetz}, which means foundation models can be black-box models~\citep{guidotti2018survey,ljung2001black}.
Little work paid attention to finetuning or optimizing in this field, but most related work focused on attacks~\citep{yang2023model,yang2023practical}.
One related work is FedZO~\citep{fang2022communication} which utilized \textbf{zero-order optimization}~\citep{ghadimi2013stochastic}, but it did not consider utilizing large foundation models.

\textbf{Model reprogramming} (MR)~\citep{tsai2020transfer,xu2023towards,chen2022model} provides a similar solution to \method and it also focuses on coping with inputs and outputs.
Reprogrammable-FL~\citep{arif2023reprogrammable} adapted MR to the setting of differentially private federated learning. 
But it preserved local input transformations and shared output transformation layers, which were totally in contrast to ours. 
\tableautorefname~\ref{tab-main-methodcomp} provides a comprehensive comparison between existing methods and ours. 
For more detailed related work, please refer to \sectionautorefname~\ref{sec-app-rela}.


\section{Methodology}
\label{sec-method}

In this section, we articulate our proposed \method.
We begin with problem formulation in Sec.~\ref{sec-method-problem}.
Then, we show the motivation of designing \method in Sec.~\ref{sec-method-intuition}.
Next, Sec.~\ref{sec-method-method} introduces the details of our approach.
Finally, Sec.~\ref{sec-method-theory} provides theoretical analysis.

\begin{table}[!t]
\centering
\caption{Comparisons of different methods.}
\label{tab-main-methodcomp}
\resizebox{\textwidth}{!}{
\begin{tabular}{ccccccc}
\toprule
Type                & Method            & Model scale & Model accessibility & Communication & Computation & Personalization \\ \midrule
\multirow{2}{*}{Base} &
  FedAVG &
  \multirow{2}{*}{Limited} &
  \multirow{2}{*}{White-box} &
  \multirow{2}{*}{Inefficient} &
  \multirow{2}{*}{High} &
  Unsupported \\  
                    & FedBN             &             &                     &               &             & Supported       \\ \hline
\multirow{2}{*}{Large model for FL} &
  FedPrompt, FedCLIP &
  \multirow{2}{*}{Unlimited} &
  \multirow{2}{*}{White-box} &
  \multirow{2}{*}{Efficient} &
  Low &
  \multirow{2}{*}{Supported} \\ 
                    & PromptFL, pFedPG  &             &                     &               & High        &                 \\ \midrule
\multirow{2}{*}{Zero-order for FL} &
  FwdLLM, BAFFLE &
  Unlimited &
  \multirow{2}{*}{White-box} &
  \multirow{2}{*}{Efficient} &
  \multirow{2}{*}{Low} &
  \multirow{2}{*}{Supported} \\ 
                    & FedZO             & Limited     &                     &               &             &                 \\ \midrule
Model reprogramming & Reprogrammable-FL & Unlimited   & White-box           & Efficient     & High        & Supported       \\ \midrule
Black-box foundation FL          & ZooPFL (Ours)     & Unlimited   & Black-box           & Efficient     & Low         & Supported       \\ \bottomrule
\end{tabular}
}
\end{table}

\subsection{Problem Formulation}
\label{sec-method-problem}
We assume there are $n$ different clients $\{C_1, \cdots, C_n\}$ in personalized federated learning scenarios.
Each client $C_i$ has its own data $\mathcal{D}_i = \{ \mathbf{x}_{i,j},y_{i,j} \}_{j=1}^{n_i}$ where $n_i$ means the number of data in the $i$th client.
Data in different clients have different distributions, i.e. $P(\mathcal{D}_i) \neq P(\mathcal{D}_j)$.
In the personalized FL setting, there exists the same black-box large foundation model in each client, $g$, which we know nothing inside and can only obtain logit outputs with fixed-size inputs.
Our goal is to achieve personalized (i.e., satisfying) performance with black-box foundation models on each client by learning a significant transformation $s_i$ on inputs and a re-mapping $r_i$ on outputs without accessing $g$ for each client $\mathcal{D}_i$.
Specifically, denote $\ell$ a loss function, the learning objective is:
\begin{equation}
    \min_{s_i, r_i} \frac{1}{n} \sum_{i=1}^n \frac{1}{n_{i}} \sum_{j=1}^{n_i} \ell(r_i(g(s_i(\mathbf{x}_{i,j}))), y_{i,j}).
    \label{eqa:goal1}
\end{equation}

\subsection{Intuition}
\label{sec-method-intuition}

\textbf{Input designs affect the performance of foundation models.}
Different representations with the same inputs can induce foundation models to make completely different predictions, which illustrates that adding interference or reconstructing inputs can be utilized for adaptation.
However, most methods that add interference are performed at the sample level~\citep{white2023prompt, cao2023study, liu2022design,arif2023reprogrammable,zhou2022learning,gal2022image} , i.e. special design for each sample, that are unsuitable to exchange information among clients and cannot cope with unseen samples.
Therefore, it is necessary to reconstruct samples via an auto-encoder to adapt input with unchanged dimensions for foundation models. 
The exchange of auto-encoder parameters can facilitate the sharing of input transformation information across different clients.

\textbf{Semantic re-mapping generates more semantically meaningful logits.}
Although large foundation models have been trained on a huge amount of samples~\citep{radford2021learning}, there still exists some classes or situations that foundation models cannot cover~\citep{wang2023exploring}. 
However, these new scenarios or classes can be made of existing fundamental elements or similar to some existing categories, which means foundation models can be able to extract remarkable features.\footnote{Some popular language models such as BERT~\citep{devlin2018bert} and GPT-2 \citep{radford2019language} in Huggingface utilize a random projection between extracted features and logits.} 
Considering layers between remarkable features and final logits as random projecting, re-mapping outputs with a simple linear layer can achieve acceptable performance similar to \citet{huang2015trends}.

\textbf{Design logic.}
Since access to foundation models is restricted, we have to rely on zeroth-order optimization methods to train auto-encoders, which leads that directly operating on the outputs of auto-encoders with high dimensions can exhaust unaffordable computational costs.
To reduce the costs, we fix decoders and compute differences on embedding with low dimensions.
For better personalization, we preserve semantic re-mapping in clients.
Specifically, we preserve a client-specific embedding, i.e., a simple one-dimensional vector, for each client, which can be concatenated with embedding to generate adapted inputs with personalized characteristics.

\begin{figure}[!t]
    \centering
    \subfigure[Communication of server-client]{    
        \includegraphics[height=0.45\textwidth]{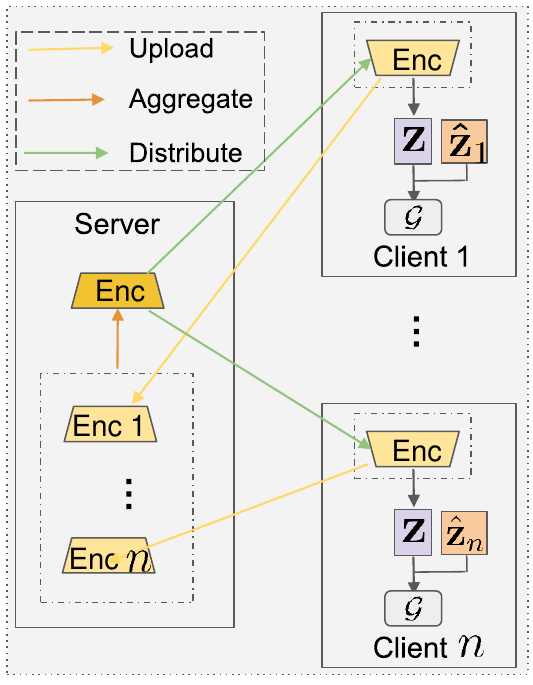}
        \label{fig-overview}
    }
    \subfigure[Training process in client $i$]{
        \includegraphics[height=0.45\textwidth]{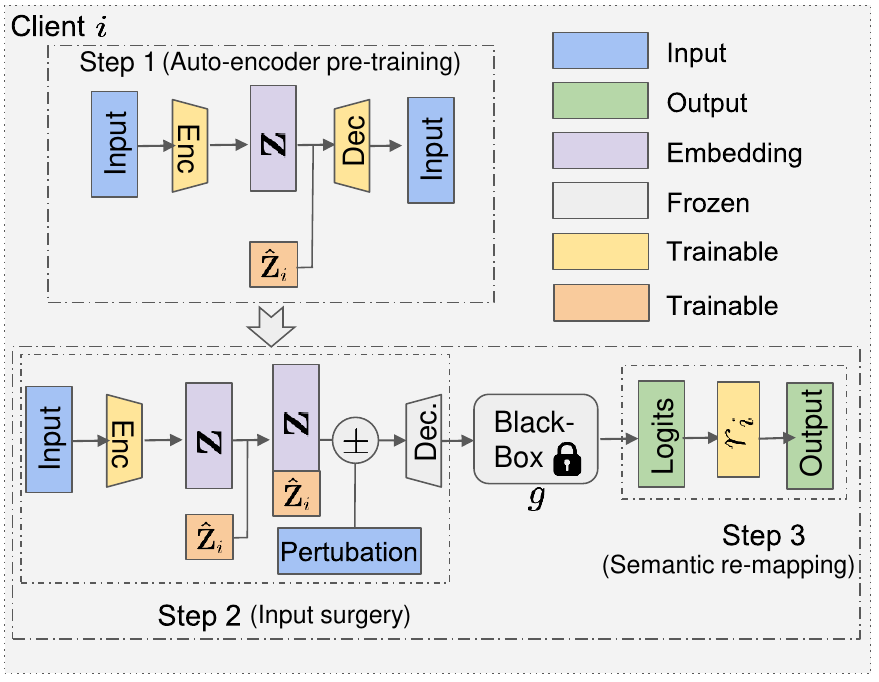}
        \label{fig-training}
    }
    \vspace{-.1in}
    \caption{The framework of \method. Please note that communications occur during step 2.} 
    \label{fig-main-framework}
    \vspace{-.1in}
\end{figure}

\subsection{\method}
\label{sec-method-method}

In this paper, we propose \method to learn input surgery and semantic re-mapping for black-box large foundation models in federated learning.
\method aims to adapt inputs to models and project outputs to meaningful semantic space.
\method mainly consists of three steps, namely, auto-encoder pre-training, input surgery, and semantic re-mapping.\footnote{Note that the pre-training here is different from the pre-training of large foundation models such as self-supervised pre-training. This step is much more efficient than pre-training a large foundation model since we only train an auto-encoder with few layers.}
\figureautorefname~\ref{fig-main-framework} shows the pipeline of our approach, where \figurename~\ref{fig-overview} describes the communications between clients and the server and \figurename~\ref{fig-training} provides details on how to perform training on a local client.
The algorithm flow is in Appendix~\ref{sec-append-algo}.

The training process on a client is described as follows, where steps 2$\sim$3 are iterative.

\begin{enumerate}[leftmargin=2em]
\setlength\itemsep{0em}
\item Auto-encoder pre-training: this step directly utilizes inputs to pre-train the auto-encoder which then serves as the input surgery function. 
    
\item Input surgery: this step only updates the encoder of auto-encoder and client-specific embeddings to transform the input consistent with the foundation model. 
    
\item Semantic re-mapping: this step endeavors to re-map logits into meaningful semantic spaces with a simple linear projection. 

\end{enumerate}

\paragraph{Auto-encoder Pre-training.}

Before input surgery and semantic re-mapping that are assisted by labels, \method firstly utilizes inputs of samples to pre-train auto-encoders for better initial understanding of client data and we will fix decoders in the next two steps. 
For client $C_i$, we denote $\hat{\mathbf{z}_i}$ as the $i$th client-specific embedding and $s_i = o_{i} \circ q_{i}$ where $q_{i}$ and $o_{i}$ represent the encoder and the decoder respectively.
This step is unsupervised and each client utilizes MSE loss to train local $s_i$:
\begin{equation}
    \ell_{MSE} = \mathbb{E}_{(\mathbf{x},y)\sim \mathbb{P}(\mathcal{D}_i)} \left\|o_i([q_i(\mathbf{x}), \hat{\mathbf{z}_i}])-\mathbf{x} \right \|_2^2,
    \label{eqa-main-mse}
\end{equation}
where $[\cdot, \cdot]$ denotes the concatenation operation. 
The updated encoder and decoder of each client are then transmitted to the server.
Similar to FedAVG~\citep{mcmahan2017communication}, the server aggregates the collected auto-encoders and distributes the aggregated one, $s$, to each client.
\begin{equation}
    w(s) = \frac{1}{n} \sum_{i=1}^n w(s_i),
    \label{eqa-main-pre-agg}
\end{equation}
where $w(s)$ represent parameters of $s$.
We assume that all clients contribute equally and participate in training.
The above pre-training is iterative and we can obtain well-trained auto-encoders finally.

\paragraph{Input Surgery.}

After pre-training, input surgery optimizes encoders, $q_i$, to transform inputs consistent with foundation models.
This step only exchanges encoders of clients to share common knowledge while each client preserves a client-specific embedding to represent personalized knowledge.
As shown in \figureautorefname~\ref{fig-main-framework}, the foundation model, $g$, is black-box and the decoder is frozen.
In the following, we elaborate on the whole training process in local clients.

In client $C_i$, an input $\mathbf{x}$ is first fed into the encoder $q_i$, generating an embedding vector $\mathbf{z} = q_i(\mathbf{x})$.
Then we concatenate $\mathbf{z}$ with the client-specific embedding, $\hat{\mathbf{z}_i}$, and obtain the final embedding feature, $\tilde{\mathbf{z}} = [\mathbf{z}, \hat{\mathbf{z}_i}]$, which is then sent to the decoder.
Once processed by the decoder, we can obtain $\tilde{\mathbf{x}} = o_i(\tilde{\mathbf{z}})$ with the same dimension as $\mathbf{x}$, and then the adapted input, $\tilde{\mathbf{x}}$, goes through the foundation model and the re-mapping layer, which generates the final prediction, $\tilde{\mathbf{y}}$.
We utilize the cross-entropy loss $\ell_{cls}$ to guide the optimization:
\begin{equation}
    \ell_{1} = \mathbb{E}_{(\mathbf{x},y)\sim \mathbb{P}(\mathcal{D}_i)}  \ell_{cls} (r_i(g( o_i([q_i(\mathbf{x}), \hat{\mathbf{z}_i}])), y).
    \label{eqa-main-loss1}
\end{equation}

However, the above objective cannot be directly optimized using the standard stochastic gradient descent since the foundation model $g$ is frozen, preventing us from computing its gradient using back-propagation.
We adopt the zeroth-order optimization method, specifically, the coordinate-wise gradient estimate (CGE), to learn $q_i$ and $\hat{\mathbf{z}_i}$~\citep{zhang2021robustify,tu2019autozoom,liu2018zeroth,lian2016comprehensive,ghadimi2013stochastic}.
To make the process clear and easy to understand, we freeze $r_i$ and view $o_i$, $g$, and $r_i$ as a whole module, $\mathcal{G}$, in this step. 

Assume $\mathbf{z} \in \mathbb{R}^{d_1}$ and $\hat{\mathbf{z}_i} \in \mathbb{R}^{d_2}$.
According to CGE, by adding a perturbation to $\tilde{\mathbf{z}}$, we obtain the new embedding and the corresponding classification loss, 
\begin{equation}
\tilde{\mathbf{z}_1} = \tilde{\mathbf{z}} + \rho\mathbf{e}_j, \ell_{\mathbf{x},1} = \ell_{cls} (r_i(g( o_i(\tilde{\mathbf{z}_1},y)))),
\label{eqa-main-gen1}
\end{equation}
where $\mathbf{e}_j = (0,0,\cdots,1,0,0\cdots, 0) \in \mathbb{R}^{d_1+d_2}$ denotes the $j$th elementary basic vector and $\rho$ is a hyperparameter that describes the extent of the perturbation.
Similarly, we can obtain $\tilde{\mathbf{z}_2}$ and $\ell_{\mathbf{x},2}$.
\begin{equation}
\tilde{\mathbf{z}_2} = \tilde{\mathbf{z}} - \rho\mathbf{e}_j, \ell_{\mathbf{x},2} = \ell_{cls} (r_i(g( o_i(\tilde{\mathbf{z}_2},y)))).
\label{eqa-main-gen2}
\end{equation}

Then, we have the gradient of $\mathcal{G}$ w.r.t. $\tilde{\mathbf{z}}$ computed as:
\begin{equation}
\nabla_{\tilde{\mathbf{z}}} \mathcal{G}(\tilde{\mathbf{z}}) = (\nabla_{\tilde{\mathbf{z}}} \mathcal{G}(\tilde{\mathbf{z}})_1, \nabla_{\tilde{\mathbf{z}}} \mathcal{G}(\tilde{\mathbf{z}})_2) \approx \sum_{i=1}^{d_1+d_2} \frac{\ell_{\mathbf{x},2} - \ell_{\mathbf{x},1}}{2\times \rho} \mathbf{e}_j.
    \label{eqa-main-diff}
\end{equation}

For $\hat{\mathbf{z}_i}$, we directly update it with corresponding parts of $\nabla_{\tilde{\mathbf{z}}} \mathcal{G}(\tilde{\mathbf{z}})$ via a learning rate $\gamma_2$, $\hat{\mathbf{z}_i}^{new} = \hat{\mathbf{z}_i} - \gamma_2 \times \nabla_{\tilde{\mathbf{z}}} \mathcal{G}(\tilde{\mathbf{z}})_2$ where $\nabla_{\tilde{\mathbf{z}}} \mathcal{G}(\tilde{\mathbf{z}})_2$ denotes the last $d_2$ dimensions of $\nabla_{\tilde{\mathbf{z}}} \mathcal{G}(\tilde{\mathbf{z}})$.
For $\nabla_{\tilde{\mathbf{z}}} \mathcal{G}(\tilde{\mathbf{z}})_1$, we can update $q_i$ with the chain rule for differentiation.
\begin{equation}
    \nabla_{q_i}\ell_1 = \frac{d \tilde{\mathbf{z}}}{d q_i}\frac{d \mathcal{G}}{d \tilde{\mathbf{z}}}\approx \frac{d \mathbf{z}}{d q_i} \nabla_{\tilde{\mathbf{z}}} \mathcal{G}(\tilde{\mathbf{z}})_1 \approx \frac{d \nabla_{\tilde{\mathbf{z}}}\mathcal{G}(\tilde{\mathbf{z}})_1  \mathbf{z}}{d q_i}. 
\end{equation}
Finally, we can update the encoder, $w(q_i^{new}) = w(q_i) - \gamma_1\times  \nabla_{q_i}\ell_1$.
Once all clients have updated encoders, we can aggregate encoders in the server and then distribute the aggregated encoder:
\begin{equation}
w(q) = \frac{1}{n} \sum_{i=1}^{n} w(q_i^{new}).
    \label{eqa-main-aggenc}
\end{equation}

\paragraph{Semantic Re-mapping.}
In the last step, we train the encoder that enables the input consistent with foundation models.
Here, we perform semantic re-mapping similar to \citet{huang2015trends}.
This step only occurs in each client and no communication exists for simplicity and personalization.
We view all parts before $r_i$ as a whole module, $\mathcal{F}$, with two functions, including extracting features and mapping extracted features to a random space and we freeze $\mathcal{F}$.
These two functions correspond to artificial features and the first layer in \citep{huang2015trends} respectively and we only update $r_i$ corresponding to the second layer of ELM:
\begin{equation}
\ell_2=\mathbb{E}_{(\mathbf{x},y)\sim \mathbb{P}(\mathcal{D}_i)} \ell_{cls}(r_i(\mathcal{F}(\mathbf{x})), y).
    \label{eqa-main-out}
\end{equation}
Since this part is behind $g$, $w(r_i)$ can be updated directly.

\paragraph{Discussion.}
We perform step 2 and step 3 iteratively.
There also exist other zeroth-order optimization methods, e.g., the randomized gradient estimate (RGE)~\citep{liu2020primer}.
However, the concrete implementation of zeroth-order optimization is not our focus and we thereby choose CGE for deterministic and stability~\citep{liu2018zeroth}.
In this paper, we assume large foundation models exist on clients in the form of encrypted assets and we do not need to upload transformed inputs.
Moreover, we do care about communication costs and GPU demands instead of training time in each client.
To reduce training time in each client, some techniques, such as RGE, random selections on $\mathbf{e}_j$, reduction on $d_1+d_2$, etc., can be adopted and we leave this as our future work.


\subsection{Theoretical Analysis}
\label{sec-method-theory}

We present the convergence analysis of \method. 
There exist three parts to optimize during step 2 and step 3, including the parameters of the encoder ($\mathbf{\hat{u}}$), clients specific embeddings ($\mathbf{v}_{1,i}$), and semantic re-mapping layers ($\mathbf{v}_{2,i}$).
Following \citet{pillutla2022federated}, we group parameters into two parts, i.e. $\mathbf{u}_i:=\mathbf{\hat{u}}\cup \mathbf{v}_{1,i}$, $\mathbf{v}_i:=\mathbf{v}_{2,i}$.\footnote{Different from \citep{pillutla2022federated}, optimized parameters in \method contain three parts and we utilize ZOO instead of gradients.
Due to page limits, please refer to \sectionautorefname~\ref{sec-app-proof} for details.}
Now we give the main conclusion with proofs in Appendix~\ref{sec-app-proof}.


\begin{theorem}
Suppose assumptions~\ref{assump1}, \ref{assump2}, \ref{assump3}, and \ref{assump4} hold, and the learning rates in \method are chosen as $\gamma_{\mathbf{u}} = \eta/(L_{\mathbf{u}}\tau_{\mathbf{u}})$ and $\gamma_{\mathbf{v}} = \eta/(L_{\mathbf{v}}\tau_{\mathbf{v}})$, with
\begin{equation}
\eta \le \textit{min} \left \{ {1\over 24(1+\mu^2)}, \frac{m}{128\chi^2(n-m)},\sqrt{m\over\chi^2n} \right \}. 
\end{equation}
Then,  right after the training of $T$ epochs, ignoring absolute constants, we have 
\begin{equation}
\frac{1}{T} \sum_{t=0}^{T-1}\left (\frac{1}{L_{\mathbf{u}}} \mathbb{E}\left [\Delta_{\mathbf{u}}^{(t)}\right ] + \frac{m}{nL_{\mathbf{v}}} \mathbb{E}\left [\Delta_{\mathbf{v}}^{(t)}\right ]\right )
 \le \frac{\Delta F_0}{\eta T} + \eta \sigma^2_{alt,1} + \eta^2\sigma^2_{alt,2}.
\end{equation}
\end{theorem}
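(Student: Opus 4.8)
The plan is to adapt the partially personalized federated optimization framework of \citet{pillutla2022federated} (specifically their alternating-update, FedAlt, analysis) to the zeroth-order regime. I would take the stationarity measures $\Delta_{\mathbf{u}}^{(t)}$ and $\Delta_{\mathbf{v}}^{(t)}$ to be the expected squared norms of the partial gradients of the global objective $F$ with respect to the shared block $\mathbf{u}$ and the personalized block $\mathbf{v}$ at epoch $t$. The first ingredient to establish is a zeroth-order estimation lemma: the CGE estimator in \eqref{eqa-main-diff} is a deterministic finite-difference approximation whose expectation equals the true gradient up to an $O(\rho)$ bias controlled by the smoothness constant, and whose second moment is bounded in terms of the stochastic-gradient variance captured by Assumption~\ref{assump2}. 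This separates the zeroth-order error into a bias piece that will be absorbed into the $\eta^2\sigma^2_{alt,2}$ term and a variance piece that feeds $\eta\sigma^2_{alt,1}$.

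Next I would invoke block-wise smoothness (Assumption~\ref{assump1}, giving constants $L_{\mathbf{u}}$ and $L_{\mathbf{v}}$) and apply a descent lemma separately to each block. Because the updates alternate -- the personalized $\mathbf{v}_i$ first, with local learning rate $\gamma_{\mathbf{v}}$ over $\tau_{\mathbf{v}}$ local steps, then the shared $\mathbf{u}$ with $\gamma_{\mathbf{u}}$ over $\tau_{\mathbf{u}}$ steps -- I would track the one-epoch decrease of $F$ as a sum of a $\mathbf{v}$-decrease and a $\mathbf{u}$-decrease evaluated at the partially updated iterate. The choices $\gamma_{\mathbf{u}} = \eta/(L_{\mathbf{u}}\tau_{\mathbf{u}})$ and $\gamma_{\mathbf{v}} = \eta/(L_{\mathbf{v}}\tau_{\mathbf{v}})$ are exactly what make the effective per-epoch steps equal to $\eta/L_{\mathbf{u}}$ and $\eta/L_{\mathbf{v}}$, producing the $1/L_{\mathbf{u}}$ and $1/L_{\mathbf{v}}$ weights on the left-hand side. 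The coupling between the two blocks -- that the $\mathbf{u}$-step sees a freshly updated $\mathbf{v}$ -- is governed by a cross-smoothness parameter $\mu$, which is why the step-size cap carries the factor $1/(24(1+\mu^2))$.

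The two remaining error sources are client drift and partial participation. Client drift -- the discrepancy between a client's iterate after several local steps and the one it would have had under a single global gradient step -- I would bound using smoothness and the bounded local step count, with the step-size constraint keeping drift a lower-order term. Partial participation of $m$ out of $n$ clients introduces a sampling variance in the aggregated shared gradient; bounding it requires the gradient-heterogeneity assumption (Assumption~\ref{assump3}, constant $\chi^2$) and produces both the $m/n$ weighting on the $\mathbf{v}$-term and the step-size restrictions $\eta \le m/(128\chi^2(n-m))$ and $\eta \le \sqrt{m/(\chi^2 n)}$ needed to keep this variance controlled. Collecting the per-epoch inequalities, summing over $t = 0,\dots,T-1$, telescoping the $F$-differences into $\Delta F_0 = F^{(0)} - F^\star$, and dividing by $\eta T$ then yields the stated bound, with all bias and variance contributions gathered into $\sigma^2_{alt,1}$ and $\sigma^2_{alt,2}$.

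The main obstacle, I expect, is controlling the interaction of three distinct effects simultaneously: the zeroth-order bias and variance, the alternating block structure, and the partial-participation sampling. In the plain first-order, full-participation FedAlt analysis these appear separately, but here the zeroth-order variance must be propagated through the chain rule following \eqref{eqa-main-diff} that converts the embedding-space estimate into an encoder gradient, and then through aggregation. Keeping the bias genuinely of order $\rho$ and at higher order in $\eta$, so that it lands in $\eta^2\sigma^2_{alt,2}$ rather than contaminating the dominant $\Delta F_0/(\eta T)$ and $\eta\sigma^2_{alt,1}$ terms, is the delicate bookkeeping step on which the whole argument hinges.
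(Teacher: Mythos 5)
Your plan is essentially the paper's proof: the paper likewise adapts the FedAlt analysis of \citet{pillutla2022federated}, first proving a lemma that the finite-difference (CGE) estimator has bias at most $L\rho$ and mean-squared deviation at most $2M^4L^2\rho^2+2\sigma^2$, then re-running the alternating $\mathbf{v}$-step/$\mathbf{u}$-step descent decomposition with local-drift and partial-participation bounds and the same learning-rate substitution. Three points of friction are worth flagging. First, you swap the roles of the constants: in the paper $\mu$ is the partial gradient \emph{diversity} constant (Assumption~\ref{assump4}), while the cross-block coupling constant is $\chi$ from Assumption~\ref{assump1} ($\max\{L_{\mathbf{uv}},L_{\mathbf{vu}}\}\le\chi\sqrt{L_{\mathbf{u}}L_{\mathbf{v}}}$), and Assumption~\ref{assump3} is bounded variance, not heterogeneity; this matters because the $\chi^2$-dependent step-size caps arise from controlling the mismatch between the actual and the ``virtual full-participation'' $\mathbf{v}$-update via the cross-Lipschitz constants, a decoupling device the paper leans on heavily and that your sketch omits. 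Second, your goal of confining the zeroth-order bias to the $\eta^2\sigma^2_{alt,2}$ term is not what the theorem asserts: the paper simply folds $2M^4L^2\rho^2$ into an effective variance that replaces $\sigma^2$ everywhere, so the $\rho^2$ terms appear in $\sigma^2_{alt,1}$ (hence at order $\eta$) as well; insisting on the stronger bookkeeping would be proving a different statement. Neither issue is fatal to the strategy, but both would need correcting before the plan could be executed against the stated constants.
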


\begin{corollary}
An optimal learning rate is: 
\begin{equation}
\eta = \left (\frac{\Delta F_0}{T\sigma_{alt,1}^2} \right )^{1/2}\bigwedge \left (\frac{\Delta F_0^2}{T^2\sigma_{alt,2}^2}\right )^{1/3} \bigwedge \frac{1}{1+\mu^2} \bigwedge \frac{m}{\chi^2(n-m)} \bigwedge \sqrt{\frac{m}{\chi^2n}}. 
\end{equation}
We have, ignoring absolute constants, 
\begin{align}\label{convergencerate}
\begin{aligned}
& \frac{1}{T}\sum_{t=0}^{T-1}\left (\frac{1}{L_{\mathbf{u}}}\mathbb{E}\left [\Delta_{\mathbf{u}}^{{(t)}} \right ] +\frac{m}{nL_{\mathbf{v}}} \mathbb{E}\left [\Delta_{\mathbf{v}}^{{(t)}} \right ] \right ) \le \\
& \frac{(\Delta F_0\sigma_{alt,1}^2)^{1/2}}{\sqrt{T}} + 
\frac{(\Delta F_0^2\sigma_{alt,2}^2)^{1/3}}{T^{2/3}} + \frac{\Delta F_0}{T} \left (1 + \mu^2 + \chi^2(\frac{n}{m} -1) + \sqrt{\chi^2\frac{n}{m}} \right ).
\end{aligned}
\end{align}
\end{corollary}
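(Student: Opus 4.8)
The plan is to adapt the alternating-update federated analysis of \citet{pillutla2022federated} (their FedAlt scheme) to the zeroth-order setting, treating the coordinate-wise gradient estimate (CGE) as a biased stochastic oracle. Throughout, write $F(\mathbf{u},\mathbf{v}) = \frac{1}{n}\sum_{i=1}^n F_i(\mathbf{u}_i,\mathbf{v}_i)$ for the global objective assembled from the per-client losses of \equationname~\ref{eqa:goal1}, set $\Delta_{\mathbf{u}}^{(t)} := \|\nabla_{\mathbf{u}}F\|^2$ and $\Delta_{\mathbf{v}}^{(t)} := \frac{1}{n}\sum_i\|\nabla_{\mathbf{v}_i}F_i\|^2$ for the two partial stationarity measures at round $t$, and $\Delta F_0 := F(\mathbf{u}^{(0)},\mathbf{v}^{(0)}) - F^\star$. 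I read the four assumptions as supplying: block-wise smoothness with constants $L_{\mathbf{u}},L_{\mathbf{v}}$ plus a relative cross-smoothness bounded by $\mu$; bounded stochastic-gradient variance; bounded client heterogeneity measured by $\chi^2$; and control of the CGE oracle's bias and second moment through the perturbation radius $\rho$. The effective noise constants $\sigma^2_{alt,1}$ and $\sigma^2_{alt,2}$ are the quantities into which these are aggregated. The $m/n$ weighting on the $\mathbf{v}$ term on the left-hand side reflects that only the $m$ sampled clients update their personalized re-mapping blocks in a given round.

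The first step is a per-round descent inequality. Using block smoothness in $\mathbf{u}$ and in $\mathbf{v}$ separately, I would expand $F(\mathbf{u}^{(t+1)},\mathbf{v}^{(t+1)}) - F(\mathbf{u}^{(t)},\mathbf{v}^{(t)})$ and take conditional expectation over the sampled cohort, the minibatches, and the CGE perturbations. The alternating structure (updating $\mathbf{v}_i$ with $\mathbf{u}$ frozen, then $\mathbf{u}$ with $\mathbf{v}$ frozen, as in steps 2--3) produces cross terms between the two blocks; these are absorbed by the $\mu$-relative smoothness exactly as in \citet{pillutla2022federated}, which is what permits combining the two descent directions into the single progress quantity on the left.

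Next I would substitute the ZOO oracle for the true partial gradients and bound the local drift. The central-difference CGE of \equationname~\ref{eqa-main-diff} is \emph{biased}: its expectation is the gradient of a $\rho$-smoothed surrogate, with bias of order $\rho^2$ times the Hessian-Lipschitz constant and a second moment inflated by the block dimension $d_1+d_2$. I would split each inner product $\langle \nabla F,\hat{g}\rangle$ into a true-gradient term plus a bias term handled by Young's inequality, folding the $\rho$- and dimension-dependent contributions into $\sigma^2_{alt,1}$ and $\sigma^2_{alt,2}$. With the prescribed step sizes $\gamma_{\mathbf{u}}=\eta/(L_{\mathbf{u}}\tau_{\mathbf{u}})$ and $\gamma_{\mathbf{v}}=\eta/(L_{\mathbf{v}}\tau_{\mathbf{v}})$, the drift accumulated over the $\tau_{\mathbf{u}},\tau_{\mathbf{v}}$ local steps is $O(\eta^2)$ after invoking smoothness, bounded variance, and heterogeneity, yielding the $\eta^2\sigma^2_{alt,2}$ term; the sampling of $m$ of $n$ clients contributes a variance scaling with $\chi^2(n-m)/m$, which together with the minibatch noise gives the $\eta\sigma^2_{alt,1}$ term and forces the stated upper bounds on $\eta$ so that the negative progress terms dominate the noise.

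Finally, I would telescope the per-round inequality over $t=0,\dots,T-1$, use $\sum_t\bigl(F^{(t)}-F^{(t+1)}\bigr)\le\Delta F_0$, and divide by $T$ to obtain the displayed bound; the corollary then follows by a routine univariate optimization of $\Delta F_0/(\eta T)+\eta\sigma^2_{alt,1}+\eta^2\sigma^2_{alt,2}$, whose two interior minimizers are $(\Delta F_0/(T\sigma_{alt,1}^2))^{1/2}$ and $(\Delta F_0^2/(T^2\sigma_{alt,2}^2))^{1/3}$, with the feasibility constraints on $\eta$ supplying the boundary terms. The hard part will be carrying the CGE bias through the alternating analysis: unlike the first-order treatment of \citet{pillutla2022federated}, the descent direction is only an approximate, biased gradient, so I must verify that taking $\rho$ small enough (and absorbing the $d_1+d_2$ factor into the noise constants) keeps the finite-difference error strictly lower-order than the $\eta\sigma^2_{alt,1}$ and $\eta^2\sigma^2_{alt,2}$ terms, and that this bias does not contaminate the block cross terms. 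That coupling between the zeroth-order error and the two-block structure is where the novel technical care concentrates.
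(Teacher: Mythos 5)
Your proposal is correct and follows essentially the same route as the paper: the corollary is obtained by taking the bound $\Delta F_0/(\eta T)+\eta\sigma^2_{alt,1}+\eta^2\sigma^2_{alt,2}$ from Theorem~1 and performing the standard tuned-step-size optimization over $\eta$ subject to the feasibility caps, which the paper black-boxes by citing Lemma~25 of \citet{pillutla2022federated} while you spell out the two interior minimizers and the boundary terms explicitly. The bulk of your write-up re-derives the theorem itself (the descent inequality, CGE bias, drift, and telescoping), which the paper treats as already established before this corollary.
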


The measure of convergence of \algorithmname~\ref{alg:zoopfl} is in terms of the weighted average of square norms of the gradients  of loss function $\mathbb{E}\left [\Delta_{\mathbf{u}}^{{(t)}} \right ]$ and $\mathbb{E} \left [\Delta_{\mathbf{v}}^{{(t)}} \right ]$ through iterations from 1 to $T-1$, i.e. the left hand of \eqref{convergencerate}. As the square norms of the gradients of loss function at the optimal solution is zero, whether or not these norms approach zero is a good criterion of the convergence. With this choice of optimal learning rate, it is clear from the right hand of \eqref{convergencerate} that our algorithm converges and the asymptotic convergence rate is $\mathbf{O}({1\over \sqrt{T}})$.


\section{Experiments}

\subsection{Setup}

\begin{wraptable}{r}{.5\textwidth}
\centering
\vspace{-.2in}
\caption{Information of benchmarks.}
\label{tab-main-dataset}
\resizebox{0.5\textwidth}{!}{
\begin{tabular}{cccccc}
\toprule
\multicolumn{1}{l}{Modality} & Dataset              & Samples & Classes & Clients & Selected Samples \\ \midrule
\multirow{4}{*}{CV}          & COVID-19             & 9,198   & 4       & 20      & 9,198            \\
                             & APTOS                & 3,662   & 5       & 20      & 1,658            \\
                             & Terra100             & 5,883   & 10      & 20      & 5,883            \\
                             & Terra46              & 4,741   & 10      & 20      & 4,741            \\ \midrule
\multirow{4}{*}{NLP}         & SST-2                & 67k     & 2       & 20      & 9,763            \\
                             & COLA                 & 8.5k    & 2       & 20      & 5,700            \\
                             & Finanical & 4,840   & 3       & 10      & 3,379            \\
                             & Flipkart             & 205,053 & 3       & 20      & 3,048           \\ \bottomrule
\end{tabular}}
\vspace{-.1in}
\end{wraptable}

\textbf{Datasets and baselines.}
We evaluate \method on 8 popular classification benchmarks with two modalities including computer vision (CV) and natural language processing (NLP). 
The benchmarks are COVID-19~\citep{sait2020curated}, APTOS~\citep{aptos2019-blindness-detection}, Terra100~\citep{beery2018recognition}, Terra46~\citep{beery2018recognition}, 
SST-2~\citep{wang2019glue,socher-etal-2013-recursive},
COLA~\citep{wang2019glue,warstadt2019neural},
Financial-phrasebank (Financial)~\citep{Malo2014GoodDO},
and Flipkart~\citep{niralivaghani_mansithummar_2023}.
Brief information can be found in \tableautorefname~\ref{tab-main-dataset}.\footnote{We have chosen so many clients because it reflects the typical real-world scenario where there are numerous clients, each with relatively small amounts of data~\citep{xu2023federated}.}
We filter meaningless samples and select samples for global class balance.
The concrete select strategies and more details can be found in \sectionautorefname~\ref{sec-app-datades} while data distributions can be found in \sectionautorefname~\ref{sec-app-datadis}.
To our best knowledge, no other methods are proposed and thereby we only compare our methods with zero-shot pre-trained models. 




\textbf{Implementation details.}
For vision tasks, we set $g$ as CLIP \citep{radford2021learning} with ResNet50 as the image backbone~\citep{radford2021learning}.
$r$ is a linear layer with dimension $M$ where $M$ is the number of classes. 
$q$ contains several blocks composed of a convolution layer, a RELU activation layer, a Batch Normalization layer, and a Pooling layer while $o$ contains several blocks composed of a convTranspose layer, a RELU activation layer, and a Batch Normalization layer.
We set $d_1=6\times7\times7 = 294$ and $d_2 = 2\times7\times 7 =98$.
We set the learning rate for pretraining as $10^{-4}$ and set other learning rates as hyperparameters.
For simplicity, other learning rates are all the same.
We set the local epoch number as 1 and set the global round number $T=120$.
Moreover, we do not tune $\rho$ but set $\rho=5\times 10^{-3}$.
We select the best results according to accuracy on validation parts.

For language tasks, we select four foundation models, including ALBERT-base~\citep{lan2019albert}, BERT-base~\citep{devlin2018bert}, DeBERTa-base~\citep{he2020deberta}, and GPT2~\citep{radford2019language}.
Note that there are recent large language foundation models such as Llama \citep{touvron2023llama} and Falcon \citep{refinedweb}, but we can only experiment with the above ones due to constrained computational devices.\footnote{Our hardware is a server with 4 V100 (16G) GPUs, which cannot afford to train larger foundation models.}
Our method works for all kinds of foundation models in various sizes.
$q$ simply contains several linear layers followed by  batch normalization layers.
Please note that we transform input embeddings processed by foundation models for NLP instead of original texts.
We set $d_1=128-32=96$ and $d_2 = 32$.
We set the local epoch number as 1 and set the global round number $T=130$.
Other settings are similar to computer vision.
For concrete structures of auto-encoders, please refer to \sectionautorefname~\ref{sec-app-modelstruct}.

\subsection{Experimental Results}

\figureautorefname~\ref{fig-main-allres} shows the results on all eight benchmarks and detailed results are in \sectionautorefname~\ref{sec-app-moreresults}.
From these results, we have the following observations.
1) Our method achieves the best results on average for all benchmarks whatever the backbone is.
It significantly outperforms the zero-shot method with remarkable improvements.
In computer vision benchmarks, the improvements are about $42\%$, $25\%$, $21\%$, and $59\%$ for COVID-19, Terra100, APTOS, and Terra46 respectively.
In natural language processing benchmarks, for SST-2, COLA, and Flipkart, Financial-phrasebank, the improvements are about $43\%$, $39\%$, $15\%$, and $33\%$ respectively.
Please note that there only exist a few training data in each client (for COVID-19, each client only has about 50 samples.), which means utilizing foundation models is important.
2) Our method achieves the best accuracy in most clients, demonstrating the necessity of input surgery and semantic re-mapping.
As shown in \figureautorefname~\ref{fig-main-allres}(a)-(d), \method only performs slightly worse than ZS in few clients, e.g. client 13 on COVID-19, which can be due to the instability of zeroth-order optimization. 
3) For natural language processing, different backbones bring different performance.
From \figureautorefname~\ref{fig-main-allres}(g), we can see that our method based on GPT2 can achieve better results compared to other backbones, although ZS performs the worst with GPT2.
However, from \figureautorefname~\ref{fig-main-allres}(f), we can see that \method based on GPT2 does not achieve the best performance.
4) Why large foundation models cannot achieve acceptable performances on these benchmarks?
For computer vision, we choose COVID-19, APTOS, and Terra Incognita and these datasets can be missing during pretraining of CLIP, which leads the failure of CLIP with zero-shot.
For natural language processing, although large foundation models can extract remarkable features, they need to be fine-tuned for downstream tasks, which means they may randomly guess without the post-processing.
Due to these factors, post-processing to large foundation models is necessary, which is just what we explore in this paper.

\begin{figure}[!t]
    \centering
        \includegraphics[width=0.9\textwidth]{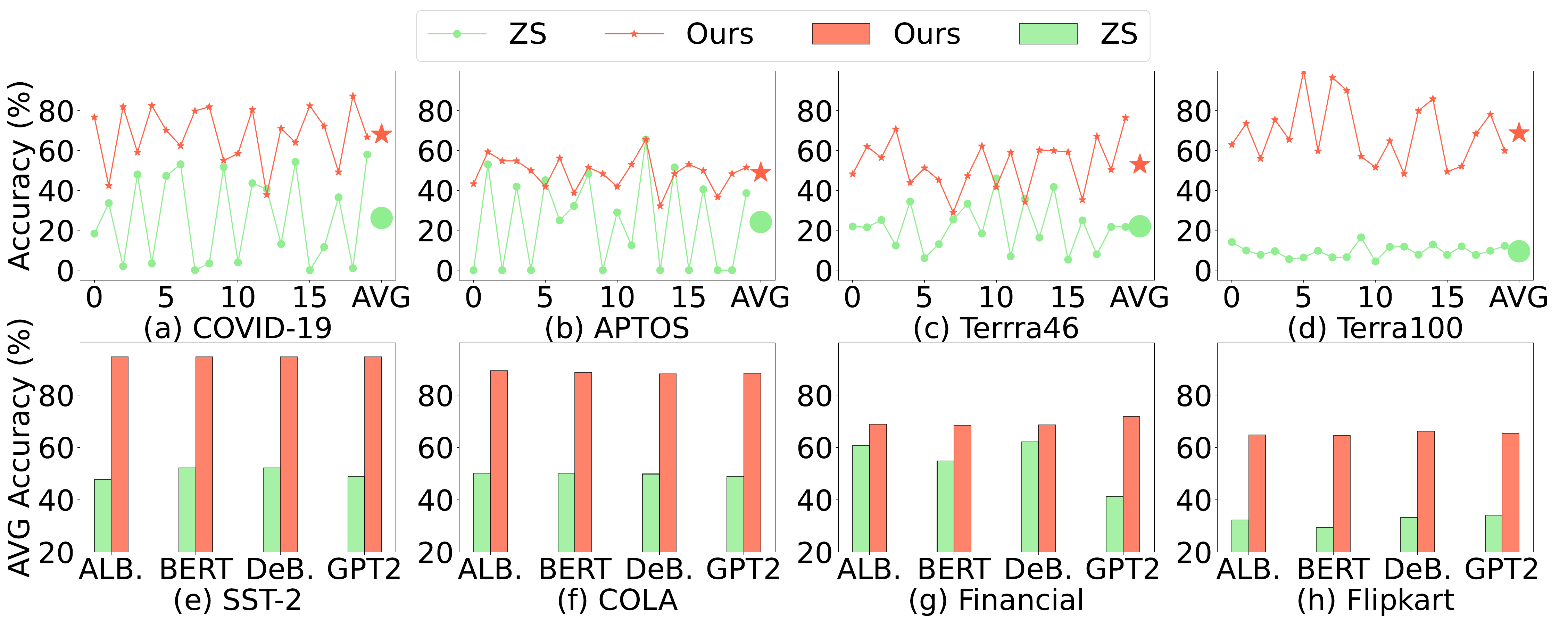}
        \vspace{-.1in}
    \caption{Results on CV (a-d) and NLP (e-h) tasks.} 
    \label{fig-main-allres}
\end{figure}

\subsection{Analysis and Discussion}

\begin{figure}[!t]
    \centering
    \subfigure[Ablation of each part]{
        \includegraphics[width=0.22\textwidth]{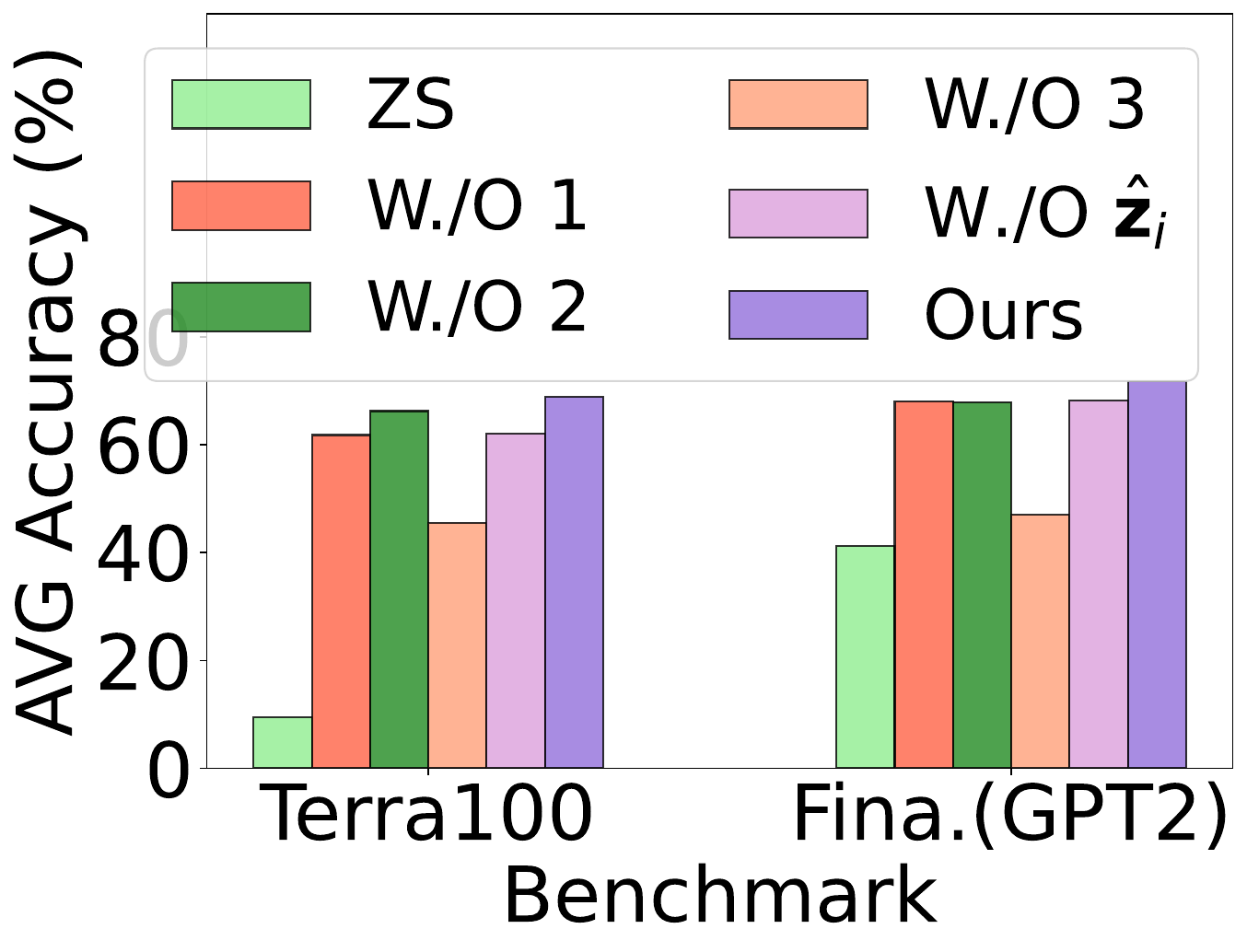}
        \label{fig-main-abla1}
    }
    \subfigure[Other observations]{
        \includegraphics[width=0.22\textwidth]{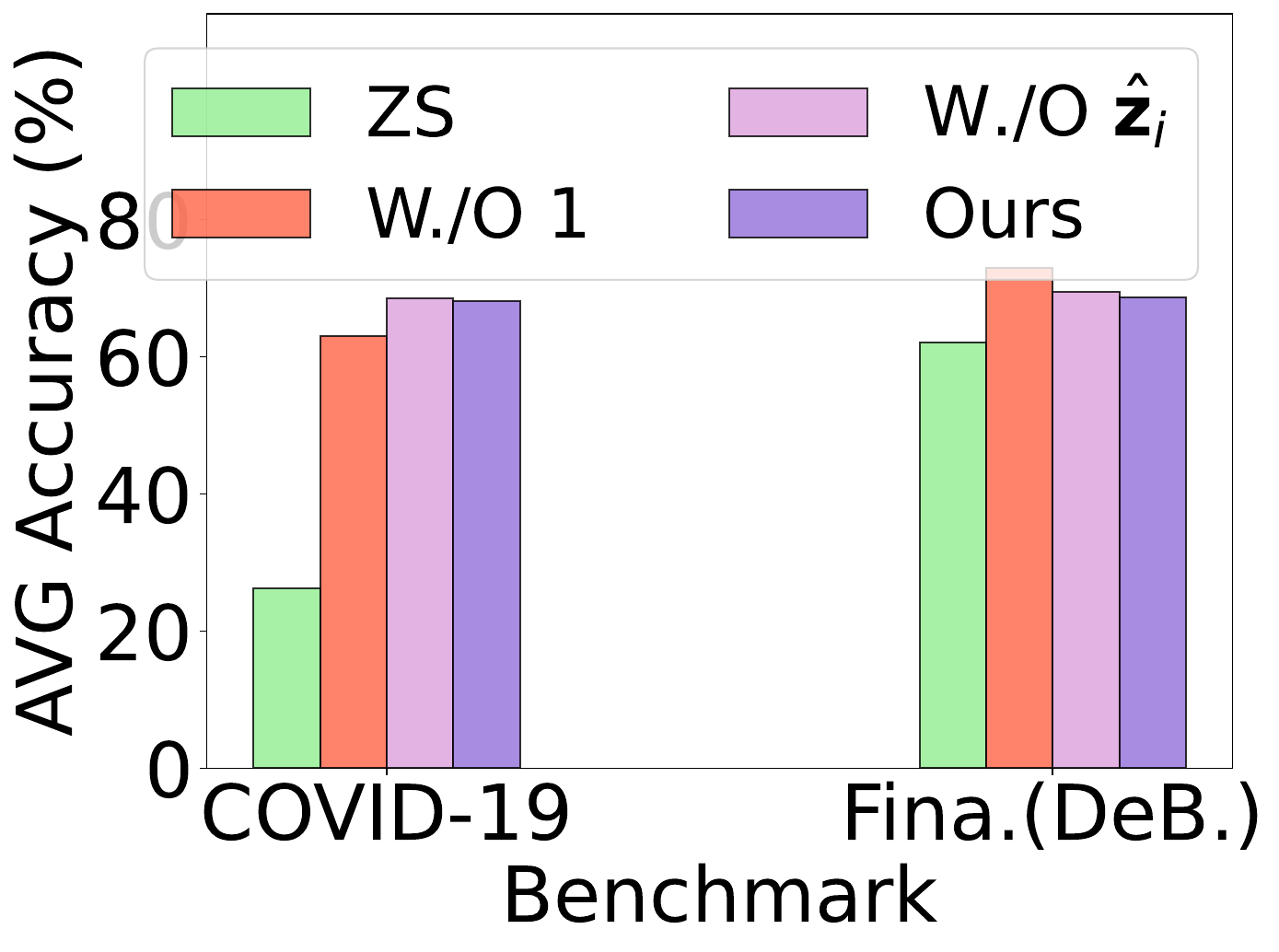}
        \label{fig-main-abla2}
    }
    \subfigure[Convergence]{
        \includegraphics[width=0.22\textwidth]{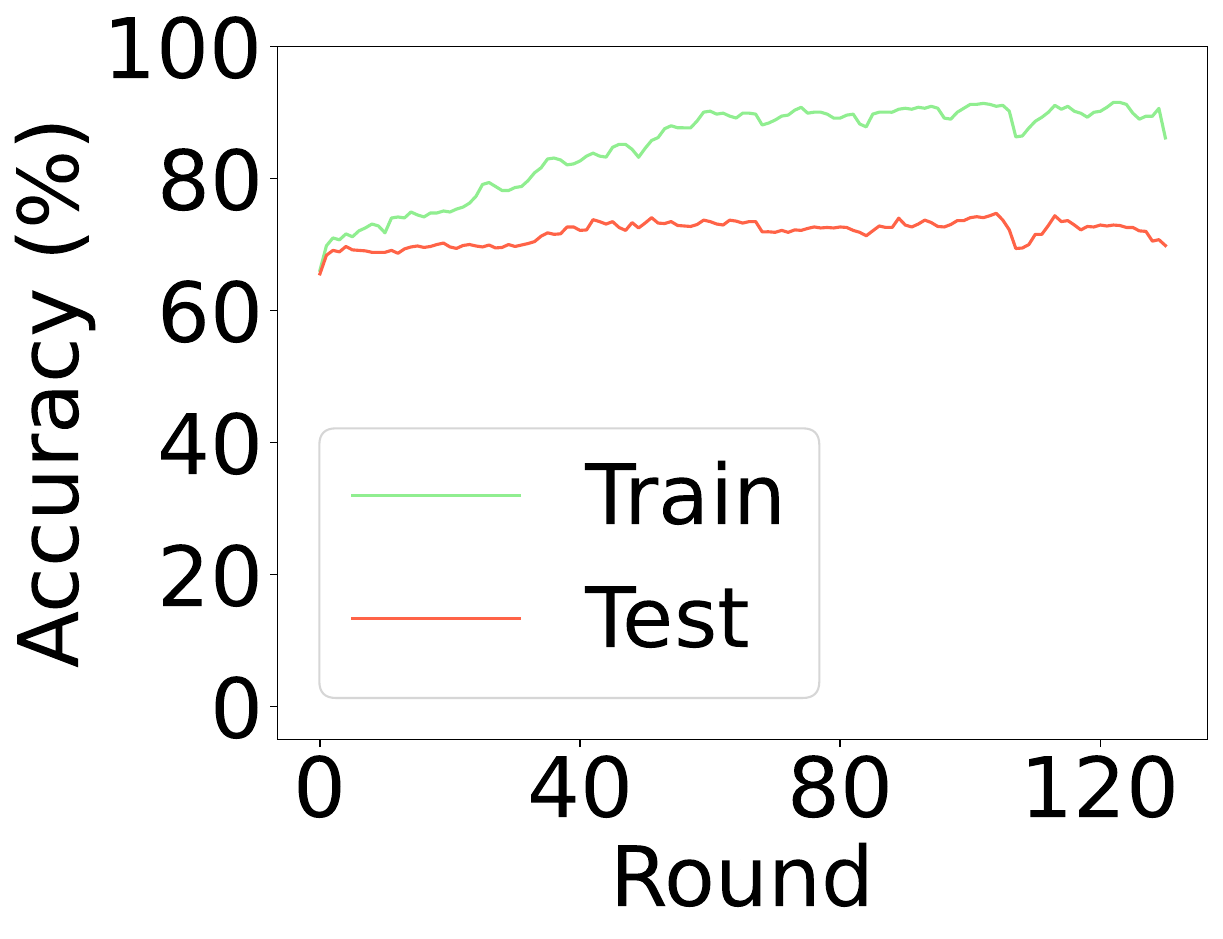}
        \label{fig-main-conv}
    }
    \subfigure[Number of param.]{
        \includegraphics[width=0.22\textwidth]{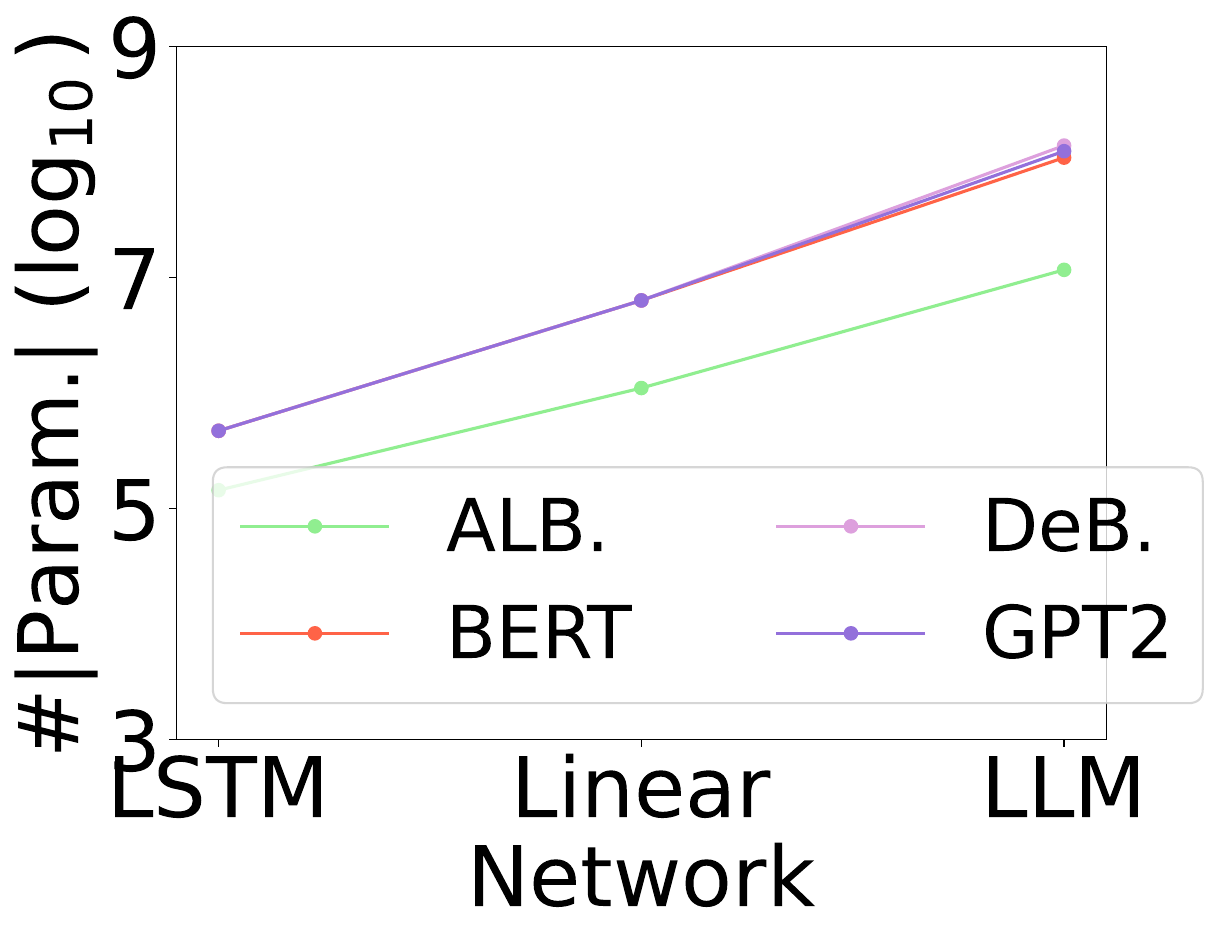}
        \label{fig-main-networkp}
    }
    \vspace{-.1in}
    \caption{Ablation study, convergence, and communication costs.}
    \label{fig-main-abla}
    \vspace{-.1in}
\end{figure}

\textbf{Ablation Study.}
\figureautorefname~\ref{fig-main-abla1} and \ref{fig-main-abla2} give experiments on ablation study and we have following observations.
1) In most situations, each part of our method brings improvements on both CV and NLP.
2) Step 3 is more significant than Step 2. 
Since Step 3 re-maps outputs, it can offer semantic meanings to foundation models for specific tasks, which is more direct and effective intuitively.
Step 2 transforms inputs that still go through foundation models or even random projections, and thereby it is indirect and less effective.
However, by combining Step 2 with Step 3, we can achieve further improvements.
3) In some situations, client-specific embeddings do not bring remarkable improvements, which can be induced by two reasons.
First, CGE is not stable enough and we cannot ensure \method finds the best global optimals.
Second, to ensure fairness, we offer comparison methods without client-specific embeddings containing larger dimensions and thereby these methods can learn better representations for auto-encoders.
4) Step 1 brings significant improvement for CV while it is less effective for NLP.
This can be due to two reasons.
We provide better auto-encoders for CV but simple linear layers for NLP.
Moreover, the closed pretraining of an auto-encoder without subsequent adjustments to the decoder may not be suitable for NLP. 
Fortunately, \method can achieve convincing improvements compared to ZS no matter whether adopting step 1.


\textbf{Convergence and Communication Cost.}
We provide convergence analysis and communication cost comparisons in \figureautorefname~\ref{fig-main-conv} and \ref{fig-main-networkp}, respectively.
\figureautorefname~\ref{fig-main-conv} shows that both the average training accuracy and testing accuracy are convergent.
There exist slight disturbances due to instability of CGE and the process of federated learning.
Moreover, we can find that there exists a divergence between training and testing, which means there could be further improvements if more generalization techniques could be adopted which we leave as our future work.
From \figureautorefname~\ref{fig-main-networkp}, we can see exchanging in encoders can reduce a significant amount of transmission cost, especially for LSTM, which means our method can be employed in reality.

\begin{figure}[!t]
    \centering
    \subfigure[Backbone]{
        \includegraphics[width=0.22\textwidth]{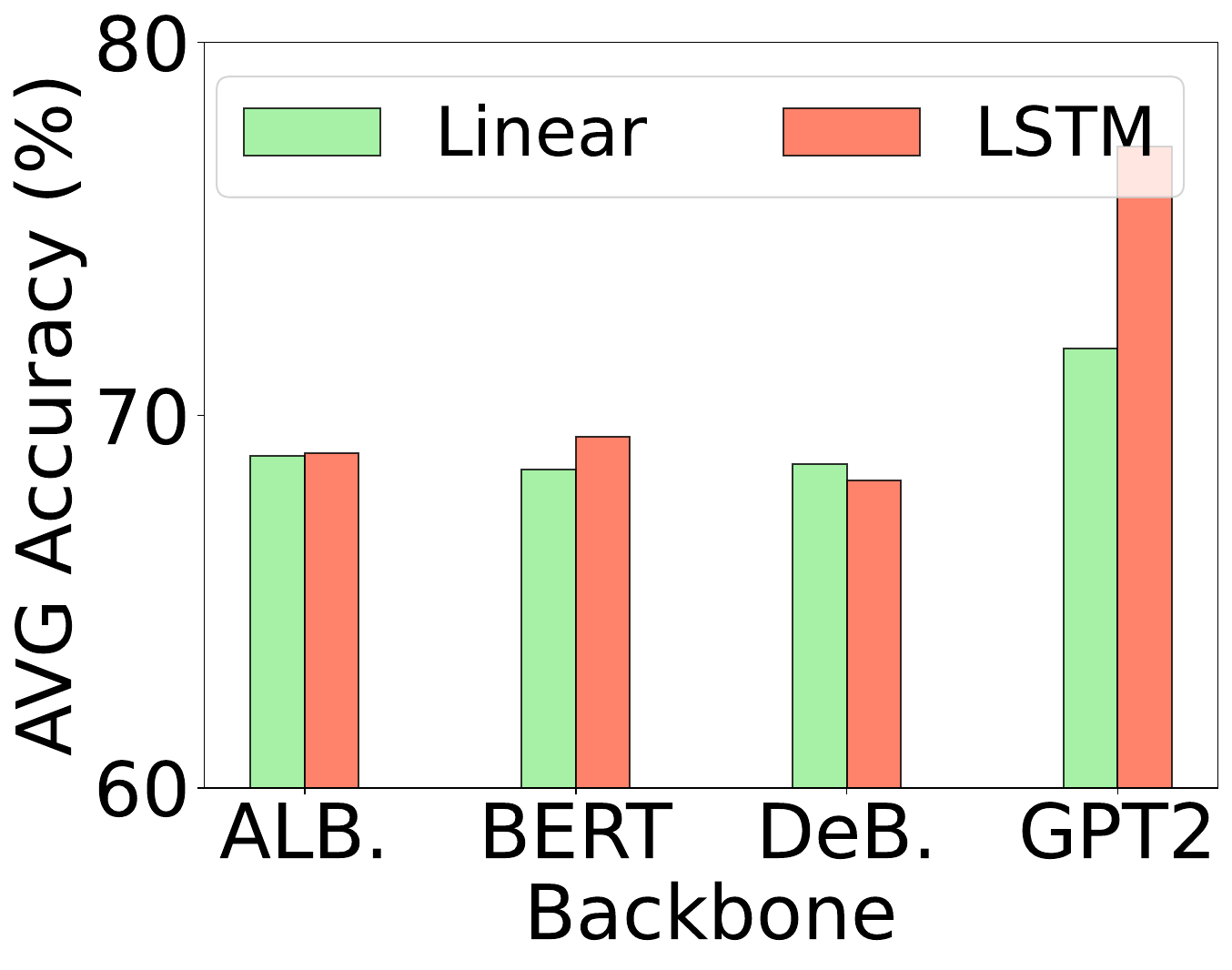}
        \label{fig-main-dis1}
    }
    \subfigure[Data splits]{
        \includegraphics[width=0.22\textwidth]{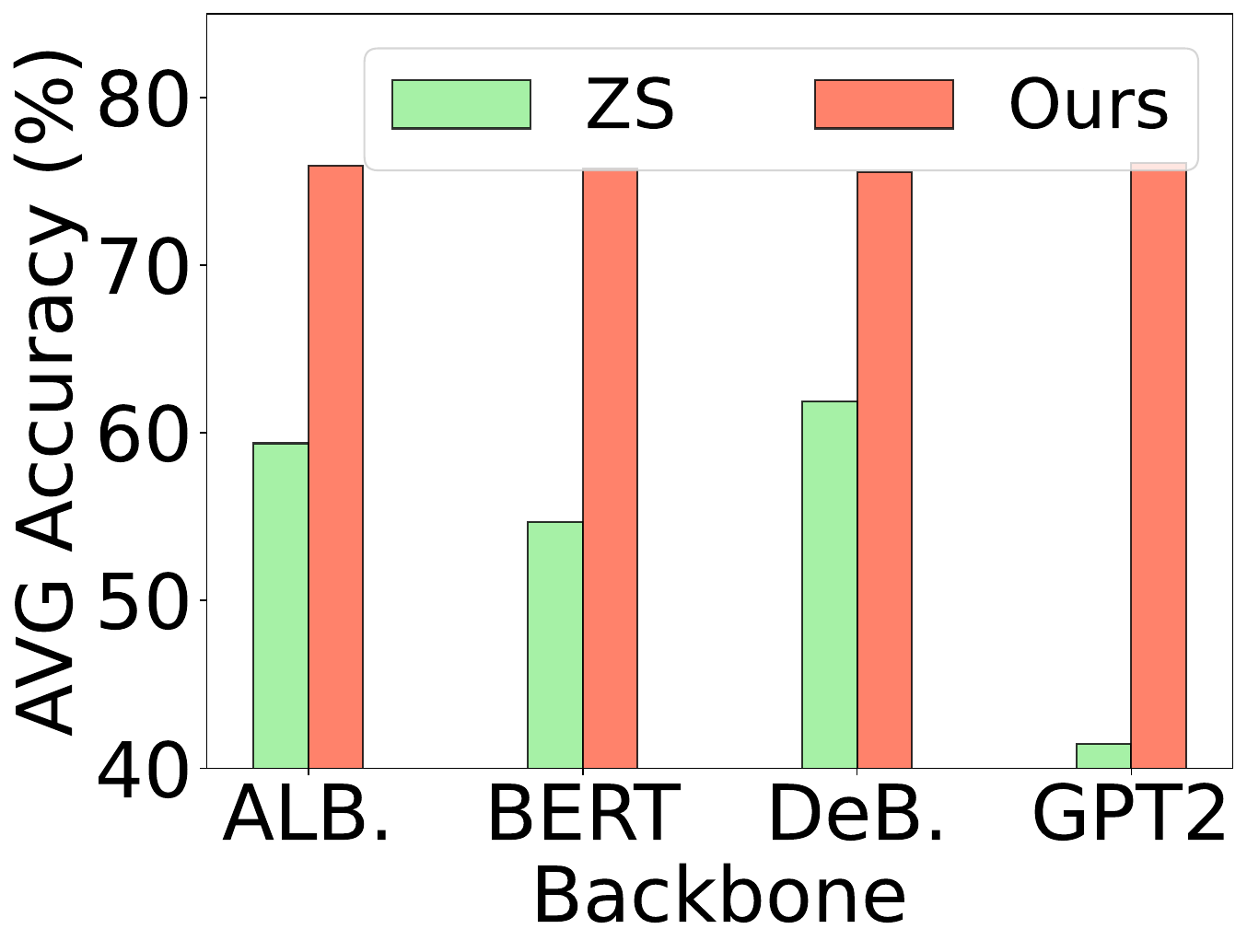}
        \label{fig-main-dis2}
    }
    \subfigure[Data size]{
        \includegraphics[width=0.22\textwidth]{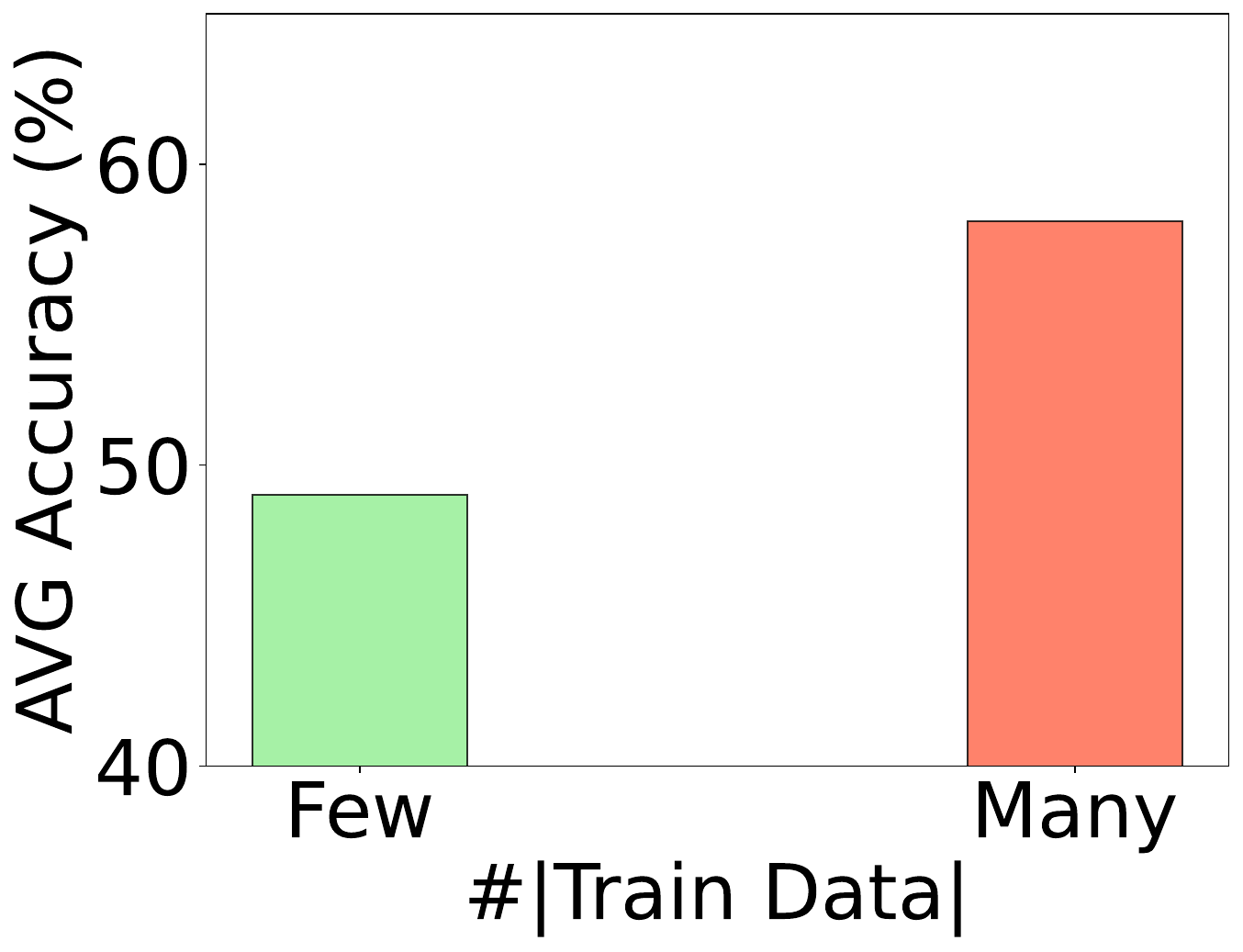}
        \label{fig-main-dis3}
    }
    \subfigure[Optimization order]{
        \includegraphics[width=0.22\textwidth]{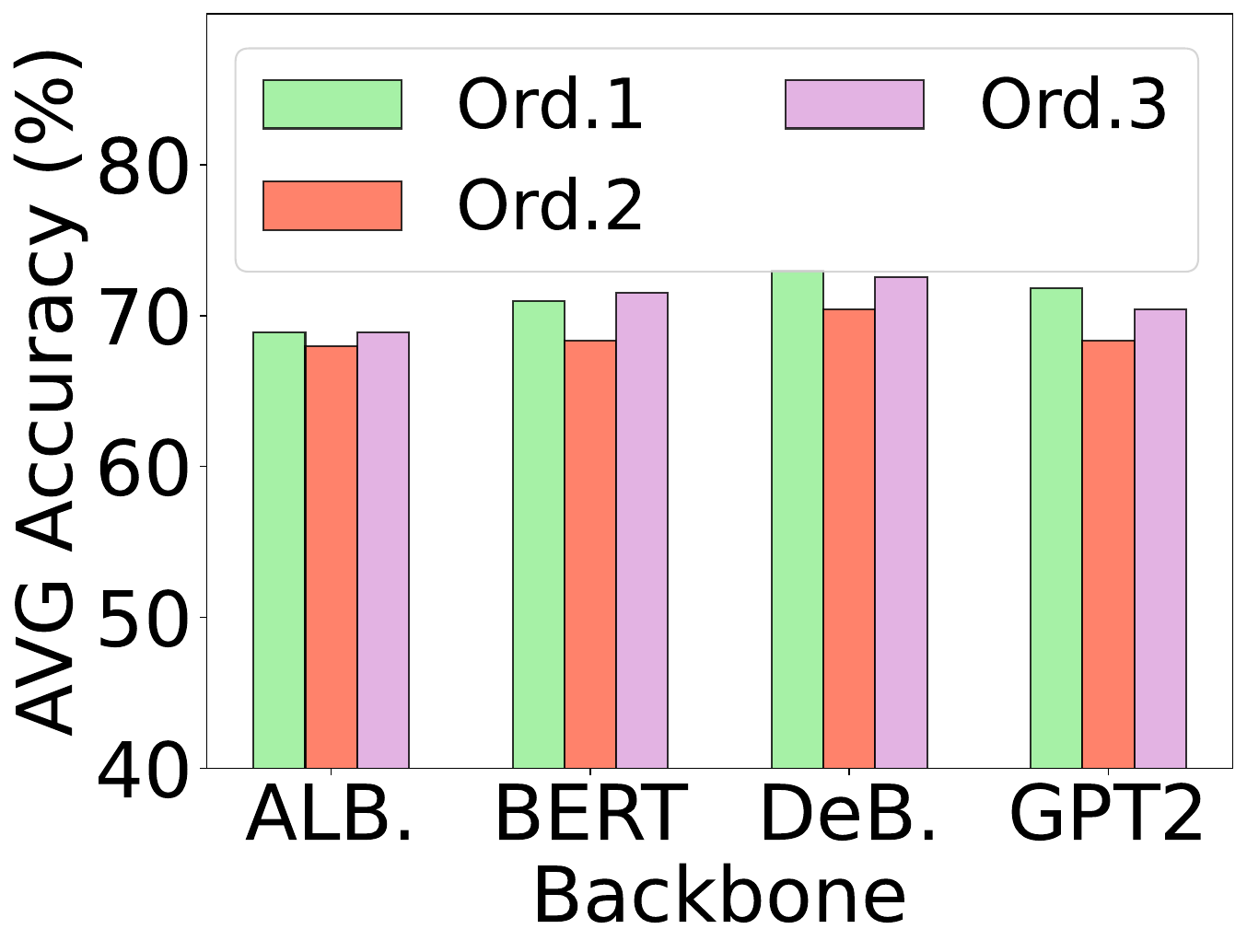}
        \label{fig-main-dis4}
    }
    \vspace{-.1in}
    \caption{More discussions by varying backbones, data splits, data sizes, and optimization order.}
    \label{fig-main-dis}
\end{figure}

\textbf{More insightful analysis.}
1) \textbf{Can stronger backbones bring better performance?}
From \figureautorefname~\ref{fig-main-dis1} and \ref{fig-main-networkp}, we can see that auto-encoders comprised of LSTM can bring better performance with fewer communications (especially for GPT2), which means more suitable backbones can lead to better performance.
2) \textbf{How can data splits influence the performance?}
\figureautorefname~\ref{fig-main-dis2} shows that \method still achieves better performance when using a different parameter for Dirichlet distribution ($0.1$ vs. $0.2$) for NLP data split.
In this more personalized situation, ZS maintains a similar performance while ours performs better.
3) \textbf{More training data, better results?}
As shown in \figureautorefname~\ref{fig-main-dis3}, we choose the APTOS dataset where our method has the worst performance, to evaluate the influence of training data.
We find that more training data can bring further improvements, which is completely consistent with our intuition.
4) \textbf{Can optimization order influence performance?}
We provide three orders for optimization.
Order 1 is what we adopted.
Order 2 is to perform step 2 for all rounds and then perform step 3, which means these two steps are split.
In order 3, we first optimize the encoder, then client-specific embeddings, and finally semantic re-mapping layers, and these parts are iterative.
As shown in \figureautorefname~\ref{fig-main-dis4}, Order 1 and Order 3 can perform slightly better than Order 2, which demonstrates the necessity of joint optimization. 

\textbf{Parameter sensitivity.} 
\figureautorefname~\ref{fig-main-para} provides parameter sensitivity and we obtain following observations.
1) Our method is stable for a wide range of parameters although CGE may lead instability.
2) For most situations, larger learning rates with Adam can bring better performance.
3) \method can achieve further improvement if we finetune hyperparameters more carefully.
For example, we can choose larger learning rates, e.g.$0.5$ or choose more suitable $\rho$ for specific tasks, e.g. $0.05$ for CV.

\begin{figure}[!t]
    \centering
    \subfigure[Learning rate (CV)]{
        \includegraphics[width=0.22\textwidth]{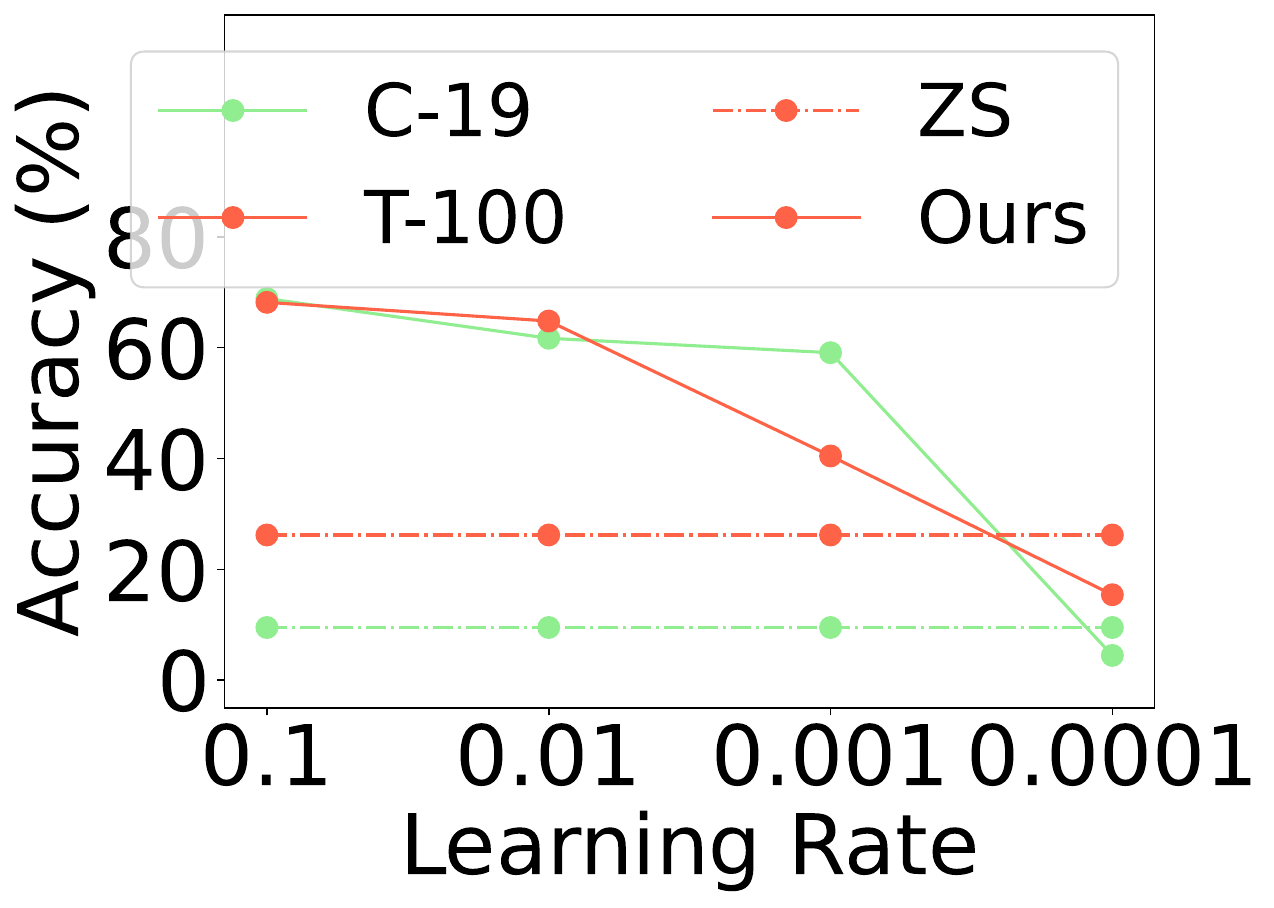}
        \label{fig-main-para1}
    }
    \subfigure[$\rho$ (CV)]{
        \includegraphics[width=0.22\textwidth]{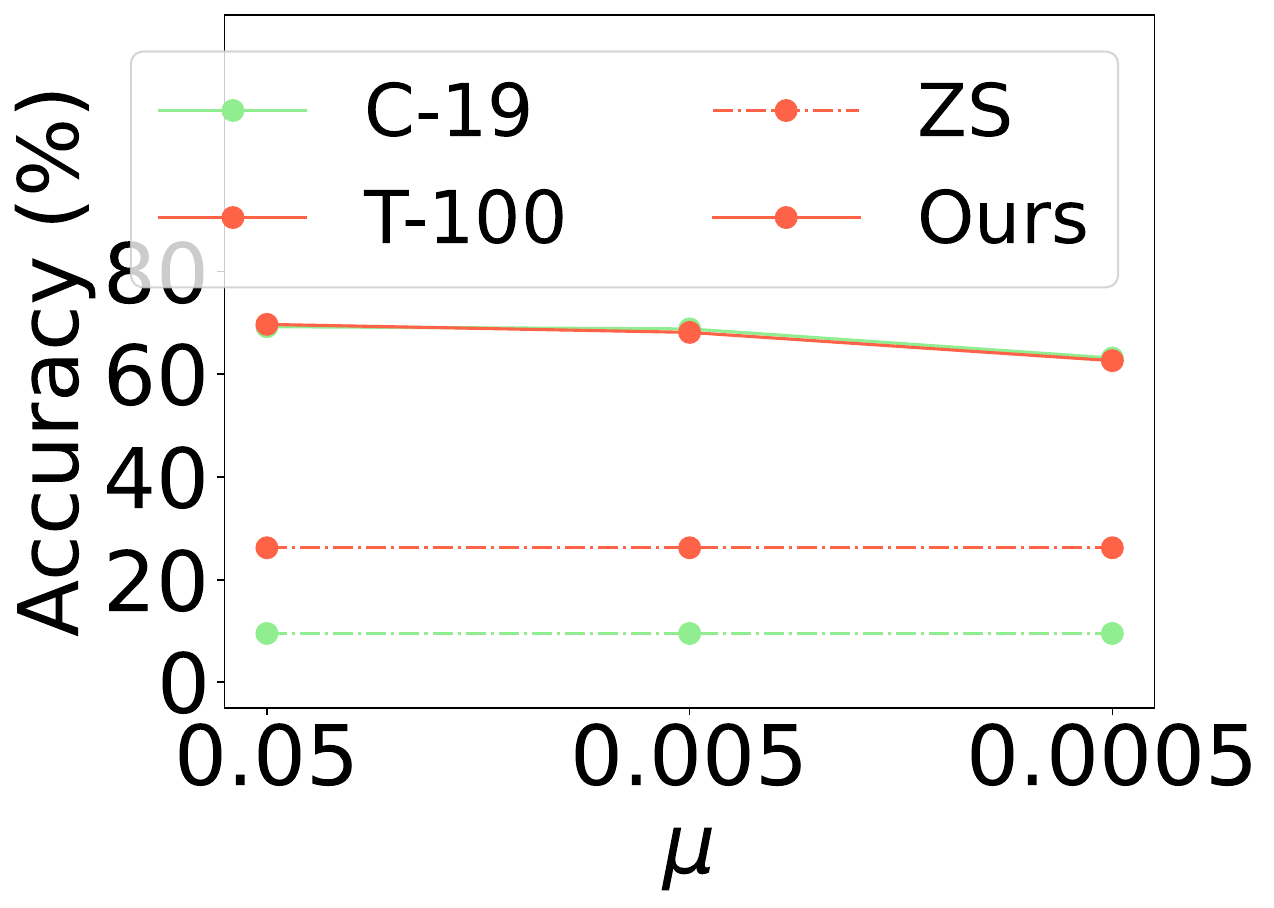}
        \label{fig-main-para2}
    }
    \subfigure[Learning rate (NLP)]{
        \includegraphics[width=0.22\textwidth]{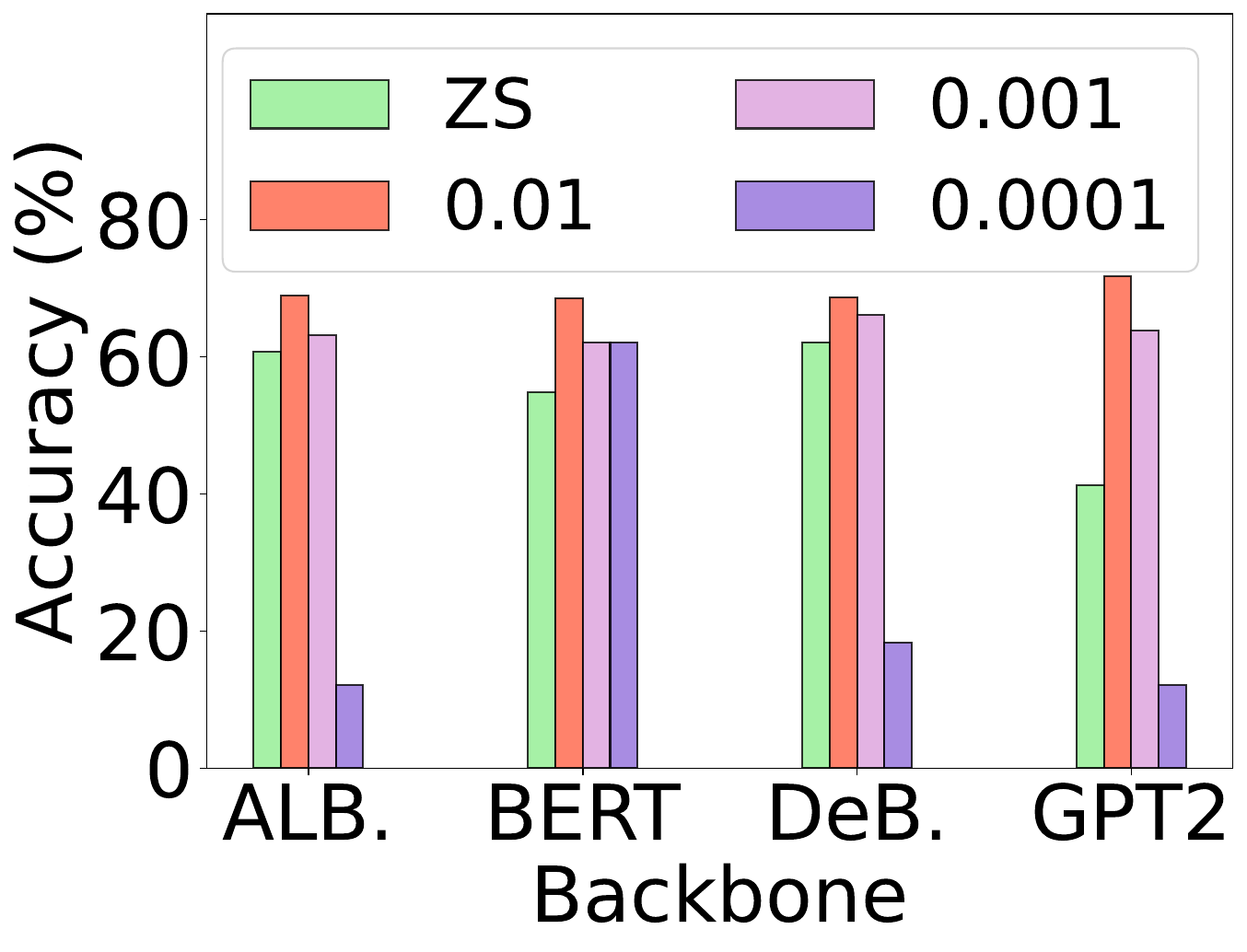}
        \label{fig-main-para3}
    }
    \subfigure[$\rho$ (NLP)]{
        \includegraphics[width=0.22\textwidth]{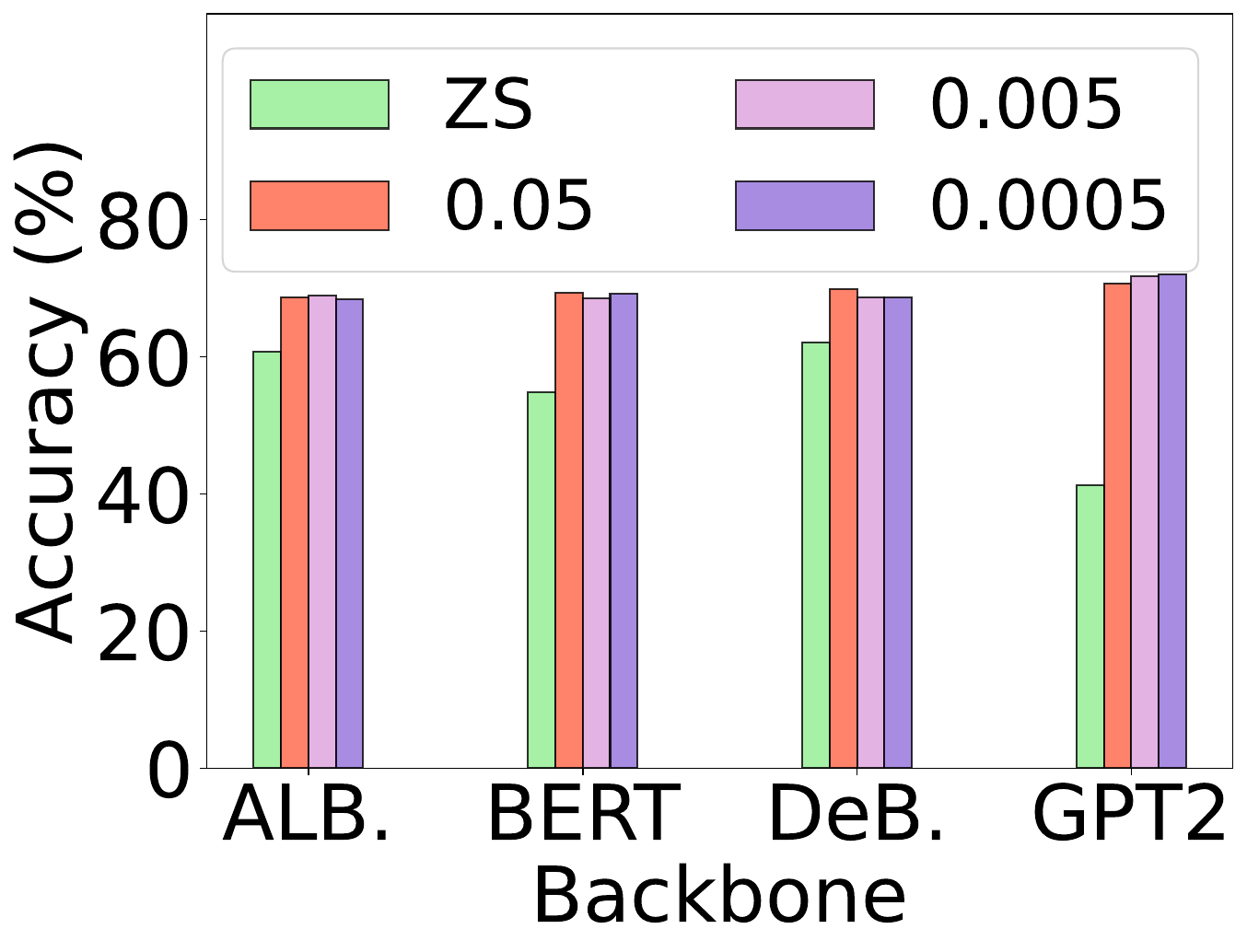}
        \label{fig-main-para4}
    }
    \vspace{-.1in}
    \caption{Parameter sensitivity of computer vision and natural language tasks (on finance data).}
    \label{fig-main-para}
    \vspace{-.1in}
\end{figure}

\section{Conclusion and Discussion}
We proposed \method which can deal with large black-box models in federated learning.
\method mainly consists of two parts, including input surgery and semantic re-mapping.
Moreover, with a client-specific embedding, \method can be more personalized.
We demonstrated its effectiveness on both CV and NLP tasks.
\method achieved remarkable performance without large communication costs and high demands of GPUs.

As the first exploration in black-box federated learning for large foundation models, \method can be more perfect by pursuing the following avenues.
1) Since the stability and speed of CGE influence the performance of step 2, it can be better to seek more stable and efficient optimization algorithms.
2) Foundation models in \method can be enhanced by other ways, e.g., auxiliary models, to serve as a complement to foundation models.
3) Experiments with larger foundation models can be performed for evaluation if computational resources are enough in the future.

\bibliography{ref}
\bibliographystyle{iclr2024_conference}

\newpage
\appendix

\section{Proofs}
\label{sec-app-proof}

The proof is based on the theoretic work of personalized federate learning pioneered in \cite{pillutla2022federated}. Firstly, we will make some assumptions on our models (parameters) akin to those in \cite{pillutla2022federated} with some differences specific to our scenario. One can refer \cite{pillutla2022federated} for more details.

Recall that our loss function is defined as follows:\footnote{As we will regroup parameters later, we use $\mathbf{\hat{u}}$ instead of $\mathbf{u}$ to make symbols consistent.}
\begin{equation}
F(\mathbf{\hat{u}},\mathbf{v}) = {{1}\over {n}} \sum_{i=1}^n F_i(\mathbf{\hat{u}}, \mathbf{v}_i),
\end{equation}
where $F_i(\mathbf{\hat{u}},\mathbf{v}_i) = \mathbb{E}_{\mathbf{x}\sim \mathcal{D}_i}f_i(\mathbf{\hat{u}},\mathbf{v}_i,\mathbf{x})$.
$\mathbf{\hat{u}}$ denotes the sharing parameters, i.e. parameters of $s_i$, while $\mathbf{v}_i$ denotes parameters preserved in each client.
According to the structure of \method, $\mathbf{v}_i$ is also decomposed into two parts: $\mathbf{v}_i = \mathbf{v}_{1,i} \cup \mathbf{v}_{2,i}$ corresponding to parameters of $\hat{\mathbf{z}_i}$ and $r_i$ respectively, and as pointed out in main sections of our paper, we regroup our parameters as follows: for each device $i$, $\mathbf{u}_i:=\mathbf{\hat{u}}\cup \mathbf{v}_{1,i}$, $\mathbf{v}_i:=\mathbf{v}_{2,i}$. 
\begin{assumption}\label{assump1}(Smoothness).
For each device $i=1, 2, \dots n$, the object $F_i$ is smooth, i.e., it is continuously differentiable and,

\begin{enumerate}
\item $\mathbf{u}_i \rightarrow \nabla_\mathbf{u}F_i(\mathbf{u}_i,\mathbf{v}_i)$ is $L_{\mathbf{u}}$-Lipschitz for all $\mathbf{v}_i$ ,
\item $\mathbf{v}_i \rightarrow \nabla_\mathbf{v}F_i(\mathbf{u}_i,\mathbf{v}_i)$ is $L_{\mathbf{v}}$-Lipschitz for all $\mathbf{u}_i$ ,

\item $\mathbf{v}_i \rightarrow \nabla_\mathbf{u}F_i(\mathbf{u}_i,\mathbf{v}_i)$ is $L_{\mathbf{uv}}$-Lipschitz for all $\mathbf{u}_i$,  and,
\item $\mathbf{u}_i \rightarrow \nabla_\mathbf{v}F_i(\mathbf{u}_i,\mathbf{v}_i)$ is $L_{\mathbf{vu}}$-Lipschitz for all $\mathbf{v}_i$.
\end{enumerate}
Further, we assume for some $\chi > 0$ that 
\begin{equation}
max\{L_{\mathbf{uv}}, L_{\mathbf{vu}}\} \le \chi\sqrt{L_{\mathbf{u}}L_{\mathbf{v}}}.
\end{equation}

\end{assumption}
\begin{assumption}\label{assump2}(Bounded Gradient).
For each device $i=1,2,\dots n$, the object $F_i$ has bounded gradient, that is, there exists $M_{\mathbf{u}},M_{\mathbf{v}} > 0$ such that
\begin{eqnarray}
|\nabla_\mathbf{u} f_i(\mathbf{u}_i,\mathbf{v}_i,\mathbf{x})| < M_{\mathbf{u}}, \quad and \quad |\nabla_\mathbf{v} f_i(\mathbf{u}_i,\mathbf{v}_i,\mathbf{x})| < M_{\mathbf{v}}  \quad \forall \mathbf{x}\in \mathcal{D}_i.
\end{eqnarray}
\end{assumption}
\begin{assumption}\label{assump3}(Bounded Variance).
Let $\mathcal{D}_i$ denote a probability distribution over the data space $\mathcal{Z}$ on device $i$. There exist functions $G_{i,\mathbf{u}}$ and $G_{i,\mathbf{v}}$ which are unbiased estimates of $\nabla_\mathbf{u}F_i$ and $\nabla_\mathbf{v}F_i$ respectively. That is, for all $\mathbf{u}_i,\mathbf{v}_i$:
\begin{equation}
\mathbb{E}_{\mathbf{x}\sim \mathcal{D}_i}[G_{i,\mathbf{u}}(\mathbf{u}_i,\mathbf{v}_i,\mathbf{x})] = \nabla_\mathbf{u}F_i(\mathbf{u}_i,\mathbf{v}_i), \quad and \quad \mathbb{E}_{\mathbf{x}\sim \mathcal{D}_i}[G_{i,\mathbf{v}}(\mathbf{u}_i,\mathbf{v}_i,\mathbf{x})] = \nabla_\mathbf{v}F_i(\mathbf{u}_i,\mathbf{v}_i).
\end{equation}
Furthermore, the variance of these estimators is at most $\sigma_{\mathbf{u}}^2$ and $\sigma_{\mathbf{v}}^2$ respectively. That is,
\begin{eqnarray}
\mathbb{E}_{\mathbf{x}\sim \mathcal{D}_i}||G_{i,\mathbf{u}}(\mathbf{u}_i,\mathbf{v}_i,\mathbf{x}) - \nabla_\mathbf{u}F_i(\mathbf{u}_i,\mathbf{v}_i)||^2 \le \sigma^2_{\mathbf{u}},\\
\mathbb{E}_{\mathbf{x}\sim \mathcal{D}_i}||G_{i,\mathbf{v}}(\mathbf{u}_i,\mathbf{v}_i,\mathbf{x}) - \nabla_\mathbf{v}F_i(\mathbf{u}_i,\mathbf{v}_i)||^2 \le \sigma^2_{\mathbf{v}}.
\end{eqnarray}
\end{assumption}
In this work, we usually take the particular form $G_{i,\mathbf{u}}(\mathbf{u}_i,\mathbf{v}_i,\mathbf{x}) = \nabla_\mathbf{u} f_i((\mathbf{u}_i,\mathbf{v}_i),\mathbf{x})$, which is the gradient of the loss on datatpoint $\mathbf{x}\sim \mathcal{D}_i$ under the model $(\mathbf{u}_i,\mathbf{v}_i)$, and similarly for $G_{i,\mathbf{v}}$.

As our model has a black box LLM, we can't get the gradient of parameters in this part. So we resort to zero-order optimization partially. In particular, we take differences of function values to estimate unknown gradients in that part. The resulting method is dubbed stochastic difference descent method. More precisely,
Let $G$ be a continuous function on $\mathbf{x}$, $\boldsymbol{\rho}$ be a fixed vector with the same dimension of $\mathbf{x}$ and its norm $||\boldsymbol{\rho}||=\rho$, we denote $\Delta_{\boldsymbol{\rho},\mathbf{x}} G({x}):=(G(\mathbf{x}+\boldsymbol{\rho}) - G(\mathbf{x}))/\rho$ or $(G(\mathbf{x}+\boldsymbol{\rho})+G(\mathbf{x}-\boldsymbol{\rho}) - 2G(\mathbf{x}))/(2\rho)$ to be the difference of $G$ at point $\mathbf{x}$ with step $\boldsymbol{\rho}$. 
Then the way we update $\mathbf{x}$ is similar to that in stochastic gradient descent method:
\begin{equation}
\mathbf{x}_{k+1} = \mathbf{x}_k - \gamma \Delta_{\boldsymbol{\rho},\mathbf{x}}G(\mathbf{x})|_{\mathbf{x}=\mathbf{x}_k}.
\end{equation}
Let us denote $\tilde{G}_{i,\mathbf{u}}(\mathbf{u}_i,\mathbf{v}_i,\mathbf{x}):=\Delta_{\boldsymbol{\rho},\mathbf{u}}f_i((\mathbf{u}_i,\mathbf{v}_i),\mathbf{x}) , \tilde{G}_{i,\mathbf{v}}(\mathbf{u}_i,\mathbf{v}_i,\mathbf{x}):=\Delta_{\boldsymbol{\rho},\mathbf{v}}f_i((\mathbf{u}_i,\mathbf{v}_i),\mathbf{x}) $. Unlike \assumptionname~\ref{assump3}, $\tilde{G}_{i,\mathbf{u}}(\mathbf{u}_i,\mathbf{v}_i,\mathbf{x})$(resp.$\tilde{G}_{i,\mathbf{v}}(\mathbf{u}_i,\mathbf{v}_i,\mathbf{x})$) is not an unbiased estimation of $\nabla_{\mathbf{u}} F_i(\mathbf{u}_i,\mathbf{v}_i)$ (resp.$\nabla_{\mathbf{v}} F_i(\mathbf{u}_i,\mathbf{v}_i)$). However, under \assumptionname~\ref{assump1},\assumptionname~\ref{assump2} and \assumptionname~\ref{assump3}, we have the following estimates:
\begin{lemma}(Bounded 1st and 2nd moments).
\label{Bounded 1st and 2nd moments}
\begin{align}
||\mathbb{E}_{\mathbf{x}\sim \mathcal{D}_i}\tilde{G}_{i,\mathbf{u}}(\mathbf{u}_i,\mathbf{v}_i,\mathbf{x}) - \nabla_\mathbf{u}F_i(\mathbf{u}_i,\mathbf{v}_i)|| \le L_{\mathbf{u}}\rho, \label{eqa-app-lemma1-1} \\ 
||\mathbb{E}_{\mathbf{x}\sim \mathcal{D}_i}\tilde{G}_{i,\mathbf{v}}(\mathbf{u}_i,\mathbf{v}_i,\mathbf{x}) - \nabla_\mathbf{v}F_i(\mathbf{u}_i,\mathbf{v}_i)|| \le L_{\mathbf{u}}\rho. \label{eqa-app-lemma1-2} 
\end{align}
Furthermore,
\begin{align}
\mathbb{E}_{\mathbf{x}\sim \mathcal{D}_i}||\tilde{G}_{i,\mathbf{u}}(\mathbf{u}_i,\mathbf{v}_i,\mathbf{x}) - \nabla_\mathbf{u}F_i(\mathbf{u}_i,\mathbf{v}_i))||^2 \le 2L_{\mathbf{u}}^2\rho^2 + 2\sigma_{\mathbf{u}}^2, \label{eqa-app-lemma1-3}  \\ 
\mathbb{E}_{\mathbf{x}\sim \mathcal{D}_i}||\tilde{G}_{i,\mathbf{v}}(\mathbf{u}_i,\mathbf{v}_i,\mathbf{x}) - \nabla_\mathbf{v}F_i(\mathbf{u}_i,\mathbf{v}_i))||^2 \le 2L_{\mathbf{v}}^2\rho^2 + 2\sigma_{\mathbf{v}}^2. \label{eqa-app-lemma1-4} 
\end{align}
\end{lemma}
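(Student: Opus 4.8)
The plan is to handle the two bias bounds and the two second-moment bounds in parallel, in each case reducing the finite-difference estimator to the true (per-device or per-sample) gradient through a single first-order Taylor / fundamental-theorem-of-calculus estimate powered entirely by the Lipschitz-gradient hypothesis of Assumption~\ref{assump1}.

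For the bias bounds (\ref{eqa-app-lemma1-1}) and (\ref{eqa-app-lemma1-2}), the first move is to push the expectation through the difference operator: since $\Delta_{\boldsymbol{\rho},\mathbf{u}}$ is a fixed finite linear combination of evaluations of $f_i$, linearity of expectation gives $\mathbb{E}_{\mathbf{x}}\tilde{G}_{i,\mathbf{u}}(\mathbf{u}_i,\mathbf{v}_i,\mathbf{x}) = \Delta_{\boldsymbol{\rho},\mathbf{u}}F_i(\mathbf{u}_i,\mathbf{v}_i)$, so the estimator's bias is exactly the deterministic discretization error of $F_i$. Writing $F_i(\mathbf{u}+\boldsymbol{\rho}) - F_i(\mathbf{u}) = \int_0^1 \langle \nabla_{\mathbf{u}}F_i(\mathbf{u}+t\boldsymbol{\rho}),\boldsymbol{\rho}\rangle\,dt$, dividing by $\rho$ and subtracting the directional derivative leaves $\int_0^1 \langle \nabla_{\mathbf{u}}F_i(\mathbf{u}+t\boldsymbol{\rho}) - \nabla_{\mathbf{u}}F_i(\mathbf{u}), \boldsymbol{\rho}/\rho\rangle\,dt$, whose magnitude is at most $\int_0^1 L_{\mathbf{u}}\|t\boldsymbol{\rho}\|\,dt = L_{\mathbf{u}}\rho/2 \le L_{\mathbf{u}}\rho$ by Cauchy--Schwarz and the $L_{\mathbf{u}}$-Lipschitzness of $\nabla_{\mathbf{u}}F_i$. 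For the symmetric version of $\Delta$ the linear terms cancel and the error is even smaller, $O(\rho^2)$, so the stated $L_{\mathbf{u}}\rho$ bound covers both choices; (\ref{eqa-app-lemma1-2}) is the verbatim computation with $L_{\mathbf{v}}$ replacing $L_{\mathbf{u}}$.

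For the second-moment bounds (\ref{eqa-app-lemma1-3}) and (\ref{eqa-app-lemma1-4}), I would split the error at the per-sample true gradient, $\tilde{G}_{i,\mathbf{u}} - \nabla_{\mathbf{u}}F_i = (\tilde{G}_{i,\mathbf{u}} - \nabla_{\mathbf{u}}f_i) + (\nabla_{\mathbf{u}}f_i - \nabla_{\mathbf{u}}F_i)$, and apply the elementary inequality $\|a+b\|^2 \le 2\|a\|^2 + 2\|b\|^2$. The second summand has expected squared norm at most $\sigma_{\mathbf{u}}^2$ straight from the bounded-variance hypothesis (Assumption~\ref{assump3}), contributing the $2\sigma_{\mathbf{u}}^2$ term. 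For the first summand I would rerun the Taylor estimate of the previous paragraph pointwise in $\mathbf{x}$ --- applied to $f_i(\cdot,\mathbf{v}_i,\mathbf{x})$ instead of to the average $F_i$ --- to obtain $\|\tilde{G}_{i,\mathbf{u}}(\mathbf{u}_i,\mathbf{v}_i,\mathbf{x}) - \nabla_{\mathbf{u}}f_i(\mathbf{u}_i,\mathbf{v}_i,\mathbf{x})\| \le L_{\mathbf{u}}\rho$ for each $\mathbf{x}$, hence expected squared norm at most $L_{\mathbf{u}}^2\rho^2$ and the $2L_{\mathbf{u}}^2\rho^2$ term. Adding the two pieces gives (\ref{eqa-app-lemma1-3}), and (\ref{eqa-app-lemma1-4}) follows identically with the $\mathbf{v}$-constants.

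The hard part will be the per-sample Taylor bound invoked in the second-moment step: it needs the gradient of each realization $f_i(\cdot,\mathbf{v}_i,\mathbf{x})$, not only that of the averaged $F_i$, to be $L_{\mathbf{u}}$-Lipschitz, which is marginally stronger than the literal statement of Assumption~\ref{assump1}. I would resolve this by adopting the standard convention that the per-sample smoothness constants are uniformly bounded by the same $L_{\mathbf{u}},L_{\mathbf{v}}$, consistent with the choice $G_{i,\mathbf{u}} = \nabla_{\mathbf{u}}f_i$ already made under Assumption~\ref{assump3}. A further bookkeeping point, easily dispatched, is that the coordinate-wise estimator assembles per-direction differences into a vector, so matching the exact constant in $L_{\mathbf{u}}\rho$ amounts to tracking how the per-coordinate errors combine; the governing mechanism in every case remains the single Lipschitz-gradient Taylor estimate, and the interchange of expectation with the finite difference is immediate since each estimator is a finite sum of integrable evaluations.
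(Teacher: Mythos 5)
Your proof is correct and follows the same essential mechanism as the paper's: a single Lipschitz-gradient Taylor/mean-value estimate to bound the finite-difference error by $L\rho$, followed by the decomposition $\tilde{G}_{i,\mathbf{u}}-\nabla_{\mathbf{u}}F_i = (\tilde{G}_{i,\mathbf{u}}-\nabla_{\mathbf{u}}f_i)+(\nabla_{\mathbf{u}}f_i-\nabla_{\mathbf{u}}F_i)$ with $\|a+b\|^2\le 2\|a\|^2+2\|b\|^2$ and Assumption~\ref{assump3} for the second moments. The one place you genuinely diverge is the bias bound: you first exchange $\mathbb{E}_{\mathbf{x}}$ with the finite-difference operator so that the bias becomes the deterministic discretization error of $F_i$ itself, and then apply the fundamental theorem of calculus to $F_i$; the paper instead applies a component-wise mean value theorem to each realization $f_i(\cdot,\mathbf{v}_i,\mathbf{x})$, bounds the pointwise error by $L_{\mathbf{u}}\rho$, and lets the unbiasedness of $\nabla_{\mathbf{u}}f_i$ kill the remaining term in expectation. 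Your route has the advantage of needing only the smoothness of the averaged objective $F_i$ for the first-moment bounds, which is the literal content of Assumption~\ref{assump1}, and it yields the sharper constant $L_{\mathbf{u}}\rho/2$ (indeed $O(\rho^2)$ for the symmetric difference). For the second-moment bounds both you and the paper need the per-sample gradient $\nabla_{\mathbf{u}}f_i(\cdot,\mathbf{v}_i,\mathbf{x})$ to be $L_{\mathbf{u}}$-Lipschitz uniformly in $\mathbf{x}$; the paper invokes this silently when it writes $\|\nabla_{\mathbf{u}}f_i(\mathbf{u}'_i,\mathbf{v}_i,\mathbf{x})-\nabla_{\mathbf{u}}f_i(\mathbf{u}_i,\mathbf{v}_i,\mathbf{x})\|\le L_{\mathbf{u}}\|\mathbf{u}'_i-\mathbf{u}_i\|$, whereas you flag it explicitly as marginally stronger than the stated assumption and adopt it as a convention --- that is the more honest reading, and no fix beyond the convention you propose is needed.
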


\begin{proof}
For each $\mathbf{x}\in \mathcal{D}_i$ there exists a $\mathbf{u}'_i$ between $\mathbf{u}_i$ and $\mathbf{u}_i+\boldsymbol{\rho}$ in each component such that $(\nabla_\mathbf{u}f_i(\mathbf{u}_i+\boldsymbol{\rho},\mathbf{v}_i,\mathbf{x}) - \nabla_\mathbf{u}f_i(\mathbf{u}_i,\mathbf{v}_i,\mathbf{x})) /\rho = \nabla_\mathbf{u}f_i(\mathbf{u}'_i,\mathbf{v}_i,\mathbf{x})$. Then by the smoothness assumption:
\begin{equation}\label{1stbound1}
||\nabla_\mathbf{u}f_i(\mathbf{u}'_i,\mathbf{v}_i,\mathbf{x}) - \nabla_\mathbf{u}f_i(\mathbf{u}_i,\mathbf{v}_i,\mathbf{x})|| \le L_\mathbf{u}||\mathbf{u}'_i-\mathbf{u}_i|| \le L_{\mathbf{u}}||\boldsymbol{\rho}|| = L_\mathbf{u}\rho.
\end{equation}
Thus 
\begin{align}
& \tilde{G}_{i,\mathbf{u}}(\mathbf{u}_i,\mathbf{v}_i,\mathbf{x}) - \nabla_\mathbf{u}F_i(\mathbf{u}_i,\mathbf{v}_i) = \\ &\tilde{G}_{i,\mathbf{u}}(\mathbf{u}_i,\mathbf{v}_i,\mathbf{x}) - \nabla_\mathbf{u}f_i((\mathbf{u}_i,\mathbf{v}_i),\mathbf{x}) + \nabla_\mathbf{u}f_i((\mathbf{u}_i,\mathbf{v}_i),\mathbf{x}) - \nabla_\mathbf{u}F_i(\mathbf{u}_i,\mathbf{v}_i).
\end{align}
Taking expectation on both sides and applying \eqref{1stbound1} and $\mathbb{E}_{\mathbf{x} \sim \mathcal{D}_i}\nabla_\mathbf{u}f_i((\mathbf{u}_i,\mathbf{v}_i),\mathbf{x}) = \nabla_\mathbf{u}F_i(\mathbf{u}_i,\mathbf{v}_i)$, we get
\begin{eqnarray}
\mathbb{E}_{\mathbf{x}\sim \mathcal{D}_i}\tilde{G}_{i,\mathbf{u}}(\mathbf{u}_i,\mathbf{v}_i,\mathbf{x}) - \nabla_\mathbf{u}F_i(\mathbf{u}_i,\mathbf{v}_i) = \mathbb{E}_{\mathbf{x}\sim \mathcal{D}_i}\tilde{G}_{i,\mathbf{u}}(\mathbf{u}_i,\mathbf{v}_i,\mathbf{x}) - \nabla_\mathbf{u}f_i((\mathbf{u}_i,\mathbf{v}_i),\mathbf{x}) \\
\le \mathbb{E}_{\mathbf{x}\sim \mathcal{D}_i}|\tilde{G}_{i,\mathbf{u}}(\mathbf{u}_i,\mathbf{v}_i,\mathbf{x}) - \nabla_\mathbf{u}f_i((\mathbf{u}_i,\mathbf{v}_i),\mathbf{x})| \le L_{\mathbf{u}}\rho.
\end{eqnarray}
Taking absolute values on both sides completes the proof of \eqref{eqa-app-lemma1-1}. The same is true for \eqref{eqa-app-lemma1-2}.

For \eqref{eqa-app-lemma1-3} and \eqref{eqa-app-lemma1-4}, we note that by Cauchy-Schwartz inequality
\begin{align}
||\tilde{G}_{i,\mathbf{u}}(\mathbf{u}_i,\mathbf{v}_i,\mathbf{x}) - \nabla_\mathbf{u}F_i(\mathbf{u}_i,\mathbf{v}_i)||^2 & \le 2||\tilde{G}_{i,\mathbf{u}}(\mathbf{u}_i,\mathbf{v}_i,\mathbf{x}) - \nabla_{\mathbf{u}} f_i(\mathbf{u}_i,\mathbf{v}_i,\mathbf{x})||^2\\
& + 2|| \nabla_{\mathbf{u}} f_i(\mathbf{u}_i,\mathbf{v}_i,\mathbf{x}) - \nabla_\mathbf{u} F_i(\mathbf{u}_i,\mathbf{v}_i)||^2.
\end{align}
Taking expectation on both sides and using \eqref{1stbound1} and \assumptionname~\ref{assump3} complete the proof. The same is true for \eqref{eqa-app-lemma1-4}.

\end{proof}
As our model has the form $F = r \circ J \circ s$ with $s$ corresponding to the encoder, $J$ corresponding to decoder and black-box LLM, and $r$ corresponding to linear remapping, the following corollary is useful.
\begin{corollary}
Let $s_i, J_i, r_i$ be three continuously differentiable functions such that $f_i(\mathbf{u}_i,\mathbf{v}_i,\mathbf{x}) = r_i \circ J_i \circ s_i$, then we have 
\begin{eqnarray}
||\mathbb{E}_{\mathbf{x}\sim \mathcal{D}_i}\nabla r_{i,\mathbf{u}} \circ \Delta_{\boldsymbol{\rho}} J_{i,\mathbf{u}} \circ \nabla_\mathbf{u} s_{i,\mathbf{u}} (\mathbf{u}_i,\mathbf{v}_i,\mathbf{x}) - \nabla_\mathbf{u} F_i(\mathbf{u}_i,\mathbf{v}_i)|| \le M_{\mathbf{u}}^2L_{\mathbf{u}}\rho, \\ 
||\mathbb{E}_{\mathbf{x}\sim \mathcal{D}_i}\nabla r_{i,\mathbf{u}} \circ \Delta_{\boldsymbol{\rho}} J_{i,\mathbf{u}} \circ \nabla_{\mathbf{v}_i} s_{i,\mathbf{u}} (\mathbf{u}_i,\mathbf{v}_i,\mathbf{x}) - \nabla_\mathbf{v} F_i(\mathbf{u}_i,\mathbf{v}_i)|| \le M_{\mathbf{v}}^2L_{\mathbf{v}}\rho.
\end{eqnarray}
Furthermore,
\begin{eqnarray}
\mathbb{E}_{\mathbf{x}\sim \mathcal{D}_i}||\nabla r_{i,\mathbf{u}} \circ \Delta_{\boldsymbol{\rho}} J_{i,\mathbf{u}} \circ \nabla_\mathbf{u} s_{i,\mathbf{u}} (\mathbf{u}_i,\mathbf{v}_i,\mathbf{x}) - \nabla_\mathbf{u} F_i(\mathbf{u}_i,\mathbf{v}_i)||^2 \le 2M_{\mathbf{u}}^4L_{\mathbf{u}}^2\rho^2+2\sigma_{\mathbf{u}}^2, \\  \mathbb{E}_{\mathbf{x}\sim \mathcal{D}_i}||\nabla r_{i,\mathbf{u}} \circ \Delta_{\boldsymbol{\rho}} J_{i,\mathbf{u}} \circ \nabla_{\mathbf{v}_i} s_{i,\mathbf{u}} (\mathbf{u}_i,\mathbf{v}_i,\mathbf{x}) - \nabla_\mathbf{v} F_i(\mathbf{u}_i,\mathbf{v}_i)||^2 \le 2M_{\mathbf{v}}^4L_{\mathbf{v}}^2\rho^2+2\sigma_{\mathbf{v}}^2.
\end{eqnarray}
\end{corollary}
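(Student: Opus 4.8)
The plan is to reduce this corollary to the preceding Lemma~\ref{Bounded 1st and 2nd moments} by isolating where the zeroth-order approximation actually enters. Since $f_i = r_i \circ J_i \circ s_i$ and only the middle block $J_i$ (decoder plus black-box LLM) is inaccessible to back-propagation, the chain rule gives the exact gradient as a matrix product of Jacobians, $\nabla_{\mathbf{u}} F_i = \mathbb{E}_{\mathbf{x}\sim\mathcal{D}_i}[\nabla r_{i,\mathbf{u}} \circ \nabla J_{i,\mathbf{u}} \circ \nabla_{\mathbf{u}} s_{i,\mathbf{u}}]$, and the surrogate used by the algorithm differs from it only by replacing the central Jacobian $\nabla J_{i,\mathbf{u}}$ with the finite difference $\Delta_{\boldsymbol{\rho}} J_{i,\mathbf{u}}$. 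Hence the per-sample bias factors as $\nabla r_{i,\mathbf{u}} \circ (\Delta_{\boldsymbol{\rho}} J_{i,\mathbf{u}} - \nabla J_{i,\mathbf{u}}) \circ \nabla_{\mathbf{u}} s_{i,\mathbf{u}}$, which is the object I would bound.

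I would bound the central factor exactly as in the Lemma's proof: by the componentwise mean value theorem there is an intermediate point at which the difference quotient coincides with the true derivative of $J_i$, so smoothness (Assumption~\ref{assump1}) yields $\| \Delta_{\boldsymbol{\rho}} J_{i,\mathbf{u}} - \nabla J_{i,\mathbf{u}} \| \le L_{\mathbf{u}}\rho$. Submultiplicativity of the operator norm, together with the bounded-gradient Assumption~\ref{assump2} controlling the outer and inner Jacobians $\nabla r_{i,\mathbf{u}}$ and $\nabla_{\mathbf{u}} s_{i,\mathbf{u}}$ by $M_{\mathbf{u}}$, then produces the factor $M_{\mathbf{u}}^2$. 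Taking expectations and invoking the unbiasedness part of Assumption~\ref{assump3} (so the purely stochastic part of the error has zero mean) leaves only the finite-difference bias, giving the first-moment estimate $M_{\mathbf{u}}^2 L_{\mathbf{u}}\rho$; the $\mathbf{v}$ bound is identical after swapping $\nabla_{\mathbf{u}} s$ for $\nabla_{\mathbf{v}} s$ and the constants $M_{\mathbf{u}}, L_{\mathbf{u}}$ for $M_{\mathbf{v}}, L_{\mathbf{v}}$.

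For the second-moment bounds I would again mirror the Lemma: decompose the total error into the finite-difference bias plus the genuine stochastic fluctuation $\nabla_{\mathbf{u}} f_i - \nabla_{\mathbf{u}} F_i$, apply $\|a+b\|^2 \le 2\|a\|^2 + 2\|b\|^2$ (Cauchy-Schwartz), bound the first term by the square of the first-moment estimate, namely $M_{\mathbf{u}}^4 L_{\mathbf{u}}^2\rho^2$, and the second by the variance $\sigma_{\mathbf{u}}^2$ from Assumption~\ref{assump3}. This yields $2M_{\mathbf{u}}^4 L_{\mathbf{u}}^2\rho^2 + 2\sigma_{\mathbf{u}}^2$, and symmetrically for $\mathbf{v}$.

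The main obstacle I anticipate is the bookkeeping forced by the composed, matrix-product structure. Assumption~\ref{assump2} literally bounds the full gradient $\nabla_{\mathbf{u}} f_i$, not the separate Jacobian factors $\nabla r_{i,\mathbf{u}}$ and $\nabla_{\mathbf{u}} s_{i,\mathbf{u}}$, so one must argue (or explicitly assume) that each factor is itself bounded by $M_{\mathbf{u}}$ before submultiplicativity can extract the $M_{\mathbf{u}}^2$. One must also justify the multivariate mean value theorem componentwise---precisely the subtlety the Lemma's proof flagged with its ``there exists $\mathbf{u}'_i$ between $\dots$ in each component''---and confirm that the finite-difference error applied to $J_i$ \emph{alone}, rather than to the whole composite $f_i$, remains governed by the same Lipschitz constant $L_{\mathbf{u}}$.
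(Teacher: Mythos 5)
Your proposal is correct and takes essentially the same route as the paper, whose entire proof of this corollary is the single sentence ``Using the bounded gradient assumption and following the steps in the proof of Lemma~1 completes the proof''---i.e.\ exactly the reduction you describe: factor the bias as $\nabla r_{i,\mathbf{u}} \circ (\Delta_{\boldsymbol{\rho}} J_{i,\mathbf{u}} - \nabla J_{i,\mathbf{u}}) \circ \nabla_{\mathbf{u}} s_{i,\mathbf{u}}$, bound the middle block by $L_{\mathbf{u}}\rho$ as in the lemma, and extract $M_{\mathbf{u}}^2$ from the outer and inner Jacobians. The obstacles you flag (that Assumption~2 bounds the composite gradient rather than the individual Jacobian factors, and that the smoothness constant must be attributed to $J_i$ alone) are genuine, but the paper's one-line proof does not resolve them either, so your write-up is if anything more careful than the original.
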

\begin{proof}
Using the bounded gradient assumption and following the steps in the proof of \lemmaname~\ref{Bounded 1st and 2nd moments} completes the proof.
\end{proof}
Finally, we make a gradient diversity assumption.
\begin{assumption}\label{assump4}(Partial Gradient Diversity).
There exists $\delta \ge 0$ and $\mu \ge 0$ such that for all $\mathbf{u}$ and $\mathbf{v}$,
\begin{equation}
\frac{1}{n} \sum_{i=1}^n ||\nabla_\mathbf{u}F_i(\mathbf{u}_i,\mathbf{v}_i) - \nabla_\mathbf{u}F(\mathbf{u},\mathbf{v})||^2 \le \delta^2 + \mu^2||\nabla_\mathbf{u}F(\mathbf{u},\mathbf{v})||^2.
\end{equation}
\end{assumption}
Please refer \cite{pillutla2022federated} for more background on some of these assumptions.
\subsection{Convergence Analysis of \method}

We give the error bounds results of \algorithmname~\ref{alg:zoopfl}, thus theoretically establishing the convergence property.

In our case, we rename the parameters so that $\mathbf{u}:=\mathbf{\hat{u}} \cup \mathbf{v}_1, \mathbf{v}:= \mathbf{v}_2$. As per Appendix A.3 in \cite{pillutla2022federated}, we use the constants
\begin{align}
\sigma^2_{alt,1} & =  \frac{\delta^2}{L_{\mathbf{u}}}\left (1 - \frac{m}{n}\right ) + \frac{2M_{\mathbf{u}}^4L_{\mathbf{u}}^2\rho^2 + 2\sigma^2_{\mathbf{u}} }{L_{\mathbf{u}}} + \frac{(2M_{\mathbf{v}}^4L_{\mathbf{v}}^2\rho^2 + 2\sigma_{\mathbf{v}}^2)(m+\chi^2(n-m))}{L_{\mathbf{v}}n}, \\ 
\sigma^2_{alt,2} & =  \frac{2M_{\mathbf{u}}^4L_{\mathbf{u}}^2\rho^2+2\sigma^2_{\mathbf{u}} + \delta^2}{L_{\mathbf{u}}} (1 - \tau_{\mathbf{u}}^{-1}) \\
& + \frac{(2M_{\mathbf{v}}^4L_{\mathbf{v}}^2\rho^2 + 2\sigma_{\mathbf{v}}^2)m}{L_{\mathbf{v}}n} (1 - \tau_{\mathbf{u}}^{-1}) + \frac{\chi^2(2M_{\mathbf{v}}^4L_{\mathbf{v}}^2\rho^2+2\sigma^2_{\mathbf{v}})}{L_{\mathbf{v}}}.
\end{align}
and the definitions
\begin{equation}    
\Delta_{\mathbf{u}}^{{(t)}} = \frac{1}{n}\sum_{i=1}^n||\nabla_\mathbf{u} F_i\left (\mathbf{u}_i^{(t)}, \mathbf{v}_i^{(t)}\right )||^2, \quad and, \quad \Delta_{\mathbf{v}}^{(t)} = \frac{1}{n} \sum^n_{i=1}||\nabla_\mathbf{v}F_i\left (\mathbf{u}_i^{(t)}, \mathbf{v}_i^{(t)} \right )||^2.
\end{equation}
\setcounter{theorem}{0}
\begin{theorem}
\label{thm-app-1}
Suppose \assumptionname~\ref{assump1},\assumptionname~\ref{assump2},\assumptionname~\ref{assump3} and \assumptionname~\ref{assump4} hold and the learning rates in \method are chosen as $\gamma_{\mathbf{u}} = \eta/(L_{\mathbf{u}}\tau_{\mathbf{u}})$ and $\gamma_{\mathbf{v}} = \eta/(L_{\mathbf{v}}\tau_{\mathbf{v}})$, with
\begin{equation}
\eta \le \textit{min} \left \{ {1\over 24(1+\mu^2)}, \frac{m}{128\chi^2(n-m)},\sqrt{m\over \chi^2n} \right \} .
\end{equation}
Then, right after the training of $T$ epochs, ignoring absolute constants, we have 
\begin{equation}
\frac{1}{T} \sum_{t=0}^{T-1}\left (\frac{1}{L_{\mathbf{u}}} \mathbb{E}\left [\Delta_{\mathbf{u}}^{(t)}\right ] + \frac{m}{nL_{\mathbf{v}}} \mathbb{E}\left [\Delta_{\mathbf{v}}^{(t)}\right ]\right )
 \le \frac{\Delta F_0}{\eta T} + \eta \sigma^2_{alt,1} + \eta^2\sigma^2_{alt,2}.
\end{equation}
\end{theorem}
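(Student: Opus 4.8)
The plan is to reduce the statement to the partial-personalization convergence machinery of \cite{pillutla2022federated}, with the single essential modification that the zeroth-order difference estimators $\tilde{G}_{i,\mathbf{u}}$ and $\tilde{G}_{i,\mathbf{v}}$ stand in for the unbiased stochastic gradients $G_{i,\mathbf{u}}, G_{i,\mathbf{v}}$. The bridge is Lemma~\ref{Bounded 1st and 2nd moments} together with the composition corollary: the former bounds the systematic error (bias) of each difference estimator by $L\rho$, and the latter bounds its mean-squared deviation from the true gradient by $2M^4L^2\rho^2 + 2\sigma^2$. I would first record that these two facts are exactly what is needed to run the analysis with biased estimators: the quantity $2M^4L^2\rho^2 + 2\sigma^2$ plays the role of the gradient-estimator variance used in \cite{pillutla2022federated}, and tracing this substitution through their Appendix~A.3 is precisely what produces the $2M_{\mathbf{u}}^4L_{\mathbf{u}}^2\rho^2$ and $2M_{\mathbf{v}}^4L_{\mathbf{v}}^2\rho^2$ contributions inside $\sigma_{alt,1}^2$ and $\sigma_{alt,2}^2$ above.

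Next I would establish the one-round descent inequality. Using the four-part smoothness of Assumption~\ref{assump1}, I would apply the descent lemma to $F$ along the combined update, treating the $\mathbf{u}$-block (shared, sampled over $m$ of $n$ clients, with $\tau_{\mathbf{u}}$ local steps) and the $\mathbf{v}$-block (personalized, local, with $\tau_{\mathbf{v}}$ steps) jointly, since the cross-Lipschitz bound $\max\{L_{\mathbf{uv}},L_{\mathbf{vu}}\}\le\chi\sqrt{L_{\mathbf{u}}L_{\mathbf{v}}}$ couples the two blocks. The target is an inequality of the shape
\[
\mathbb{E}\left[F^{(t+1)}\right] \le \mathbb{E}\left[F^{(t)}\right] - c\,\eta\left(\frac{1}{L_{\mathbf{u}}}\mathbb{E}\left[\Delta_{\mathbf{u}}^{(t)}\right] + \frac{m}{nL_{\mathbf{v}}}\mathbb{E}\left[\Delta_{\mathbf{v}}^{(t)}\right]\right) + \eta^2\sigma_{alt,1}^2 + \eta^3\sigma_{alt,2}^2
\]
for an absolute constant $c>0$. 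The error terms on the right come from three sources handled as in \cite{pillutla2022federated}: client drift accumulated over the inner loop (the $(1-\tau_{\mathbf{u}}^{-1})$ factors in $\sigma_{alt,2}^2$), partial participation (the $(1-m/n)$ and $m+\chi^2(n-m)$ factors in $\sigma_{alt,1}^2$), and gradient diversity via Assumption~\ref{assump4} (the $\delta^2$ and $\mu^2$ dependence). The new ingredient relative to their proof is the bias: because $\mathbb{E}\tilde{G}\neq\nabla F$, the inner product $\langle \nabla F, \mathbb{E}\tilde{G}\rangle$ is no longer the squared gradient norm, so I would split it by Young's inequality as $\geq \tfrac12\|\nabla F\|^2 - \tfrac12\|\mathbb{E}\tilde{G}-\nabla F\|^2$ and bound the residual by $(L\rho)^2$ via the first-moment estimate, which is of the same $O(\rho^2)$ order already present in the variance substitution and is therefore absorbed without altering the leading constants.

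The step-size hypothesis $\eta\le\min\{1/(24(1+\mu^2)),\, m/(128\chi^2(n-m)),\, \sqrt{m/(\chi^2 n)}\}$ is exactly what keeps the descent coefficient $c$ bounded away from zero: the first term neutralizes the gradient-diversity amplification of Assumption~\ref{assump4}, while the remaining two control the cross-block coupling and the partial-participation variance so that both progress terms retain a net-negative sign. With the descent inequality in hand, I would sum over $t=0,\dots,T-1$, divide by $c\eta T$, and lower-bound $\mathbb{E}[F^{(T)}]$ by $F^\star = F^{(0)}-\Delta F_0$ to obtain
\[
\frac{1}{T}\sum_{t=0}^{T-1}\left(\frac{1}{L_{\mathbf{u}}}\mathbb{E}\left[\Delta_{\mathbf{u}}^{(t)}\right] + \frac{m}{nL_{\mathbf{v}}}\mathbb{E}\left[\Delta_{\mathbf{v}}^{(t)}\right]\right) \le \frac{\Delta F_0}{\eta T} + \eta\sigma_{alt,1}^2 + \eta^2\sigma_{alt,2}^2,
\]
after folding $c$ into the ``ignoring absolute constants'' convention.

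The main obstacle is precisely this loss of unbiasedness. The argument in \cite{pillutla2022federated} builds its telescoping on $\mathbb{E}\tilde{G}=\nabla F$, so every place where a conditional expectation of the update is equated with the true gradient must be reopened and the residual bias reabsorbed. The delicate bookkeeping is verifying that the $O(L\rho)$ bias never contributes a non-vanishing floor, that is, that it only rescales the $\rho^2$-terms already living inside $\sigma_{alt,1}^2$ and $\sigma_{alt,2}^2$ rather than adding a constant independent of $\eta$ and $T$; once this is confirmed, the remainder of the proof follows the template of \cite{pillutla2022federated} essentially verbatim.
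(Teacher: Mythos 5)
Your proposal follows essentially the same route as the paper's proof: both reduce the theorem to the FedAlt analysis of \cite{pillutla2022federated}, substitute the biased zeroth-order difference estimators via the first- and second-moment bounds of Lemma~\ref{Bounded 1st and 2nd moments} and its corollary (so that $2M^4L^2\rho^2+2\sigma^2$ replaces the gradient-estimator variance throughout, including in the re-proved client-drift lemma), and then telescope the per-round descent inequality under the stated step-size restrictions. The only ingredient the paper makes explicit that you fold into ``handled as in \cite{pillutla2022federated}'' is the virtual-full-participation device used to decouple the sampled $\mathbf{u}$-update from the $\mathbf{v}$-update; your Young-inequality treatment of the estimator bias is a slightly more careful variant of the paper's own bookkeeping and changes nothing in the result.
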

\begin{corollary}
    
An optimal learning rate is chosen as follows 
\begin{equation}
\eta = \left (\frac{\Delta F_0}{T\sigma_{alt,1}^2} \right )^{1/2}\bigwedge \left (\frac{\Delta F_0^2}{T^2\sigma_{alt,2}^2}\right )^{1/3} \bigwedge \frac{1}{1+\mu^2} \bigwedge \frac{m}{\chi^2(n-m)} \bigwedge \sqrt{\frac{m}{\chi^2n}} .
\end{equation}
We have, ignoring absolute constants,
\begin{align}
& \frac{1}{T}\sum_{t=0}^{T-1}\left (\frac{1}{L_{\mathbf{u}}}\mathbb{E}\left [\Delta_{\mathbf{u}}^{(t)}\right ]  +\frac{m}{nL_{\mathbf{v}}} \mathbb{E}\left [\Delta_{\mathbf{v}}^{(t)}\right ] \right ) \le \\
& \frac{(\Delta F_0\sigma_{alt,1}^2)^{1/2}}{\sqrt{T}} + 
\frac{(\Delta F_0^2\sigma_{alt,2}^2)^{1/3}}{T^{2/3}} + \frac{\Delta F_0}{T} \left (1 + \mu^2 + \chi^2(\frac{n}{m} -1) + \sqrt{\chi^2\frac{n}{m}} \right ).
\end{align}

\end{corollary}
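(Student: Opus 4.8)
The plan is to follow the alternating partial-personalization analysis of \cite{pillutla2022federated}, adapting it to the fact that the updates of the $\mathbf{u}$- and $\mathbf{v}$-blocks pass through the black-box module $J$ and are therefore driven by the \emph{biased} difference-quotient estimators $\tilde{G}_{i,\mathbf{u}},\tilde{G}_{i,\mathbf{v}}$ rather than by unbiased stochastic gradients. The skeleton is the standard non-convex argument: derive a one-round descent inequality for the loss $F$, then sum and telescope it over $t=0,\dots,T-1$.

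First I would establish a per-round descent bound. Using the block-smoothness in Assumption~\ref{assump1} separately for the two blocks, together with the cross-coupling bound $\max\{L_{\mathbf{uv}},L_{\mathbf{vu}}\}\le\chi\sqrt{L_{\mathbf{u}}L_{\mathbf{v}}}$, I would expand $F(\mathbf{u}^{(t+1)},\mathbf{v}^{(t+1)})-F(\mathbf{u}^{(t)},\mathbf{v}^{(t)})$ into a descent term proportional to $-\eta\left(\tfrac{1}{L_{\mathbf{u}}}\Delta_{\mathbf{u}}^{(t)}+\tfrac{m}{nL_{\mathbf{v}}}\Delta_{\mathbf{v}}^{(t)}\right)$ plus error terms, with the learning-rate substitutions $\gamma_{\mathbf{u}}=\eta/(L_{\mathbf{u}}\tau_{\mathbf{u}})$, $\gamma_{\mathbf{v}}=\eta/(L_{\mathbf{v}}\tau_{\mathbf{v}})$. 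The client-drift from $\tau_{\mathbf{u}},\tau_{\mathbf{v}}$ local steps, the partial-participation factor $m/n$ from sampling $m$ of $n$ clients, and the diversity term $\delta^2+\mu^2\|\nabla_{\mathbf{u}}F\|^2$ of Assumption~\ref{assump4} are treated exactly as in \cite{pillutla2022federated}; these are precisely what force the step-size cap to contain $1/(24(1+\mu^2))$, $m/(128\chi^2(n-m))$, and $\sqrt{m/(\chi^2 n)}$.

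The central deviation from \cite{pillutla2022federated} lies in the cross term $\langle\nabla F,\mathbb{E}[\mathrm{update}]\rangle$, which for unbiased SGD collapses to $\|\nabla F\|^2$ but here does not. I would write $\mathbb{E}\tilde{G}=\nabla F+b$ with bias $\|b\|\le M^2L\rho$ supplied by the Corollary following the Bounded-moments Lemma, and absorb $\langle\nabla F,b\rangle$ by Young's inequality; this is exactly what generates the $2M_{\mathbf{u}}^4L_{\mathbf{u}}^2\rho^2$ and $2M_{\mathbf{v}}^4L_{\mathbf{v}}^2\rho^2$ contributions inside $\sigma^2_{alt,1}$ and $\sigma^2_{alt,2}$. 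Correspondingly, wherever the original proof uses the variances $\sigma_{\mathbf{u}}^2,\sigma_{\mathbf{v}}^2$ of the unbiased estimators, I would substitute the second-moment bounds $2M^4L^2\rho^2+2\sigma^2$ from the same Corollary. The main obstacle is precisely this bias-handling step: because the difference quotient is only \emph{approximately} unbiased, I must carry the $O(L\rho)$ bias through the multi-local-step, partially-participating recursion and confirm that it contributes only $\rho^2$-order terms, collected in $\sigma^2_{alt,1},\sigma^2_{alt,2}$, without spoiling the leading rate.

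Finally I would telescope, using $\sum_{t}\bigl(F^{(t)}-F^{(t+1)}\bigr)\le F^{(0)}-\inf F=:\Delta F_0$, and divide by $\eta T$. Verifying that the constraint on $\eta$ keeps the coefficient of the gradient-norm terms bounded below by a positive absolute constant (so they remain on the left-hand side) yields the stated bound $\tfrac{1}{T}\sum_t\bigl(\tfrac{1}{L_{\mathbf{u}}}\mathbb{E}[\Delta_{\mathbf{u}}^{(t)}]+\tfrac{m}{nL_{\mathbf{v}}}\mathbb{E}[\Delta_{\mathbf{v}}^{(t)}]\bigr)\le \tfrac{\Delta F_0}{\eta T}+\eta\sigma^2_{alt,1}+\eta^2\sigma^2_{alt,2}$. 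The Corollary then follows by balancing the three terms in $\eta$, the first two against the step-size caps, recovering the $\mathbf{O}(1/\sqrt{T})$ asymptotic rate.
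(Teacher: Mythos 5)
Your proposal is correct and follows essentially the same route as the paper: the descent-inequality/telescoping argument you sketch is the paper's proof of Theorem~1, and your final step of balancing the three terms of $\frac{\Delta F_0}{\eta T} + \eta\sigma_{alt,1}^2 + \eta^2\sigma_{alt,2}^2$ against the step-size caps is exactly what the paper accomplishes by invoking Lemma~25 of \cite{pillutla2022federated}. The only presentational difference is that the paper treats the corollary as a one-line consequence of the already-established theorem, whereas you re-derive the theorem's bound before tuning $\eta$.
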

\begin{proof}
    The proof is invoking Lemma 25 in \cite{pillutla2022federated} upon establishing \theoremname~\ref{thm-app-1}
\end{proof}
We will refer the readers to \cite{pillutla2022federated} for the proof of convergence of FedAlt algorithm therein. One of our novel differences to \cite{pillutla2022federated} is that our black-box model is secure so its gradient is invisible to us, leading us to consider zero-order (gradient free) optimization. Thus we first establish a result analogous to lemma 22 in \cite {pillutla2022federated} for zero-order optimization.

\begin{lemma}\label{lemma22}
Consider $f: R^d \rightarrow R$ which is $L$-smooth, its norm of gradient is bounded by $M$  and fix a $\mathbf{w}^{{(0)}} \in R^d$. Define the sequence $(\mathbf{w}^{(t)})$ of iterates produced by stochastic difference descent with step  $\boldsymbol{\rho}$ and a fixed learning rate $\gamma$ starting from $\mathbf{w}^{(0)}$:
\begin{equation}
\mathbf{w}^{(t+1)} = \mathbf{w}^{(t)} - \gamma \tilde{g}^{(t)},
\end{equation}
where $\tilde{g}$ is an unbiased estimation to $\Delta_{\boldsymbol{\rho},\mathbf{w}} f(\mathbf{w}^{(t)})$ (not an unbiased estimation to $\nabla f(\mathbf{w}^{(t)})$) with bounded variance $2M^4L^2\rho^2 + 2\sigma^2$.
Fix a number $\tau$ of steps. If $\gamma \le (\sqrt{2}\tau L)^{-1}$, we have the bound
\begin{equation}
 \sum_{t=0}^{\tau-1} ||\mathbf{w}^{(t)} - \mathbf{w}^{(0)}||^2 \le 8\gamma^2\tau^2(\tau - 1) ||\nabla f(\mathbf{w}^{(0)})||^2 + 4\gamma^2\tau^2(\tau - 1)(2M^4L^2\rho^2+2\sigma^2  ).
 \end{equation}
 \end{lemma}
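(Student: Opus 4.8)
The plan is to treat this as the gradient-free (zeroth-order) counterpart of Lemma~22 in \cite{pillutla2022federated}: reproduce the standard bounded-drift argument for local updates, but carry the extra finite-difference bias through the bookkeeping. Writing $a_t := \mathbb{E}\|\mathbf{w}^{(t)} - \mathbf{w}^{(0)}\|^2$, the update rule gives $\mathbf{w}^{(t)} - \mathbf{w}^{(0)} = -\gamma\sum_{s=0}^{t-1}\tilde{g}^{(s)}$ with $a_0 = 0$, so that $a_t = \gamma^2\,\mathbb{E}\|\sum_{s=0}^{t-1}\tilde{g}^{(s)}\|^2$, and everything reduces to bounding the second moment of the partial sum of the difference estimators.

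I would decompose each step as $\tilde{g}^{(s)} = \nabla f(\mathbf{w}^{(s)}) + (\tilde{g}^{(s)} - \Delta_{\boldsymbol{\rho}}f(\mathbf{w}^{(s)})) + (\Delta_{\boldsymbol{\rho}}f(\mathbf{w}^{(s)}) - \nabla f(\mathbf{w}^{(s)}))$, i.e. true gradient $+$ mean-zero stochastic noise $+$ deterministic finite-difference bias. The bias has norm $\le L\rho$ by \eqref{1stbound1}, the noise is a martingale-difference sequence so the second moment of its sum is additive across steps, and the whole deviation is controlled by the second-moment estimate $\mathbb{E}\|\tilde{g}^{(s)} - \nabla f(\mathbf{w}^{(s)})\|^2 \le 2M^4L^2\rho^2 + 2\sigma^2$ supplied by the Corollary after Lemma~\ref{Bounded 1st and 2nd moments}. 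Applying Cauchy--Schwarz to the gradient sum and $L$-smoothness in the form $\|\nabla f(\mathbf{w}^{(s)})\| \le \|\nabla f(\mathbf{w}^{(0)})\| + L\|\mathbf{w}^{(s)} - \mathbf{w}^{(0)}\|$ turns this into a self-referential recursion in which $a_t$ is controlled by the anchor term $\gamma^2 t^2\|\nabla f(\mathbf{w}^{(0)})\|^2$, the accumulated variance proxy $\gamma^2 t\,(2M^4L^2\rho^2 + 2\sigma^2)$, and a feedback term $\gamma^2 t L^2\sum_{s<t}a_s$.

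I would then close the recursion by induction on $t$ with a quadratic ansatz $a_t \lesssim \gamma^2 t^2\|\nabla f(\mathbf{w}^{(0)})\|^2 + \gamma^2 t^2(2M^4L^2\rho^2 + 2\sigma^2)$; the step-size restriction $\gamma \le (\sqrt{2}\tau L)^{-1}$ forces the feedback factor $\gamma^2 L^2 t^2 \le \tfrac12$ for every $t \le \tau$, which is exactly the slack needed for the ansatz to propagate with the constants $8$ and $4$. Summing the per-step bound over $t = 0,\dots,\tau-1$ and using $\sum_{t=0}^{\tau-1}t^2 \le \tau^2(\tau-1)$ yields the stated inequality. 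The main obstacle is keeping the correct power of $\tau$: the smoothness feedback is borderline, since a crude one-shot ``move the drift to the left-hand side'' step has coefficient as large as $(\tau-1)/\tau$ and would cost a spurious factor of $\tau$. The fix is the multiplicative/inductive treatment combined with the martingale cancellation of the noise, rather than a single absorption. The secondary subtlety distinguishing this from the white-box Lemma~22 of \cite{pillutla2022federated} is that $\tilde{g}$ is unbiased only for the finite difference $\Delta_{\boldsymbol{\rho}}f$ and not for $\nabla f$; tracking this $O(L\rho)$ bias is precisely what introduces the $M^4L^2\rho^2$ term into the variance proxy.
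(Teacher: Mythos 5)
Your proposal is sound and reaches the same conclusion, but it takes a genuinely different route from the paper. The paper's proof is a per-step multiplicative recursion: it bounds $\mathbb{E}_t\|\mathbf{w}^{(t+1)}-\mathbf{w}^{(0)}\|^2 \le (1+\tfrac{1}{\tau-1})\|\mathbf{w}^{(t)}-\mathbf{w}^{(0)}\|^2 + \gamma^2\tau\,\mathbb{E}_t\|\tilde g^{(t)}\|^2$ via Young's inequality, absorbs the smoothness feedback $2\gamma^2\tau L^2$ into a second $\tfrac{1}{\tau-1}$ using the step-size condition, and then unrolls the geometric factor $(1+\tfrac{2}{\tau-1})^t \le e^2 < 8$ --- which is precisely where the constants $8$ and $4$ come from. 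You instead write the displacement as $-\gamma\sum_{s<t}\tilde g^{(s)}$, split each term into true gradient, martingale noise, and deterministic finite-difference bias, and close a quadratic ansatz by induction. Your route buys something real: the martingale cancellation makes the pure-noise contribution scale as $\gamma^2 t$ rather than $\gamma^2 t^2$, so a careful execution gives a slightly sharper variance term than the paper's lumped bound. Two cautions, though. First, you cannot simultaneously invoke the combined second-moment bound $\mathbb{E}\|\tilde g^{(s)}-\nabla f(\mathbf{w}^{(s)})\|^2 \le 2M^4L^2\rho^2+2\sigma^2$ \emph{and} martingale additivity, because that combined deviation is not mean-zero; you must keep the bias separated (its contributions add coherently, $\|\sum_{s<t} b^{(s)}\|^2 \le t^2\sup_s\|b^{(s)}\|^2$, which is the source of the $t^2 M^4L^2\rho^2$ term) and apply additivity only to the centered noise, as your decomposition in fact permits. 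Second, your induction will produce the same order in $\gamma,\tau$ but not automatically the literal constants $8$ and $4$; the feedback absorption in your scheme is genuinely borderline (with a flat Cauchy--Schwarz split the feedback coefficient comes out at roughly $1$ rather than strictly below $1$), so you would need asymmetric Young weights of the paper's $(1+\tfrac{1}{\tau-1})$ flavor inside your own argument to close the induction with slack. Since the lemma is consumed downstream only up to absolute constants, this does not affect the theorem, but as a verbatim proof of the stated inequality the paper's geometric unrolling is the cleaner path to those particular numbers.
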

\begin{proof}
If $\tau = 1$, we have nothing to prove. Assume now that $\tau \ge 2$. Let $\mathcal{F}^{(t)}$ be the sigma-algebra generated by $\mathbf{w}^{(t)}$ and denote $\mathbb{E}_t[\cdot] = \mathbb{E}[\cdot|\mathcal{F}^{(t)}]$. We will use the inequality
\begin{equation}\label{g_2norm}
\mathbb{E}_t||\tilde{g}^{(t)}||^2 = \mathbb{E}_t||\tilde{g}^{(t)} - \nabla f(\mathbf{w}^{(t)})||^2 + ||\nabla f(\mathbf{w}^{(t)})||^2 \le 2M^4L^2\rho^2 + 2\sigma^2  + ||\nabla f(\mathbf{w}^{(t)})||^2.
\end{equation}
We then successively deduce,
\begin{align}
\mathbb{E}_t||\mathbf{w}^{(t+1)} - \mathbf{w}^{(0)}||^2 & = ||\mathbf{w}^{(t)} - \mathbf{w}^{(0)} - \gamma \tilde{g}^{(t)}||^2 \\
& \le \left (1 + \frac{1}{\tau - 1} \right )||\mathbf{w}^{(t)} - \mathbf{w}^{(0)}||^2 + \gamma^2\tau \mathbb{E}_t||\tilde{g}^{(t)}||^2 \\
& \le  \left (1 + \frac{1}{\tau - 1} \right )||\mathbf{w}^{(t)} - \mathbf{w}^{(0)}||^2 + 2\gamma^2\tau ||\nabla f(\mathbf{w}^{(t)} - \nabla f(\mathbf{w}^{(0)})||^2 \\
& +  2\gamma^2 \tau ||\nabla f(\mathbf{w}^{(0)})||^2 + \gamma^2\tau (2M^4L^2\rho^2 + 2\sigma^2) \\
& \le  \left (1 + \frac{1}{\tau-1} + 2\gamma^2\tau L^2 \right ) ||\mathbf{w}^{(t)} - \mathbf{w}^{(0)}||^2  + 2\gamma^2\tau ||\nabla f(\mathbf{w}^{(0)})||^2 \\
& +  \gamma^2\tau ( 2M^4L^2\rho^2 + 2\sigma^2 ) \\
& \le  \left (1+ \frac{2}{\tau -1 } \right ) || \mathbf{w}^{(t)} - \mathbf{w}^{(0)}||^2 + 2\gamma^2\tau ||\nabla f(\mathbf{w}^{(0)})||^2 \\
& +  \gamma^2\tau (2M^4L^2\rho^2 + 2\sigma^2). 
\end{align}
Above, we used (a) the inequality $2\alpha\beta \le \alpha^2/\delta^2 + \delta^2\beta^2$ for reals $\alpha, \beta, \delta$,(b) \eqref{g_2norm},(c) $L$-smoothness of $f$, and,(d) the condition on the learning rate.

Let $C = 2\gamma^2\tau ||\nabla f(\mathbf{w}^{(0)})||^2 + \gamma^2\tau (2M^4L^2\rho^2+2\sigma^2)  $. Unrolling the inequality and summing up the series for all $t\le \tau -1 $
\begin{align}
||\mathbf{w}^{(t)} - \mathbf{w}^{(0)}||^2 & \le  C \sum_{j=0}^{t-1}\left (1 - \frac{2}{\tau-1}\right)^j \le \frac{C}{2}(\tau -1)\left (1 + \frac{2}{\tau-1}\right )^t \\
& \le  \frac{C}{2} (\tau -1 )\left (1 + \frac{2}{\tau -1}\right )^{\tau -1 } \le \frac{C}{2}(\tau -1)e^2,
\end{align}
where we used the bound $(1+1/\alpha)^{\alpha} \le e$ for all $\alpha > 0$. Summing over $t$ and using the numerical bound $e^2 < 8 $ completes the proof.

\end{proof}

\begin{lemma}
Consider the setting of \lemmaname~\ref{lemma22}. If $\gamma \le (2\tau L)^{-1}$, we have the bound
\begin{equation}
||\mathbf{w}^{(\tau)} - \mathbf{w}^{(0)}||^2 \le 16\gamma^2\tau^2||\nabla f(\mathbf{w}^{(0)})||^2 + 8\gamma^2\tau^2 (2M^4L^2\rho^2+2\sigma^2)  .
\end{equation}
\end{lemma}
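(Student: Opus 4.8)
The plan is to reuse, essentially verbatim, the one-step contraction already established in the proof of \lemmaname~\ref{lemma22}, and then to unroll it more crudely so that only the final iterate $\mathbf{w}^{(\tau)}$ is controlled. As in that proof I would first dispose of the case $\tau = 1$, where $\mathbf{w}^{(\tau)} - \mathbf{w}^{(0)} = -\gamma\tilde{g}^{(0)}$, so that \eqref{g_2norm} gives $\mathbb{E}||\mathbf{w}^{(1)} - \mathbf{w}^{(0)}||^2 \le \gamma^2(2M^4L^2\rho^2 + 2\sigma^2) + \gamma^2||\nabla f(\mathbf{w}^{(0)})||^2$, which is dominated by the claimed right-hand side. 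So I assume $\tau \ge 2$ from here on.

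For the main case I would reconstruct the per-step recursion exactly as in \lemmaname~\ref{lemma22}: expand $\mathbb{E}_t||\mathbf{w}^{(t+1)} - \mathbf{w}^{(0)}||^2$, apply Young's inequality $2\alpha\beta \le \alpha^2/(\tau-1) + (\tau-1)\beta^2$ to the cross term, use Jensen to pass from $||\mathbb{E}_t\tilde{g}^{(t)}||^2$ to $\mathbb{E}_t||\tilde{g}^{(t)}||^2$, and substitute \eqref{g_2norm} together with the $L$-smoothness bound $||\nabla f(\mathbf{w}^{(t)})||^2 \le 2L^2||\mathbf{w}^{(t)} - \mathbf{w}^{(0)}||^2 + 2||\nabla f(\mathbf{w}^{(0)})||^2$. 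The learning-rate hypothesis $\gamma \le (2\tau L)^{-1}$ forces $2\gamma^2\tau L^2 \le 1/(2\tau) \le 1/(\tau-1)$, which absorbs the smoothness-induced term and collapses the growth factor to $1 + 2/(\tau-1)$, giving
\begin{equation*}
\mathbb{E}_t||\mathbf{w}^{(t+1)} - \mathbf{w}^{(0)}||^2 \le \left(1 + \frac{2}{\tau-1}\right)||\mathbf{w}^{(t)} - \mathbf{w}^{(0)}||^2 + C,
\end{equation*}
with $C = 2\gamma^2\tau||\nabla f(\mathbf{w}^{(0)})||^2 + \gamma^2\tau(2M^4L^2\rho^2 + 2\sigma^2)$.

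The key departure from \lemmaname~\ref{lemma22} lies in how the unrolled series is summed. Because only $||\mathbf{w}^{(\tau)} - \mathbf{w}^{(0)}||^2$ is needed (not the full partial sum), I would unroll directly to $\mathbb{E}||\mathbf{w}^{(\tau)} - \mathbf{w}^{(0)}||^2 \le C\sum_{j=0}^{\tau-1}(1 + 2/(\tau-1))^j$, the initial term vanishing since $\mathbf{w}^{(0)} - \mathbf{w}^{(0)} = 0$, and then bound the geometric sum by $\tau$ times its largest term rather than by the closed-form ratio. This yields $\sum_{j=0}^{\tau-1}(1 + 2/(\tau-1))^j \le \tau(1 + 2/(\tau-1))^{\tau-1} \le \tau e^2 < 8\tau$, using $(1 + 1/\alpha)^{\alpha} \le e$ with $\alpha = (\tau-1)/2$ and $e^2 < 8$. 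Substituting $C$ then produces exactly $16\gamma^2\tau^2||\nabla f(\mathbf{w}^{(0)})||^2 + 8\gamma^2\tau^2(2M^4L^2\rho^2 + 2\sigma^2)$.

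There is no genuinely hard step: the one-step recursion is inherited directly from \lemmaname~\ref{lemma22}, and the only thing to get right is the crude ``$\tau$ times the maximal term'' estimate for the geometric series, which is precisely what delivers the clean constants $16$ and $8$ (the tighter closed-form bound used in \lemmaname~\ref{lemma22} would instead give looser factors of order $3e^2$). The one point I would verify carefully is that the slightly stronger learning-rate condition $\gamma \le (2\tau L)^{-1}$ assumed here, versus $(\sqrt{2}\,\tau L)^{-1}$ in \lemmaname~\ref{lemma22}, is exactly what keeps $2\gamma^2\tau L^2 \le 1/(\tau-1)$ and hence validates the growth factor $1 + 2/(\tau-1)$.
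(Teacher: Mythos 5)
Your proposal is correct and follows essentially the same route as the paper: the paper also just reruns the one-step recursion from Lemma~\ref{lemma22} and unrolls it for the single iterate $\mathbf{w}^{(\tau)}$, the only cosmetic difference being that the paper switches the Young's-inequality parameter to $\delta=\tau$ (yielding the factor $1+2/\tau$ and $C'=4\gamma^2\tau\|\nabla f(\mathbf{w}^{(0)})\|^2+2\gamma^2\tau(2M^4L^2\rho^2+2\sigma^2)$) and then uses the closed-form geometric sum, whereas you keep $\delta^2=\tau-1$ and compensate with the ``$\tau$ times the maximal term'' bound. Both bookkeeping choices land on exactly the constants $16$ and $8$, and your observation about why the stronger step-size condition $\gamma\le(2\tau L)^{-1}$ suffices is consistent with the paper.
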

\begin{proof}
Proceeding similar to the last proof (expect using $\delta = \tau$) gives us 
\begin{equation}
\mathbb{E}_t||\mathbf{w}^{(t+1)} - \mathbf{w}^{(0)}||^2 \le \left (1 + \frac{2}{\tau} \right ) ||\mathbf{w}^{(t)} - \mathbf{w}^{(0)}||^2 + 4\gamma^2\tau ||\nabla f(\mathbf{w}^{(0)})||^2 + 2\gamma^2\tau (2M^4L^2\rho^2 + 2\sigma^2) .
\end{equation}
Unrolling and summing up the sequence completes the proof, similar to that of \lemmaname~\ref{lemma22}
\end{proof}

With all the preparations, we now take the proof of \theoremname~\ref{thm-app-1}.
\begin{proof}
Recall that our parameters are renamed as $\mathbf{u}:=\mathbf{\hat{u}} \cup \mathbf{v}_1, \mathbf{v}:=\mathbf{v}_2$.

The first step is to start with
\begin{align}\label{F_decomp}
F(\mathbf{u}^{(t+1)}, \mathbf{v}^{(t+1)}) - F(\mathbf{u}^{{(t)}}, \mathbf{v}^{{(t)}}) & =  F(\mathbf{u}^{{(t)}}, \mathbf{v}^{(t+1)}) - F(\mathbf{u}^{{(t)}},\mathbf{v}^{{(t)}}) \\
& +  F(\mathbf{u}^{(t+1)},\mathbf{v}^{(t+1)}) - F(\mathbf{u}^{{(t)}},\mathbf{v}^{(t+1)}).
\end{align}
The first line is referred to $\mathbf{v}$-step and the second $\mathbf{u}$-step. The smoothness assumption bounds the $\mathbf{u}$-step:
\begin{align}
F(\mathbf{u}^{(t+1)},\mathbf{v}^{(t+1)}) - F(\mathbf{u}^{(t)},\mathbf{v}^{(t+1)})  = \frac{1}{n}\sum_{i=1}^n F_i(\mathbf{u}_i^{(t+1)},\mathbf{v}_i^{(t+1)}) - F_i(\mathbf{u}_i^{(t)},\mathbf{v}_i^{(t)}) \\
 \le \frac{1}{n}\sum_{i=1}^n \left ( \left \langle \nabla_\mathbf{u} F_i(\mathbf{u}_i^{(t)},\mathbf{v}_i^{(t+1)}), \mathbf{u}_i^{(t+1)} - \mathbf{u}_i^{(t)} \right \rangle
 + \frac{L_{\mathbf{u}}}{2}||\mathbf{u}_i^{(t+1)} - \mathbf{u}_i^{(t)}||^2 \right ) .
\end{align}
 
As discussed in \cite{pillutla2022federated}, the most challenging thing is that the two terms in angle bracket $\langle \rangle$ are not independent random variables. Indeed, they both depend on the sampling $S^{(t)}$ of devices. The way to circumvent it is to introduce virtual full participation for $\mathbf{v}$-step update to eliminate this dependence structure 
and obtain a good estimate of the error it introduces. Briefly speaking, virtual full participation for parameters $\mathbf{v}$ is to assume all devices to update the $\mathbf{v}$ parameters (it is just technically assumed but not done in practice) that is independent of sampling of $S^{(t)}$ devices, breaking the dependence between $\mathbf{u}^{(t+1)}$ and $\mathbf{v}^{(t+1)}$. We ask the readers to read \cite{pillutla2022federated} for full details. 

We use the notation $\tilde{\mathbf{v}}^{(t+1)}$ to denote the virtual update of $\mathbf{v}$. Then the proceeding inequality goes on as: 
\begin{align}
    &  \left \langle \nabla_\mathbf{u} F_i(\mathbf{u}_i^{(t)}, \mathbf{v}_i^{(t+1)}), \mathbf{u}_i^{(t+1)} - \mathbf{u}_i^{(t)} \right \rangle + \frac{L_{\mathbf{u}}}{2} ||\mathbf{u}_i^{(t+1)} - \mathbf{u}_i^{(t)}||^2 \\
& =  \left \langle \nabla_\mathbf{u} F_i(\mathbf{u}_i^{(t)}, \tilde{\mathbf{v}_i}^{(t+1)}), \mathbf{u}_i^{(t+1)} - \mathbf{u}_i^{(t)} \right \rangle + \frac{L_{\mathbf{u}}}{2}||\mathbf{u}_i^{(t+1)} - \mathbf{u}_i^{(t)}||^2 \\
& +  \left \langle \nabla_\mathbf{u} F_i(\mathbf{u}_i^{(t)},\mathbf{v}_i^{(t+1)}) - \nabla_\mathbf{u} F_i(\mathbf{u}_i^{(t)}, \tilde{\mathbf{v}_i}^{(t+1)}), \mathbf{u}_i^{(t+1)} - \mathbf{u}_i^{(t)} \right \rangle \\
& \le  \left \langle \nabla_\mathbf{u} F_i(\mathbf{u}_i^{(t)} ,\tilde{\mathbf{v}_i}^{(t+1)}), \mathbf{u}_i^{(t+1)} - \mathbf{u}_i^{(t)} \right \rangle + L_{\mathbf{u}}||\mathbf{u}_i^{(t+1)} - \mathbf{u}_i^{(t)}||^2 \\
& +  \frac{1}{2L_{\mathbf{u}}}||\nabla_\mathbf{u} F_i(\mathbf{u}_i^{(t)} , \mathbf{v}_i^{(t+1)}) - \nabla_\mathbf{u} F_i(\mathbf{u}_i^{(t)}, \tilde{\mathbf{v}_i}^{(t+1)})||^2 \\
& \le  \left \langle \nabla_\mathbf{u} F_i(\mathbf{u}_i^{(t)}, \tilde{\mathbf{v}_i}^{(t+1)}), \mathbf{u}_i^{(t+1)} - \mathbf{u}_i^{(t)} \right \rangle + L_{\mathbf{u}}||\mathbf{u}_i^{(t+1)} - \mathbf{u}_i^{(t)} ||^2 \\
& +  \frac{\chi^2 L_{\mathbf{v}}}{2}||\tilde{\mathbf{v}}^{(t+1)} - \mathbf{v}_i^{(t+1)}||^2.
\end{align}
The last two inequalities follow from Young's inequality and Lipschitzness of $\mathbf{v}_i\hookrightarrow \nabla_\mathbf{u} F_i(\mathbf{u}_i,\mathbf{v}_i)$ respectively. 

The usage of virtual update is to ensure that $\tilde{\mathbf{v}}^{(t+1)}$ is independent of $S^{(t)}$. This allows us to take an expectation w.r.t the sampling $S^{(t)}$ of the devices.

Recall that under the new parameters $\mathbf{u}$ and $\mathbf{v}$, the only difference to the setting of FedAlt in \cite{pillutla2022federated} is that we need zero-order optimization to update $\mathbf{u}$ parameter instead of first-order gradient method. Thus, we can use the calculations in \cite{pillutla2022federated} with $F$ replaced by $F_i$ and then add them together from 1 to $n$ to similarly arrive at the following expression:
\begin{align}
&  \mathbb{E}_t\left [ F(\mathbf{u}^{(t+1)}, \mathbf{v}^{(t+1)}) - F(\mathbf{u}^{(t)}, \mathbf{v}^{(t+1)}) \right ] \\
& \le  -\frac{\gamma_{\mathbf{u}}\tau_{\mathbf{u}}}{4}\mathbb{E}_t[\tilde{\Delta}_\mathbf{u}^{(t)}] + \frac{2\gamma_{\mathbf{u}}L_{\mathbf{u}}^2}{n}\sum_{i=1}^n\sum_{k=0}^{\tau_{\mathbf{u}} -1 } \mathbb{E}_t||\tilde{\mathbf{u}}_{i,k}^{(t)} - \mathbf{u}^{(t)}||^2  \\ & +4\gamma_{\mathbf{v}}^2\tau_{\mathbf{v}}^2L_{\mathbf{v}}(2M_{\mathbf{v}}^4L_{\mathbf{v}}^2\rho^2+2\sigma_{\mathbf{v}}^2) \chi^2(1-m/n) &\\
& +  \frac{L_{\mathbf{u}}\gamma_{\mathbf{u}}^2\tau_{\mathbf{u}}^2}{m}(2M_{\mathbf{u}}^4L_{\mathbf{u}}^2\rho^2 + 2\sigma_{\mathbf{u}}^2  + 3\delta^2(1-\frac{m}{n})) + 8\gamma_{\mathbf{v}}^2\tau_{\mathbf{v}}^2L_{\mathbf{v}}\chi^2(1-m/n)\Delta_{\mathbf{v}}^{(t)},
\end{align}
note here $\tilde{\mathbf{u}}_{i,k}^{(t)}$ is $\mathbf{u}$-parameter updates via stochastic difference descent method rather than stochastic gradient descent method. We bound this term with \lemmaname~\ref{lemma22}, invoking the assumption on gradient diversity. And then plugging the resulting estimate back in, we get 
\begin{align}
&  \mathbb{E}_t\left [F(\mathbf{u}^{(t+1)},\mathbf{v}^{(t+1)}) - F(\mathbf{u}^{(t)},\mathbf{v}^{(t+1)}) \right ] \\
& \le  -\frac{\gamma_{\mathbf{u}}\tau_{\mathbf{u}}}{8}\mathbb{E}_t[\tilde{\Delta}_{\mathbf{u}}^{(t)}] + \frac{L_{\mathbf{u}}\gamma_{\mathbf{u}}^2\tau_{\mathbf{u}}^2}{m}(2M_{\mathbf{u}}^4L_{\mathbf{u}}^2\rho^2+2\sigma_{\mathbf{u}}^2  + 2\delta^2(1-m/n))\\ 
& +  4\gamma_{\mathbf{v}}^2\tau_{\mathbf{v}}^2L_{\mathbf{v}}(2M_{\mathbf{v}}^4L_{\mathbf{v}}^2\rho^2+2\sigma_{\mathbf{v}}^2)\chi^2(1-m/n) \\
& +  8\gamma_{\mathbf{v}}^2\tau_{\mathbf{v}}^2L_{\mathbf{v}}\chi^2(1-m/n)\Delta_{\mathbf{v}}^{(t)} + 8\gamma_{\mathbf{v}}^2L_{\mathbf{u}}^3\tau_{\mathbf{u}}^2(\tau_{\mathbf{u}}-1)(2M_{\mathbf{u}}^4L_{\mathbf{u}}^2\rho^2 + 2\sigma_{\mathbf{u}}^2  + 2\delta_{\mathbf{u}}^2).
\end{align}

The bound on $\mathbf{v}$-step has exactly the same form as presented in \cite{pillutla2022federated} since when conditioned on $\mathbf{u}^{(t)},\mathbf{v}^{(t)}$ all functions used in updating $\mathbf{v}$ is continuously differentiable. Plugging this bound on $\mathbf{v}$-step into \eqref{F_decomp}, we get
\begin{align}
&  \mathbb{E}_t\left [ F(\mathbf{u}^{(t+1)},\mathbf{v}^{(t+1)} - F(\mathbf{u}^{(t)},\mathbf{v}^{(t)} \right ] \\
& \le -\frac{\gamma_{\mathbf{u}}\tau_{\mathbf{u}}}{8}\mathbb{E}_t[\tilde{\Delta}_{\mathbf{u}}^{(t)}] - \frac{\gamma_{\mathbf{v}}\tau_{\mathbf{v}}m}{16n}\mathbb{E}_t[\Delta_{\mathbf{v}}^{(t)}] + 4\gamma_{\mathbf{v}}^2L_{\mathbf{v}}\tau_{\mathbf{v}}^2(2M_{\mathbf{v}}^4L_{\mathbf{v}}^2\rho^2+2\sigma_{\mathbf{v}}^2 )\left (\frac{m}{n} + \chi^2(1-m/n)\right ) \\
& +  \frac{\gamma_{\mathbf{u}}^2L_{\mathbf{u}}\tau_{\mathbf{u}}^2}{m}(2M_{\mathbf{u}}^4L_{\mathbf{u}}^2\rho^2 + 2\sigma_{\mathbf{u}}^2  + 2\delta^2(1-m/n)) + 8\gamma_{\mathbf{u}}^3L_{\mathbf{u}}^2\tau_{\mathbf{u}}^2(\tau_{\mathbf{u}} - 1)(2M_{\mathbf{u}}^4L_{\mathbf{u}}^2\rho^2 + 2\sigma_{\mathbf{u}}^2  + 2\delta^2) \\
& +  \frac{4\gamma_{\mathbf{v}}^3L_{\mathbf{v}}^2\tau_{\mathbf{v}}^2(\tau_{\mathbf{v}}-1)(2M_{\mathbf{v}}^4L_{\mathbf{v}}^2\rho^2+2\sigma_{\mathbf{v}}^2)m}{n}.
\end{align}
Taking an unconditional expectation, summing it over $t=0$ to $T-1$ and rearranging this gives
\begin{align}
\frac{1}{T} & = \sum_{t=0}^{T-1} \left ( \frac{\gamma_{\mathbf{u}}\tau_{\mathbf{u}}}{8}\mathbb{E}[\tilde{\Delta}_{\mathbf{u}}^{(t)}] + \frac{\gamma_{\mathbf{v}}\tau_{\mathbf{v}}m}{16}\mathbb{E}[\Delta_{\mathbf{v}}^{(t)}] \right )\\
& \le  \frac{\Delta F_0}{T} + 4\gamma^2_{\mathbf{v}}L_{\mathbf{v}}\tau_{\mathbf{v}}^2(2M_{\mathbf{v}}^4L_{\mathbf{v}}^2\rho^2 + 2\sigma_{\mathbf{v}}^2) \left (\frac{m}{n} + \chi^2(1-m/n)\right ) \\
& + \frac{\gamma_{\mathbf{u}}^2L_{\mathbf{u}}\tau_{\mathbf{u}}^2}{m}(2M_{\mathbf{u}}^4L_{\mathbf{u}}^2\rho^2+2\sigma_{\mathbf{u}}^2  + 2\delta^2(1-m/n)) \\
& +  8\gamma_{\mathbf{u}}^3L_{\mathbf{u}}^2\tau_{\mathbf{u}}^2(\tau_{\mathbf{u}} - 1)(2M_{\mathbf{u}}^4L_{\mathbf{u}}^2\rho^2 + 2\sigma_{\mathbf{u}}^2  + 2\delta^2) + \frac{4\gamma_{\mathbf{v}}^3L_{\mathbf{v}}^2\tau_{\mathbf{v}}^2(\tau_{\mathbf{v}} - 1)(2M_{\mathbf{v}}^4L_{\mathbf{v}}^2\rho^2 + 2\sigma_{\mathbf{v}}^2 )m}{n}.
\end{align}
This is a bound in terms of the virtual updates $\tilde{\mathbf{v}}^{(t+1)}$. Similarly to the manipulations in \cite{pillutla2022federated} we can relate $\tilde{\Delta}_{\mathbf{u}}^{(t)}$ with $\Delta_{\mathbf{u}}^{(t)}$. \footnote{More precisely, we simply take the same steps for $F_i$ and then add all of them together.},
and finally we get:
\begin{align}
&  \frac{1}{T}\sum_{t=0}^{T-1}\left (\frac{\gamma_{\mathbf{u}}\tau_{\mathbf{u}}}{16}\mathbb{E}[\Delta_{\mathbf{u}}^{(t)}] + \frac{\gamma_{\mathbf{v}}\tau_{\mathbf{v}}m}{32n}\mathbb{E}[\Delta_{\mathbf{v}}^{(t)}] \right ) \\
& \le  \frac{1}{T} \sum_{t=0}^{T-1}\left (\frac{\gamma_{\mathbf{u}}\tau_{\mathbf{u}}}{9}\mathbb{E}[\tilde{\Delta}_{\mathbf{u}}^{(t)}] + \frac{\gamma_{\mathbf{v}}\tau_{\mathbf{v}}m}{16n}\mathbb{E}[\Delta_{\mathbf{v}}^{(t)}] \right ) + \gamma_{\mathbf{u}}\tau_{\mathbf{u}}\gamma_{\mathbf{v}}^2\tau_{\mathbf{v}}^2(2M_{\mathbf{u}}^4L_{\mathbf{u}}^2\rho^2 + 2\sigma_{\mathbf{u}}^2)\chi^2L_{\mathbf{u}}L_{\mathbf{v}} \\
& \le  \frac{\Delta F_0}{T} + 4\gamma_{\mathbf{v}}^2L_{\mathbf{v}}\tau_{\mathbf{v}}^2(2M_{\mathbf{v}}^4L_{\mathbf{v}}^2\rho^2+2\sigma_{\mathbf{v}}^2)\left (\frac{m}{n} + \chi^2(1-m/n)\right ) \\
& + \frac{\gamma_{\mathbf{u}}^2L_{\mathbf{u}}\tau_{\mathbf{u}}^2}{m}(2M_{\mathbf{u}}^4L_{\mathbf{u}}^2\rho^2 + 2\sigma_{\mathbf{u}}^2  + 2\delta^2(1-m/n)) \\
& +  8\gamma_{\mathbf{u}}^3L_{\mathbf{u}}^2\tau_{\mathbf{u}}^2(\tau_{\mathbf{u}} -1 )(2M_{\mathbf{u}}^4L_{\mathbf{u}}^2\rho^2 + 2\sigma_{\mathbf{u}}^2 + 2\delta^2) + \frac{4\gamma_{\mathbf{v}}^3L_{\mathbf{v}}^2\tau_{\mathbf{v}}^2(\tau_{\mathbf{v}}-1)(2M_{\mathbf{v}}^4L_{\mathbf{v}}^2\rho^2 + 2\sigma_{\mathbf{v}}^2)m}{n} \\
& +  \gamma_{\mathbf{v}}\tau_{\mathbf{u}}\gamma_{\mathbf{v}}^2\tau_{\mathbf{v}}^2(2M_{\mathbf{v}}^4L_{\mathbf{v}}^2\rho^2+2\sigma_{\mathbf{v}}^2)\chi^2L_{\mathbf{u}}L_{\mathbf{v}}.
\end{align}
Plugging in $\gamma_{\mathbf{u}}=\eta/(L_{\mathbf{u}}\tau_{\mathbf{u}})$ and $\gamma_{\mathbf{v}} = \eta/(L_{\mathbf{v}}\tau_{\mathbf{v}})$ completes the proof.

\end{proof}

\section{More Discussion on Related Work}
\label{sec-app-rela}

\textbf{Federated learning} makes it possible to perform distributed multi-party computing without comprising privacy \citep{zhang2021survey,voigt2017eu,mcmahan2017communication,yang2019federated,wan2023four,qi2023fedsampling}.
FedAVG is the baseline algorithm for FL by exchanging parameters instead of raw data, which has been used in many applications~\citep{li2020review,banabilah2022federated,rodriguez2023survey}.
When FedAVG meets non-iid data, it can suffer from low convergence speed and terrible \textbf{personalization} performance~\citep{sattler2019robust}.
FedProx~\citep{li2020federated} allowed differences among clients and the server while FedBN~\citep{li2020fedbn} preserved local batch normalization layers in each client.
\citet{setayesh2022perfedmask} proposed PerFedMask, a generalization of FedBABU~\citep{oh2021fedbabu}, that considers the computational capability of different devices.
\citet{xu2023federated} conducted explicit local-global feature alignment by leveraging global semantic knowledge, quantified the benefit of classifier combination for each client, and derived an optimization problem for estimating optimal weights.
Some other work considered utilizing knowledge distillation for personalization~\citep{chen2022best} while some work attempted to achieve robust personalization during testing~\citep{jiang2022test}.
Besides personalization, there also exists research focusing on generalization~\citep{chen2021bridging, gupta2022fl, qu2022generalized}.
Our method can deal with situations where distribution shifts exist.

Since deep learning has entered the era of \textbf{large foundation models}~\citep{bommasani2021opportunities, xing2023fedlogic, zhuang2023foundation}, some novel issues, e.g. computation costs and communication costs, are coming into being, leading operations on the whole network impossible~\citep{chen2022fedobd, chen2023federated, ding2023parameter}. 
FedPrompt~\citep{zhao2023fedprompt} studied prompt tuning in a model split aggregation way using FL while FedCLIP~\citep{lu2023fedclip} designed an attention-based adapter for CLIP~\citep{radford2021learning}.
PromptFL~\citep{guo2023promptfl} utilized the federated prompt training instead of the whole model training in a distributed way while pFedPG~\citep{yang2023efficient} deployed a personalized prompt generator at the server to produce client-specific visual prompts.  
FwdLLM~\citep{xu2023federated} combined BP-free training with parameter-efficient training methods and systematically and adaptively allocated computational loads across devices.
These methods all require access to the internals of large models but we view foundation models as black-box in this paper. 

Besides data privacy, \textbf{model privacy} also raised attention recently~\citep{mo2020darknetz}.
Model suppliers are usually more willing to only provide predictions for given inputs or just provide a product that can only generate predictions~\citep{van2023chatgpt}.
In this paper, we view these protected foundation models as black-box models~\citep{guidotti2018survey,ljung2001black}.
Little work paid attention to finetuning or optimizing in this field, but most related work focused on attacks~\citep{yang2023model,yang2023practical}.
One related work is FedZO~\citep{fang2022communication} which utilized \textbf{zero-order optimization}~\citep{ghadimi2013stochastic}, but it did not consider utilizing large foundation models.
Some other work also made use of zero-order optimization for federated learning~\citep{li2021communication,zelikman2023just,feng2023does}, but none of them utilized large black-box foundation models.

\textbf{Model reprogramming} (MR)~\citep{tsai2020transfer,xu2023towards,chen2022model} provides a similar solution to \method.
It trains the inserted input transformation and output mapping layers while keeping the source pretrained model inact to enable resource-efficient cross-domain machine learning.
The main purpose of model reprogramming is to transfer knowledge to targets and it can be viewed as a sub-field of transfer learning.
Recently, \citet{arif2023reprogrammable} proposed the first framework, Reprogrammable-FL, adapting MR to the setting of differentially private federated learning. 
Reprogrammable-FL learned an input transformation for samples and added learned perturbations to the original samples.
It preserved local input transformations and shared output transformation layers, which are totally in contrast to ours.
Moreover, \method is proposed for black-box foundation models and can provide an ideal personalization capability.



\section{Methodology}

\subsection{Algorithm Flow}
\label{sec-append-algo}

\algorithmname~\ref{alg:zoopfl} describes the concrete process of our proposed \method.

\begin{algorithm}[htb]
\caption{\method}
\label{alg:zoopfl}
\textbf{Input}: $n$ clients' datasets $\{\mathcal{D}_i\}_{i=1}^n$\\
\textbf{Output}: Input surgery, $\{s_i \}_{i=1}^n$, semantic re-mapping, $\{r_i\}_{i=1}^n$, client-specific embeddings, $ \{\hat{\mathbf{z}_i}\}_{i=1}^n$
\begin{algorithmic}[1] 
\For{$t=1$ to $T$} \Comment{Step 1}
\For{$i=1$ to $n$}
\State Pretrain Client i according to $\ell_{MSE} = \mathbb{E}_{(\mathbf{x},y)\sim \mathbb{P}(\mathcal{D}_i)} \left\|o_i([q_i(\mathbf{x}), \hat{\mathbf{z}_i}])-\mathbf{x} \right \|_2^2$
\EndFor
\State Upload $\{w(s_i)\}_{i=1}^n$ to the server
\State Aggregate weights, $w(s) = \frac{1}{n} \sum_{i=1}^n w(s_i)$
\State Distribute client weight $w(s)$ to each client
\EndFor
\For{$t=1$ to $T$} 
\For{$i=1$ to $n$} \Comment{Step 2}
\For{$k=1$ to $d_1+d_2$} \Comment{Compute Gradients via CGE}
\State $\tilde{\mathbf{z}_1} = \tilde{\mathbf{z}} + \rho\mathbf{e}_j, \ell_{\mathbf{x},1} = \ell_{cls} (r_i(g( o_i(\tilde{\mathbf{z}_1},y))))$
\State $\tilde{\mathbf{z}_2} = \tilde{\mathbf{z}} - \rho\mathbf{e}_j, \ell_{\mathbf{x},2} = \ell_{cls} (r_i(g( o_i(\tilde{\mathbf{z}_2},y))))$
\State Compute differences on embeddings, $\nabla_{\tilde{\mathbf{z}}} \mathcal{G}(\tilde{\mathbf{z}}) \approx \sum_{i=1}^{d_1+d_2} \frac{\ell_{\mathbf{x},2} - \ell_{\mathbf{x},1}}{2\times \rho} \mathbf{e}_j$
\State Compute gradients, $\nabla_{q_i}\ell_1 = \frac{d \tilde{\mathbf{z}}}{d q_i}\frac{d \mathcal{G}}{d \tilde{\mathbf{z}}}\approx \frac{d \mathbf{z}}{d q_i} \nabla_{\tilde{\mathbf{z}}} \mathcal{G}(\tilde{\mathbf{z}})_1 \approx \frac{d \nabla_{\tilde{\mathbf{z}}}\mathcal{G}(\tilde{\mathbf{z}})_1  \mathbf{z}}{d q_i}$
\State Update parameters, $w(q_i^{new}) = w(q_i) - \gamma_1\times  \nabla_{q_i}\ell_1$
\EndFor
\EndFor
\State Upload $\{w(q_i^{new})\}_{i=1}^{n}$ to the server
\State Aggregate weights, $w(q) = \frac{1}{n} \sum_{i=1}^{n} w(q_i^{new})$
\State Distribute $q$ to clients

\For{$i=1$ to $n$} \Comment{Step 3}
\State Train semantic re-mapping according to $\ell_2=\mathbb{E}_{(\mathbf{x},y)\sim \mathbb{P}(\mathcal{D}_i)} \ell_{cls}(r_i(\mathcal{F}(\mathbf{x})), y)$
\EndFor
\EndFor
\end{algorithmic}
\end{algorithm}

\section{Experiments}

\subsection{Description of the Datasets}
\label{sec-app-datades}

\paragraph{Computer vision datasets.}~

\textbf{COVID-19~\citep{sait2020curated}.}
It is a public posterior-anterior chest radiography images dataset with four classes, including 1,281 COVID-19 X-Rays, 3,270 Normal X-Rays, 1,656 viral-pneumonia X-Rays, and 3,001 bacterial-pneumonia X-Rays.
We split data into 20 clients via the Dirichlet distribution following \citep{yurochkin2019bayesian} and each client has different distributions on the label space.
In each client, only $10\%$ of data are utilized for training and the rest data are split evenly into two parts for validation and testing respectively.

\textbf{APTOS~\citep{aptos2019-blindness-detection}.}
It is an image dataset that judges the severity of diabetic retinopathy on a scale of 0 to 4.
The original dataset contains 3,662 training images and 1,928 testing images but it suffers from heavily unbalanced.
We only utilize the training part in our setting and we randomly choose 400 samples for classes 0 and 2. 
Our processed dataset contains 1658 samples.
We split data into 20 clients and $20\%$ of data serve for training similar to COVID-19.

\textbf{Terra Incognita~\citep{beery2018recognition}.}
It is a common dataset that contains photographs of wild animals taken by camera traps at locations L100, L38, L43, and L46.
It totally contains 24,788 samples with 10 classes.
We randomly choose two locations, i.e. L46 and L100, to construct two benchmarks, i.e. Terra46 and Terra100.
For these two benchmarks, we split data into 20 clients and $20\%$ of data serve for training similar to COVID-19. 

\paragraph{Natural language processing datasets.}~

\textbf{SST-2~\citep{wang2019glue,socher-etal-2013-recursive}.} 
The Stanford Sentiment Treebank contains sentences from movie reviews and the labels come from human annotations of their sentiment.
It contains about 67k training samples with two classes.
We choose sentences with the number of words ranging from 20 to 50 \footnote{We split sentences into words via spaces.} and obtain 9763 samples.
We split data into 20 clients and $20\%$ of data serve for training similar to COVID-19.

\textbf{COLA~\citep{wang2019glue,warstadt2019neural}.} 
The Corpus of Linguistic Acceptability contains English acceptability judgments drawn from books and journal articles on linguistic theory and it judges a sequence of words whether it is a grammatical English sentence.
It contains 8.5k training samples with two classes.
We choose sentences with the number of words less than 30 and obtain 5700 samples in total.
For the class with more samples, we randomly choose parts to ensure balance.
We split data into 20 clients and $20\%$ of data serve for training similar to COVID-19.

\textbf{Financial-phrasebank~\citep{Malo2014GoodDO}.}
It consists of sentences from financial news categorized by sentiment.
It contains 4840 samples with three classes, including positive, neutral, and negative.
We choose sentences with the number of words less than 60 and obtain 3379 samples in total.
We split data into 10 clients and $20\%$ of data serve for training similar to COVID-19.

\textbf{Flipkart~\citep{niralivaghani_mansithummar_2023}.}
This dataset contains information on product name, product price, rate, reviews, summary and sentiment. 
It has 205,053 samples with multiple labels.
We choose sentiment analysis as our task and there can be three classes, including positive, neutral, and negative.
We choose reviews with lengths more than 30 and we randomly choose parts of classes with more samples for balance.
The processed dataset contains 3048 samples for each class.
We split data into 20 clients and $20\%$ of data serve for training similar to COVID-19.

\subsection{Data Distributions}
\label{sec-app-datadis}

\begin{figure}[!t]
    \centering
    \subfigure[COVID-19 (Train)]{
        \includegraphics[width=0.22\textwidth]{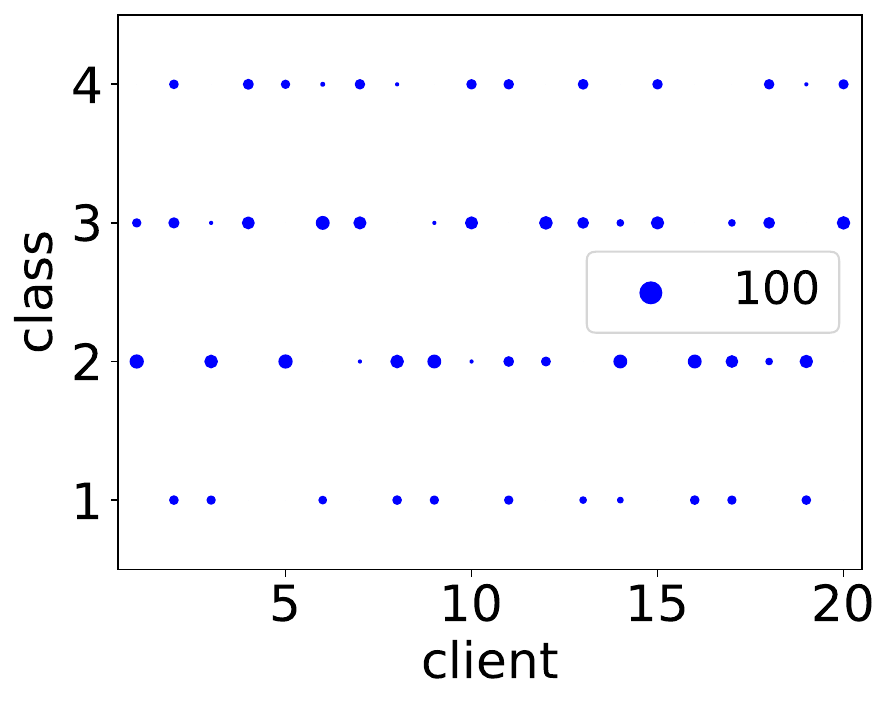}
    }
    \subfigure[COVID-19 (Test)]{
        \includegraphics[width=0.22\textwidth]{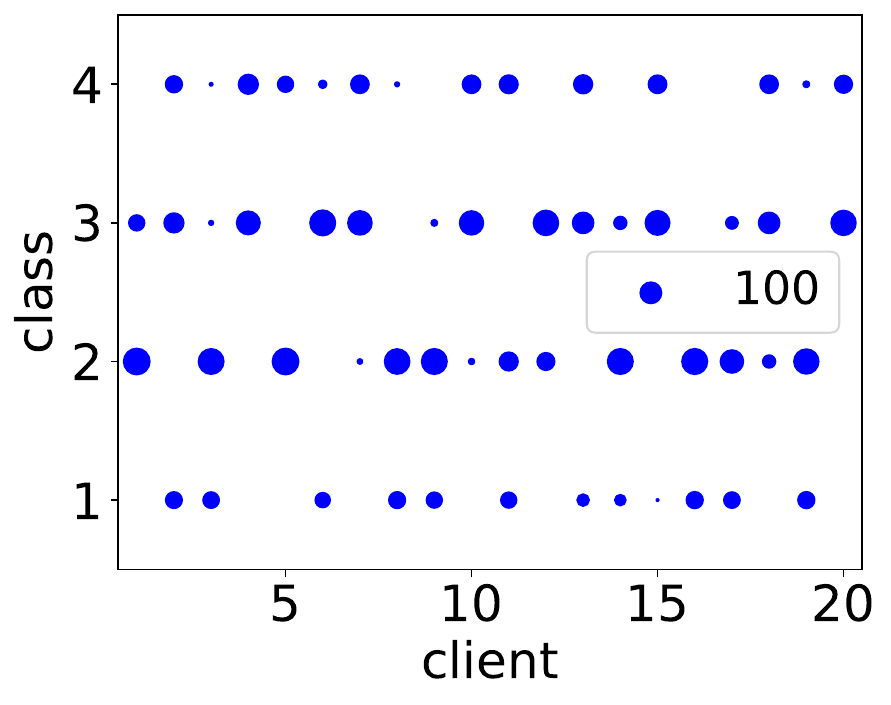}
    }
    \subfigure[APTOS (Train)]{
        \includegraphics[width=0.22\textwidth]{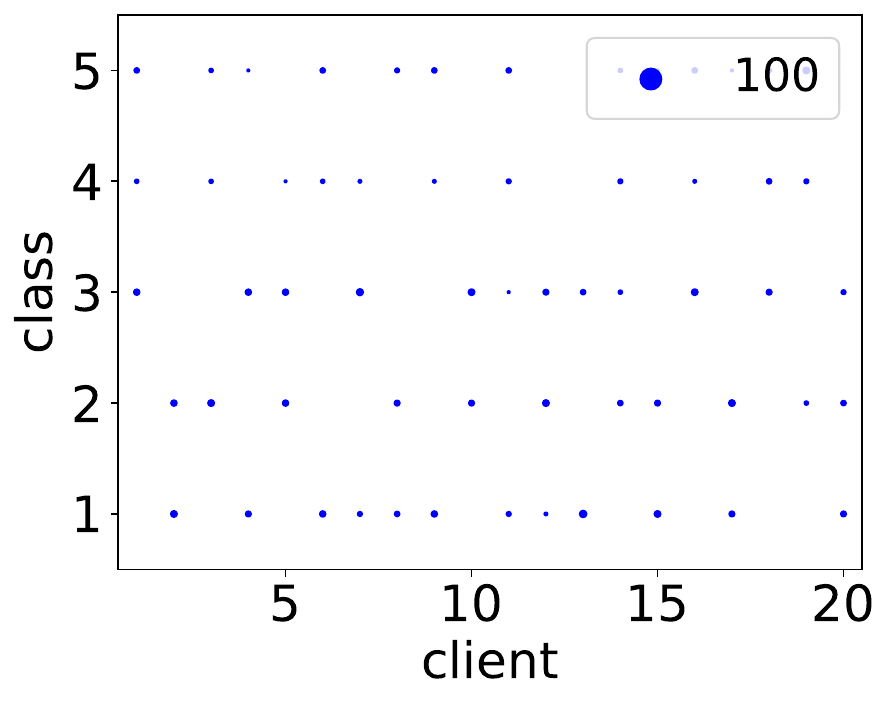}
    }
    \subfigure[APTOS (Test)]{
        \includegraphics[width=0.22\textwidth]{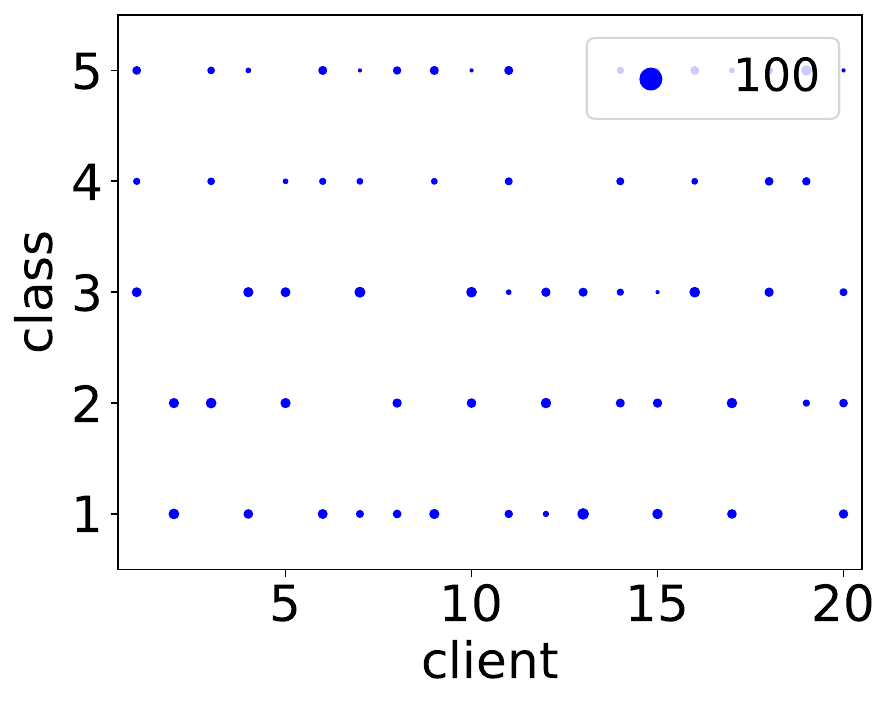}
    }
    \subfigure[Terra46 (Train)]{
        \includegraphics[width=0.22\textwidth]{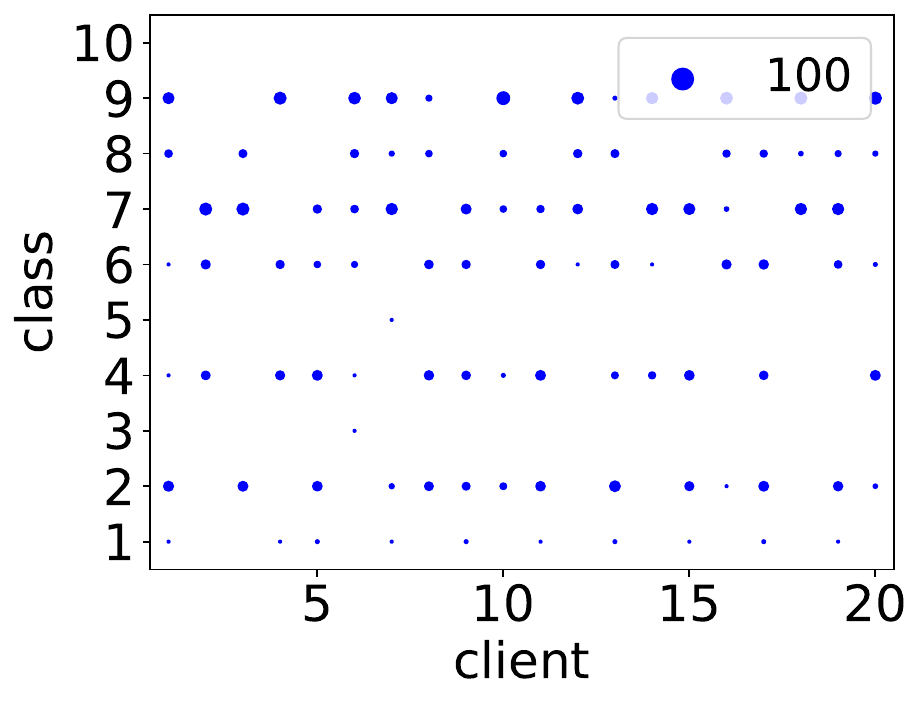}
    }
    \subfigure[Terra46 (Test)]{
        \includegraphics[width=0.22\textwidth]{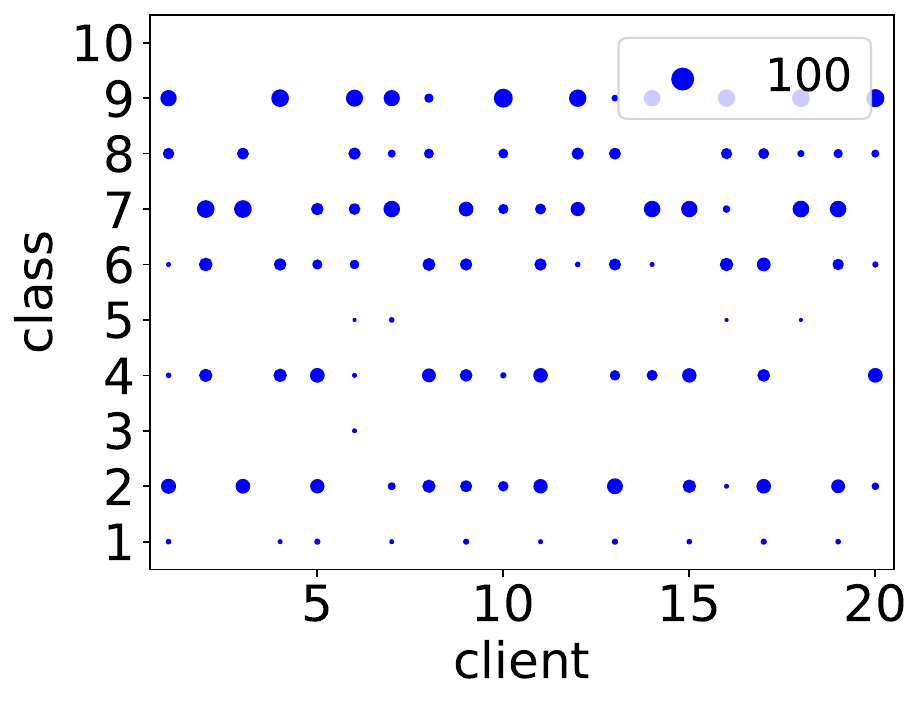}
    }
    \subfigure[Terra100 (Train)]{
        \includegraphics[width=0.22\textwidth]{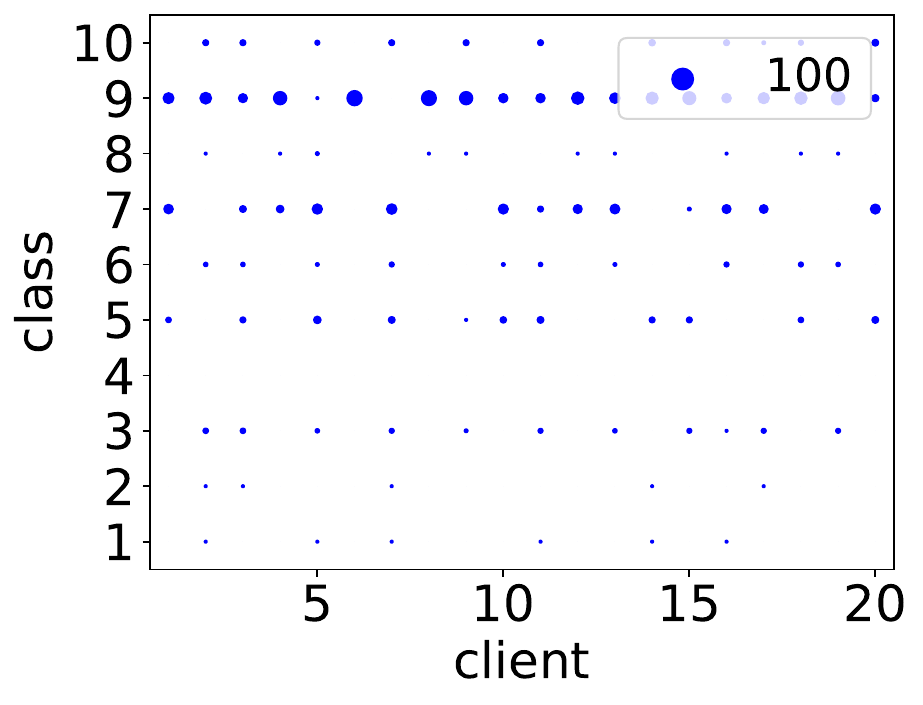}
    }
    \subfigure[Terra100 (Test)]{
        \includegraphics[width=0.22\textwidth]{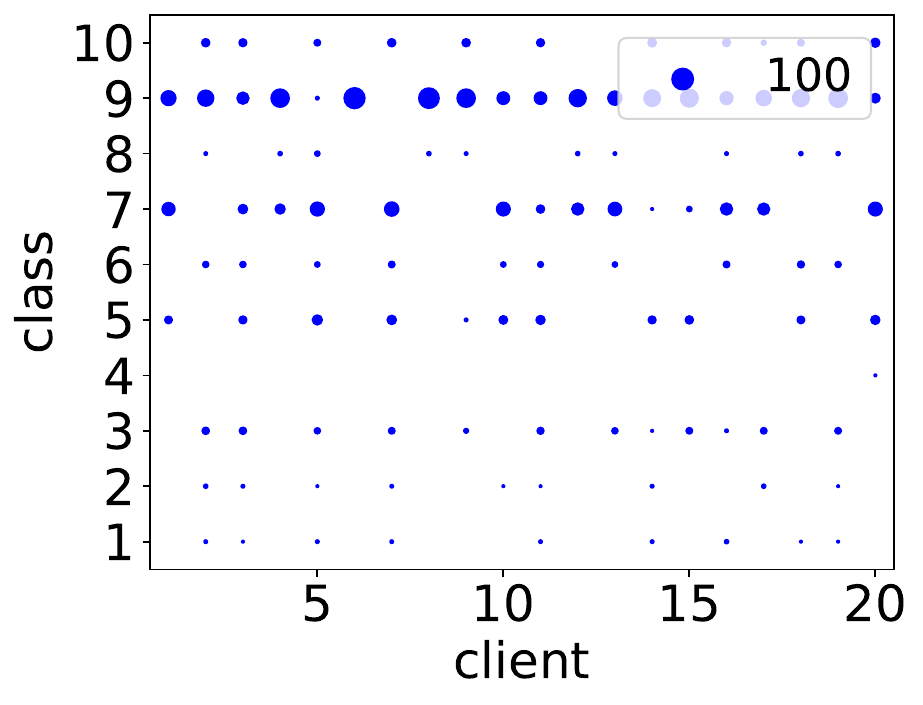}
    }
    \subfigure[SST-2 (Train)]{
        \includegraphics[width=0.22\textwidth]{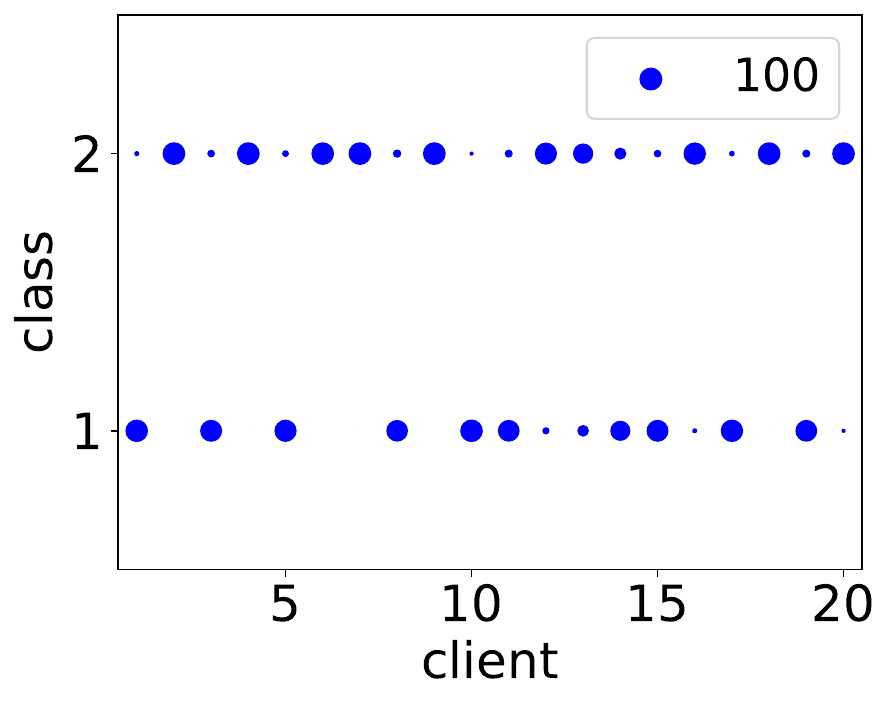}
    }
    \subfigure[SST-2 (Test)]{
        \includegraphics[width=0.22\textwidth]{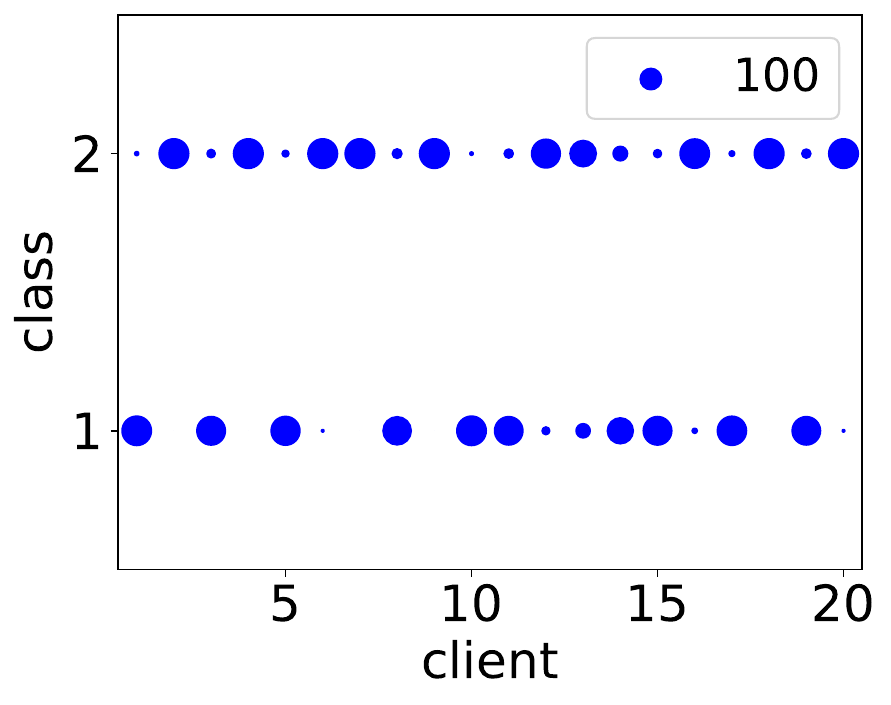}
    }
    \subfigure[COLA (Train)]{
        \includegraphics[width=0.22\textwidth]{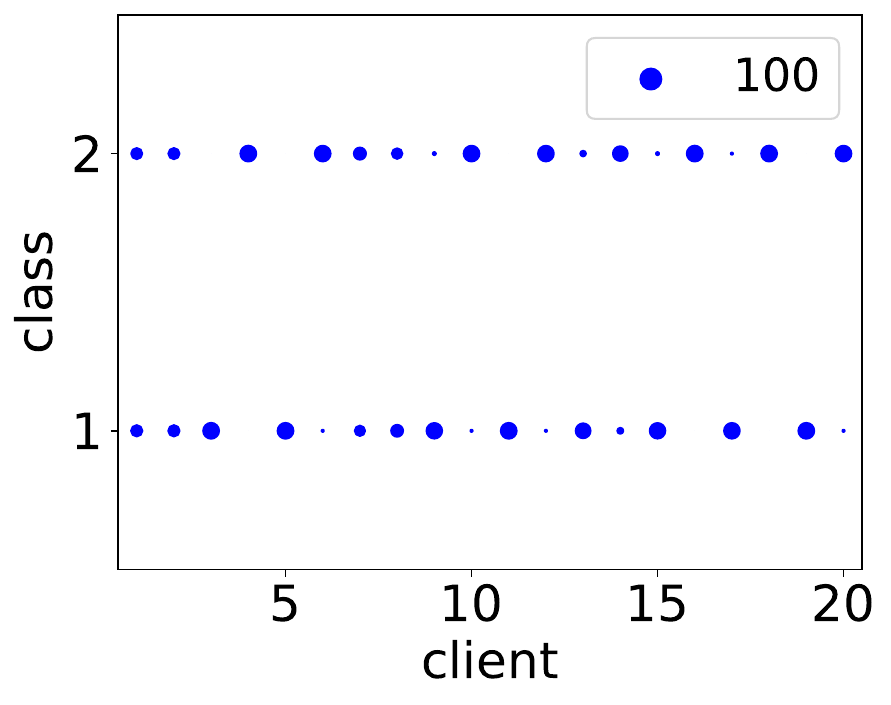}
    }
    \subfigure[COLA (Test)]{
        \includegraphics[width=0.22\textwidth]{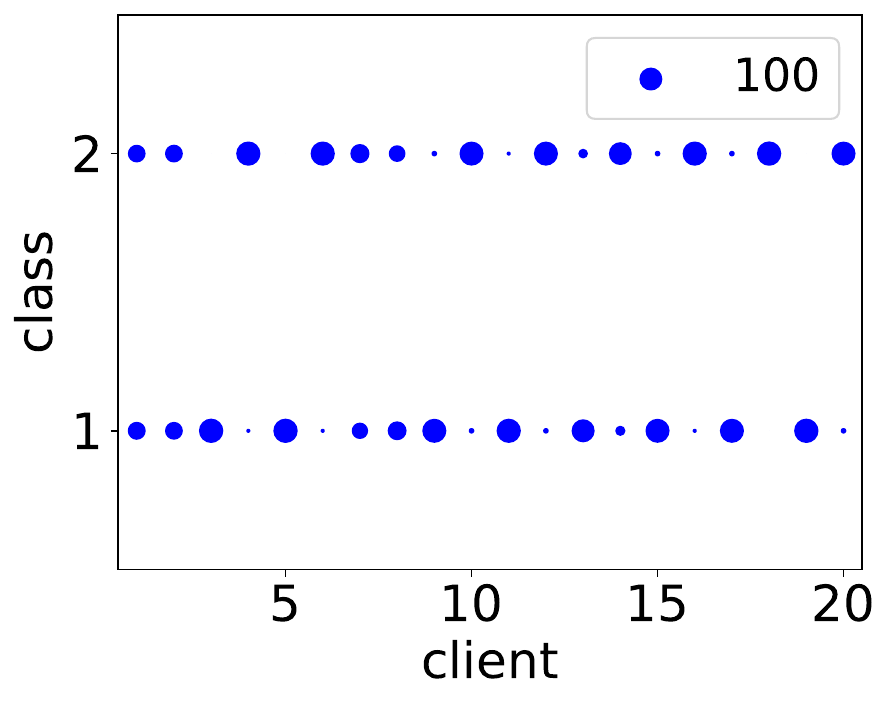}
    }
    \subfigure[Financial. (Train)]{
        \includegraphics[width=0.22\textwidth]{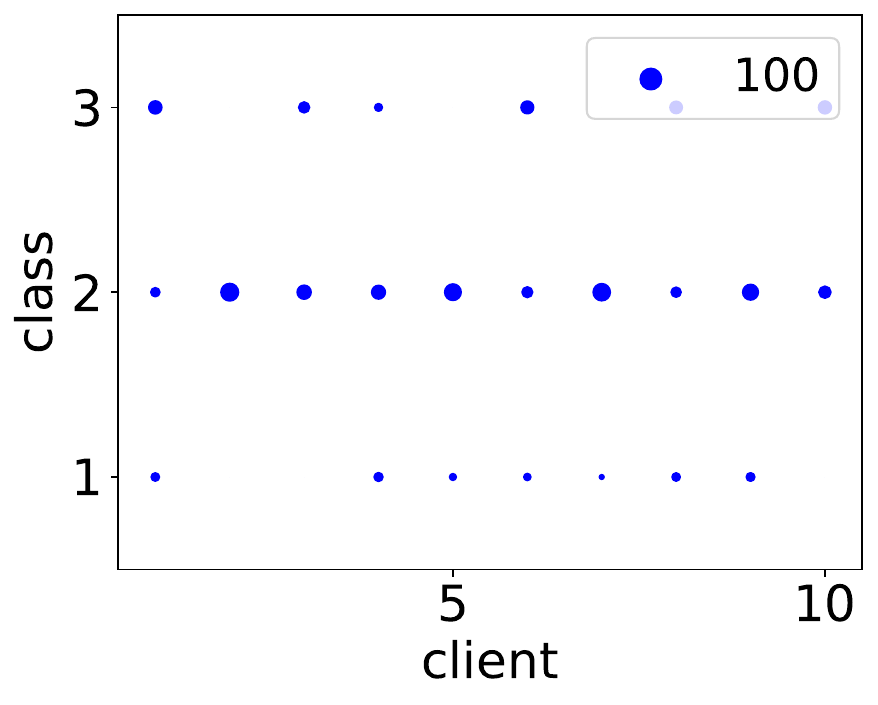}
    }
    \subfigure[Financial. (Test)]{
        \includegraphics[width=0.22\textwidth]{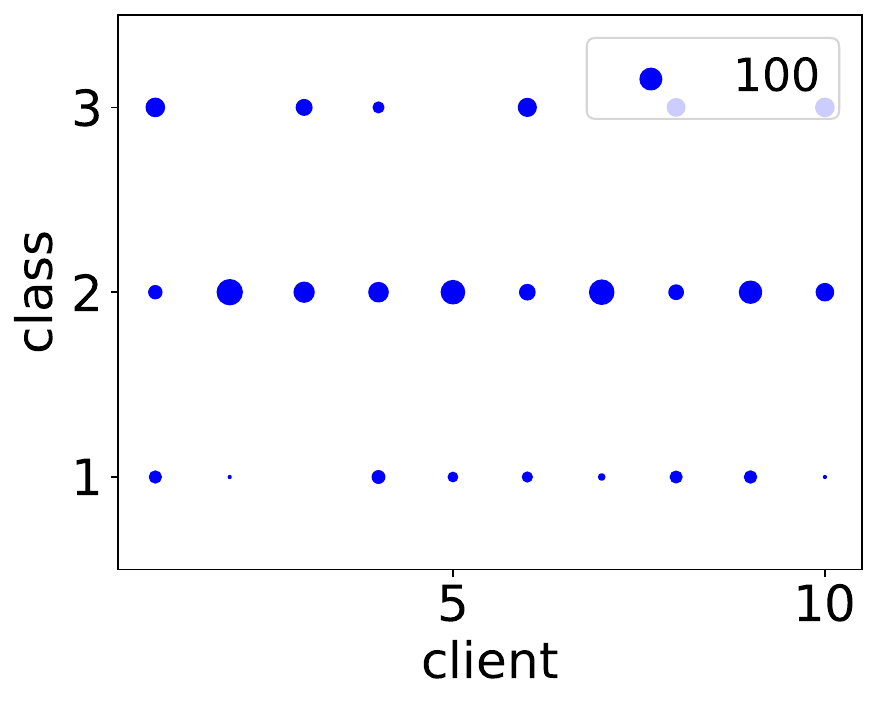}
    }
     \subfigure[Flipkart (Train)]{
        \includegraphics[width=0.22\textwidth]{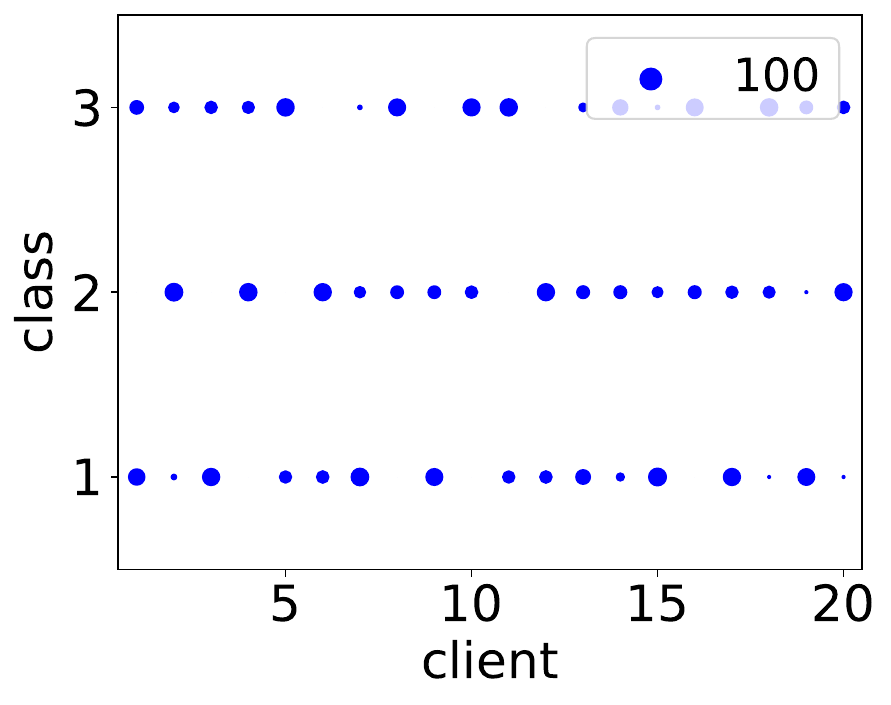}
    }
    \subfigure[Flipkart (Test)]{
        \includegraphics[width=0.22\textwidth]{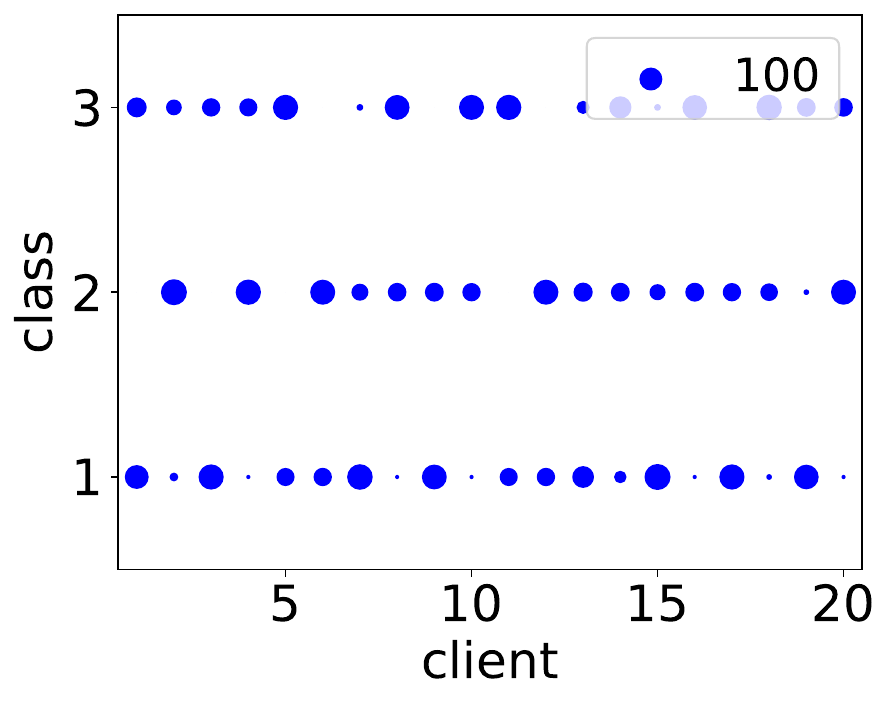}
    }
    \caption{Data distributions. The size of dots represents the number of samples.}
    \label{fig-app-datasplit}
\end{figure}

\figureautorefname~\ref{fig-app-datasplit} shows the complete data distributions on the rest benchmarks.

\subsection{Model Structures}
\label{sec-app-modelstruct}
\begin{table}[htbp]
\centering
\caption{The structures of auto-encoders.}
\label{tab-app-network}
\resizebox{\textwidth}{!}{
\begin{tabular}{lll|lll}
\toprule
\multicolumn{3}{c|}{CV}                               & \multicolumn{3}{c}{NLP(Linear)}              \\ \midrule
Layer(type)        & Output Shape        & \#|Param| & Layer(type)    & Output Shape    & \#|Param| \\
Conv2d-1           & {[}-1,32,224,224{]} & 896       & Linear-1       & {[}-1,128{]}    & 6,291,584 \\
ReLU-2             & {[}-1,32,224,224{]} & 0         & BatchNorm1d-2  & {[}-1,128{]}    & 256       \\
BatchNorm2d-3      & {[}-1,32,224,224{]} & 64        & ReLU-3         & {[}-1,128{]}    & 0         \\
MaxPool2d-4        & {[}-1,32,112,112{]} & 0         & Linear-4       & {[}-1,256{]}    & 33,024    \\
Conv2d-5           & {[}-1,64,112,112{]} & 18,496    & BatchNorm1d-5  & {[}-1,256{]}    & 512       \\
ReLU-6             & {[}-1,64,112,112{]} & 0         & ReLU-6         & {[}-1,256{]}    & 0         \\
BatchNorm2d-7      & {[}-1,64,112,112{]} & 128       & Linear-7       & {[}-1,96{]}     & 24,672    \\
MaxPool2d-8        & {[}-1,64,56,56{]}   & 0         & BatchNorm1d-8  & {[}-1,96{]}     & 192       \\
Conv2d-9           & {[}-1,128,56,56{]}  & 73,856    & ReLU-9         & {[}-1,96{]}     & 0         \\
ReLU-10            & {[}-1,128,56,56{]}  & 0         & Linear-10      & {[}-1,256{]}    & 33,024    \\
BatchNorm2d-11     & {[}-1,128,56,56{]}  & 256       & BatchNorm1d-11 & {[}-1,256{]}    & 512       \\
MaxPool2d-12       & {[}-1,128,28,28{]}  & 0         & ReLU-12        & {[}-1,256{]}    & 0         \\
Conv2d-13          & {[}-1,32,28,28{]}   & 36,896    & Linear-13      & {[}-1,128{]}    & 32,896    \\
ReLU-14            & {[}-1,32,28,28{]}   & 0         & BatchNorm1d-14 & {[}-1,128{]}    & 256       \\
BatchNorm2d-15     & {[}-1,32,28,28{]}   & 64        & ReLU-15        & {[}-1,128{]}    & 0         \\
MaxPool2d-16       & {[}-1,32,14,14{]}   & 0         & Linear-16      & {[}-1,49152{]}  & 6,340,608 \\
Conv2d-17          & {[}-1,8,14,14{]}    & 2,312     & Tanh-17        & {[}-1,49152{]}  & 0         \\ \cline{4-6}
ReLU-18            & {[}-1,8,14,14{]}    & 0         & \multicolumn{3}{c}{NLP(LSTM)}                \\ \cline{4-6}
BatchNorm2d-19     & {[}-1,8,14,14{]}    & 16        & LSTM-1         & {[}-1,64,128{]} & 459,776   \\
MaxPool2d-20       & {[}-1,8,7,7{]}      & 0         & Linear-2       & {[}-1,96{]}     & 12,384    \\
Linear-21          & {[}-1,294{]}        & 115,542   & LSTM-3         & {[}-1,64,128{]} & 132,096   \\
ReLU-22            & {[}-1,294{]}        & 0         & Linear-4       & {[}-1,64,768{]} & 99,072    \\
BatchNorm1d-23     & {[}-1,294{]}        & 588       & Tanh-5         & {[}-1,64,768{]} & 0         \\ \cline{4-6}
ConvTranspose2d-24 & {[}-1,32,14,14{]}   & 2,336     &                &                 &           \\
ReLU-25            & {[}-1,32,14,14{]}   & 0         &                &                 &           \\
BatchNorm2d-26     & {[}-1,32,14,14{]}   & 64        &                &                 &           \\
ConvTranspose2d-27 & {[}-1,128,28,28{]}  & 36,992    &                &                 &           \\
ReLU-28            & {[}-1,128,28,28{]}  & 0         &                &                 &           \\
BatchNorm2d-29     & {[}-1,128,28,28{]}  & 256       &                &                 &           \\
ConvTranspose2d-30 & {[}-1,64,56,56{]}   & 73,792    &                &                 &           \\
ReLU-31            & {[}-1,64,56,56{]}   & 0         &                &                 &           \\
BatchNorm2d-32     & {[}-1,64,56,56{]}   & 128       &                &                 &           \\
ConvTranspose2d-33 & {[}-1,32,112,112{]} & 18,464    &                &                 &           \\
ReLU-34            & {[}-1,32,112,112{]} & 0         &                &                 &           \\
BatchNorm2d-35     & {[}-1,32,112,112{]} & 64        &                &                 &           \\
ConvTranspose2d-36 & {[}-1,16,224,224{]} & 4,624     &                &                 &           \\
ReLU-37            & {[}-1,16,224,224{]} & 0         &                &                 &           \\
BatchNorm2d-38     & {[}-1,16,224,224{]} & 32        &                &                 &           \\
Conv2d-39          & {[}-1,3,224,224{]}  & 51        &                &                 &          \\ \cline{1-3}
\end{tabular}}
\end{table}

\tableautorefname~\ref{tab-app-network} shows details on auto-encoders.
Please note that the encoder accounts for approximately half of the parameter count.

\subsection{Additional Results}
\label{sec-app-moreresults}

\begin{table}[htbp]
\centering
\caption{Results on four computer vision benchmarks. \textbf{Bold} means the best.}
\label{tab-main-img}
\resizebox{\textwidth}{!}{
\begin{tabular}{cllllllllllllllllllllll}
\toprule
\multicolumn{1}{l}{DataSet} & Method & 1     & 2     & 3     & 4     & 5     & 6      & 7     & 8     & 9     & 10    & 11    & 12    & 13    & 14    & 15    & 16    & 17    & 18    & 19    & 20    & AVG            \\ \midrule
\multirow{2}{*}{COVID-19}      & Zs     & 18.36 & 33.66 & 1.94  & 48.06 & 3.38  & 47.32  & 53.17 & 0.00  & 3.40  & 51.71 & 3.90  & 43.69 & 40.78 & 13.17 & 54.37 & 0.00  & 11.65 & 36.59 & 0.97  & 58.05 & 26.21          \\
                            & Ours   & 76.81 & 42.44 & 82.04 & 59.22 & 82.61 & 70.24  & 62.44 & 79.90 & 82.04 & 55.12 & 58.54 & 80.58 & 37.86 & 71.22 & 64.08 & 82.52 & 72.33 & 49.27 & 87.38 & 66.83 & \textbf{68.17} \\ \midrule
\multirow{2}{*}{APTOS}      & Zs     & 0.00  & 53.12 & 0.00  & 41.94 & 0.00  & 45.16  & 25.00 & 32.26 & 48.39 & 0.00  & 29.03 & 12.50 & 65.62 & 0.00  & 51.61 & 0.00  & 40.62 & 0.00  & 0.00  & 38.71 & 24.20          \\
                            & Ours   & 43.33 & 59.38 & 54.84 & 54.84 & 50.00 & 41.94  & 56.25 & 38.71 & 51.61 & 48.39 & 41.94 & 53.12 & 65.62 & 32.26 & 48.39 & 53.12 & 50.00 & 36.67 & 48.39 & 51.61 & \textbf{49.02} \\ \midrule
\multirow{2}{*}{Terra46}    & Zs     & 21.93 & 21.55 & 25.22 & 12.39 & 34.48 & 6.09   & 13.04 & 25.44 & 33.33 & 18.42 & 46.09 & 6.96  & 35.96 & 16.38 & 41.74 & 5.31  & 25.00 & 7.96  & 21.74 & 21.74 & 22.04          \\
                            & Ours   & 48.25 & 62.07 & 56.52 & 70.80 & 43.97 & 51.30  & 45.22 & 28.95 & 47.37 & 62.28 & 41.74 & 59.13 & 34.21 & 60.34 & 60.00 & 59.29 & 35.34 & 67.26 & 50.43 & 76.52 & \textbf{53.05} \\ \midrule
\multirow{2}{*}{Terra100}   & Zs     & 14.13 & 9.89  & 7.69  & 9.57  & 5.56  & 6.38   & 9.78  & 6.45  & 6.52  & 16.48 & 4.40  & 11.70 & 11.83 & 7.78  & 12.90 & 7.69  & 11.96 & 7.61  & 9.78  & 12.22 & 9.52           \\
                            & Ours   & 63.04 & 73.63 & 56.04 & 75.53 & 65.56 & 100.00 & 59.78 & 96.77 & 90.22 & 57.14 & 51.65 & 64.89 & 48.39 & 80.00 & 86.02 & 49.45 & 52.17 & 68.48 & 78.26 & 60.00 & \textbf{68.85} \\ \bottomrule
\end{tabular}
}
\end{table}
\begin{table}[htbp]
\centering
\caption{Average results on NLP. \textbf{Bold} means the best.}
\label{tab-app-nlp-all}
\resizebox{0.6\textwidth}{!}{
\begin{tabular}{cllllll}
\toprule
\multicolumn{1}{l}{Benchmark} & Method & ALBERT & BERT  & DeBERTa & GPT2  & AVG            \\
\midrule
\multirow{2}{*}{SST-2}        & ZS     & 47.84  & 52.16 & 52.14   & 52.14 & 51.07          \\
                              & Ours   & 94.70  & 94.72 & 94.70   & 94.70 & \textbf{94.71} \\
                              \midrule
\multirow{2}{*}{COLA}         & ZS     & 50.22  & 50.22 & 49.87   & 48.82 & 49.78          \\
                              & Ours   & 89.34  & 88.73 & 88.16   & 88.46 & \textbf{88.67} \\
                              \midrule
\multirow{2}{*}{Financial}    & ZS     & 60.76  & 54.82 & 62.11   & 41.26 & 54.74          \\
                              & Ours   & 68.91  & 68.54 & 68.69   & 71.79 & \textbf{69.48} \\
                              \midrule
\multirow{2}{*}{Flipkart}     & ZS     & 32.26  & 29.43 & 33.27   & 34.16 & 32.28          \\
                              & Ours   & 64.85  & 64.50 & 66.32   & 65.50 & \textbf{65.29} \\
                              \bottomrule
\end{tabular}}
\end{table}
\begin{table}[htbp]
\centering
\caption{Results on Financial-phrasebank. \textbf{Bold} means the best.}
\label{tab-main-nlp}
\resizebox{0.9\textwidth}{!}{
\begin{tabular}{cllllllllllll}
\toprule
\multicolumn{1}{l}{Backbones} & Client & 1     & 2     & 3     & 4     & 5     & 6     & 7     & 8     & 9     & 10    & AVG            \\ \midrule
\multirow{2}{*}{ALBERT}       & ZS     & 31.85 & 91.18 & 64.71 & 58.21 & 78.36 & 42.96 & 85.82 & 38.24 & 67.41 & 48.89 & 60.76          \\
                              & Ours   & 52.59 & 99.26 & 62.50 & 58.96 & 87.31 & 52.59 & 94.78 & 50.00 & 77.78 & 53.33 & \textbf{68.91} \\ \midrule
\multirow{2}{*}{BERT}         & ZS     & 42.22 & 70.59 & 58.82 & 49.25 & 57.46 & 47.41 & 67.16 & 43.38 & 58.52 & 53.33 & 54.82          \\
                              & Ours   & 52.59 & 99.26 & 62.50 & 58.96 & 87.31 & 50.37 & 94.78 & 44.12 & 77.78 & 57.78 & \textbf{68.54} \\ \midrule
\multirow{2}{*}{DeBERTa}      & ZS     & 25.93 & 99.26 & 62.50 & 58.96 & 87.31 & 36.30 & 94.78 & 31.62 & 77.78 & 46.67 & 62.11          \\
                              & Ours   & 52.59 & 99.26 & 66.18 & 58.96 & 87.31 & 42.96 & 94.78 & 46.32 & 77.78 & 60.74 & \textbf{68.69} \\ \midrule
\multirow{2}{*}{GPT2}         & ZS     & 41.48 & 41.18 & 49.26 & 37.31 & 43.28 & 37.04 & 48.51 & 33.82 & 42.96 & 37.78 & 41.26          \\
                              & Ours   & 52.59 & 99.26 & 75.00 & 61.94 & 87.31 & 52.59 & 91.04 & 50.74 & 81.48 & 65.93 & \textbf{71.79} \\ \bottomrule
\end{tabular}}
\end{table}

\tableautorefname~\ref{tab-main-img}, \tableautorefname~\ref{tab-app-nlp-all}, and \tableautorefname~\ref{tab-main-nlp} show more detailed results on computer vision and natural language processing.

\end{document}